\documentclass{article}

% page layout
\setlength{\parindent}{0pt} 
\setlength{\columnsep}{.5cm}
\usepackage[a4paper, total={6in, 9in}]{geometry}

% packages for creating document
\usepackage{bm}
\usepackage{url} 
\usepackage{dsfont}
\usepackage{xcolor}
\usepackage{amsthm}
\usepackage{amsmath}
\usepackage{amssymb}
\usepackage{multicol}
\usepackage{mathtools}

% for algorithms 
\usepackage{algorithm}
\usepackage{algorithmic}

% for tables
\usepackage{adjustbox}
\usepackage{graphicx}
\usepackage{booktabs}

% referencing
\usepackage[round]{natbib}

% for appendix
\usepackage[toc, page, titletoc]{appendix}

% commands for stating theorem-type things
\newtheorem{lemma}{Lemma}

\newtheorem{theorem}{Theorem}

\newtheorem{assumption}{Assumption}

\newenvironment{tproof}{\proof}{\endproof}

% for re-statable theorems
\usepackage{thm-restate}

% useful math commands

\DeclareMathOperator*{\argmin}{arg\,min}
\DeclarePairedDelimiter{\abs}{\lvert}{\rvert}
\DeclarePairedDelimiter{\norm}{\lVert}{\rVert}

% change fonts

% create the title
\title{Optimism and Delays in Episodic Reinforcement Learning}

\author{Benjamin Howson \and Ciara Pike-Burke \and Sarah Filippi}
\date{
    Department of Mathematics, Imperial College London\\[2ex]
    \today
}

\allowdisplaybreaks
\begin{document}
\maketitle
\begin{abstract}
    There are many algorithms for regret minimisation in episodic reinforcement learning. This problem is well-understood from a theoretical perspective, providing that the sequences of states, actions and rewards associated with each episode are available to the algorithm updating the policy immediately after every interaction with the environment. However, feedback is almost always delayed in practice. In this paper, we study the impact of delayed feedback in episodic reinforcement learning from a theoretical perspective and propose two general-purpose approaches to handling the delays. The first involves updating as soon as new information becomes available, whereas the second waits before using newly observed information to update the policy. For the class of optimistic algorithms and either approach, we show that the regret increases by an additive term involving the number of states, actions, episode length, the expected delay and an algorithm-dependent constant. We empirically investigate the impact of various delay distributions on the regret of optimistic algorithms to validate our theoretical results.  
\end{abstract}

\section{Introduction}
Episodic Reinforcement Learning (RL) considers the problem of an agent learning how to act in an unknown environment to maximise its cumulative reward. The problem formulation is broad enough to capture the nature of sequential decision-making in many real-world scenarios as it permits complex dependencies between actions, rewards and future environmental states. Despite the complexity of the learning problem, there are many provably efficient algorithms for this problem setting \citep{UCRL2, KLUCRL, UCRL2B, UCBVI, UBEV}.\\ 

These existing algorithms focus on the traditional model where one assumes that the algorithm updating the policy observes the sequence of states, actions and rewards at the end of every episode. Unfortunately, this immediate feedback assumption is unrealistic in almost all practical applications. In healthcare, for example, feedback relating to a patient on a particular treatment protocol is not observable to the policy maker until they return to the clinic at a scheduled time point in the future. In e-commerce, one observes a conversion at some unknown time long after a sequence of recommendations. Yet another example is wearable technology. Here, the heavy computation involved in policy updating must occur on a separate machine, forcing the communication of information, which naturally introduces a delay between the agent collecting feedback and the policy updater. In any of these scenarios, the algorithm must continue operating, despite lacking information from its past choices.\\ 

The above examples illustrate that delayed feedback is a fundamental challenge in real world reinforcement learning. Unfortunately, there is little theoretical understanding of the impact of delays in episodic reinforcement learning in the existing literature. We seek to fill this gap in the literature in this paper.

\subsection{Related Work}
Recently, the topic of delays has attracted a lot of attention in the bandit setting \citep{AA2011,  MD2011, PJ2013, Mandel2015, CV2017, Pike-Burke2018, Zhou2019, AM2020, CV2020}. Here, the feedback is the reward associated with the chosen action in each round. Perhaps the most appealing approach in the multi-armed bandit setting is the queuing technique, which shows that the delays cause an additive penalty involving the expected delay for any base algorithm \citep{PJ2013, Mandel2015}. The high-level idea is to build a meta-algorithm that creates a simulated non-delayed environment for any base algorithm designed for immediate feedback, such as UCB1 or KL-UCB. They achieve this by introducing a mechanism that stores the rewards for each action in separate queues and having the base algorithm interact with these rather than the actual environment. Unfortunately, the queuing technique does not readily extend to the delayed feedback setting in RL, as forming the queues would require knowledge of the state and action seen in each step of an episode; this information is delayed in our setting.\\

\cite{PJ2013} present another meta-algorithm for adversarial multi-armed bandits with delayed rewards that is trivial to adapt to our setting. They propose creating a new instance of the chosen base algorithm whenever there is no feedback, allowing one to bound the regret of each instance separately using standard techniques. More precisely, this involves maintaining $\tau_{\max} + 1$ versions of the algorithm, where $\tau \leq \tau_{\max}$ almost surely \citep{PJ2013}. Thus, the regret of taking this approach is multiplicative, as the maximal delay scales the regret of the base algorithm. \\

Previous work in RL has considered constant delays in observing the current state in Markov Decision Processes (MDPs) \citep{KVK2003}. More recent work considers delayed feedback in adversarial MDPs \citep{DAMDP}. They developed an algorithm that computes stochastic policies based on policy optimisation. The regret of this algorithm depends on the sum of the delays, the number of states and the number of steps per episode. For stochastic MDPs, they state a regret bound of the form $H^{3/2}S \sqrt{AT} + H^2 S\tau_{\max}$, where $H$ is the number of decisions the learner must make per episode, $T = KH$ is the total number of decisions made across all $K$ episodes, $S$ is the number of states in the environment, $A$ is the number of actions and $\tau_{k} \leq \tau_{\max}$. However, the leading order term in their regret bound is loose for many base algorithms. Their approach also requires \textit{a-priori} knowledge of the maximal delay to define a phase of explicit exploration; this quantity is often unknown in many practical applications. Further, the base algorithm accrues linear regret in this exploration phase, and the maximal delay can be prohibitively large. We propose two approaches that avoid such prior knowledge and can leverage new information in the early episodes much faster, leading to tighter algorithm-specific theoretical results and better empirical performance. In addition to the improved theoretical results, we relax the assumption that the delay distribution has a finite and known maximum, and instead only require that the delays have a finite expectation that we assume is unknown.

\subsection{Contributions}
The delayed feedback model studied in this paper poses several theoretical challenges that do not arise in the standard episodic reinforcement learning problem, such as delayed updates and disentangling the delays from the difficulty of the learning problem in the theoretical analysis.\\

We introduce two novel meta-algorithms to overcome these challenges, namely \textit{active} and \textit{lazy} updating. Both take any algorithm as input and transform it into an algorithm that can handle delayed feedback. Henceforth, we refer to the input algorithm as the \textit{base algorithm}. Using these meta-algorithms, we obtain high probability regret bounds for any optimistic model-based base algorithm in the delayed feedback setting. For both active and lazy updating, the penalty for delayed feedback is an additive term involving the expected delay. Although they obtain similar theoretical results, active and lazy updating employ different algorithmic ideas to separate the delays from the learning problem in the theoretical analysis. \\

The active updating meta-algorithm uses the base algorithm to update the policy as soon as it observes feedback from the environment. Deriving theoretical guarantees for active updating involves tackling the delays head-on, as the delays force the policy to remain constant across numerous episodes. Consequently, the learner can repeatedly make sub-optimal decisions. To quantify the impact of delayed feedback, we introduce several techniques that carefully separate the difficulty of the learning problem from the delays.  \\

The lazy meta-algorithm works slightly differently. Instead of updating immediately, it waits for the amount of feedback to surpass some threshold before updating the policy. One can control this threshold, and therefore the frequency of policy updates, through a hyperparameter $\alpha$. By waiting to update, lazy creates a simulated non-delayed version of the environment for the input algorithm, allowing us to handle the delays separately from the difficulty of the learning problem. \\

\section{Preliminaries}
We consider the task of learning to act optimally in an unknown episodic finite-horizon Markov Decision Process, EFH-MDP. An EFH-MDP is formalised as a quintuple: $M = \left(\mathcal{S}, \mathcal{A}, H, P, R\right)$. Here, $\mathcal{S}$ is the set of states, $\mathcal{A}$ is the set of actions, $H$ is the horizon and gives the number of steps per episode, $P = \{P_{h}(\cdot \vert s, a)\}_{h, s, a}$ is the set of probability distributions over the next state and $R = \{R_{h}(s, a)\}_{h, s, a}$ is the set of reward functions. For conciseness, we assume that the reward function is known, deterministic and bounded between zero and one for all state-action-step triples.\footnote{The main challenge in model-based reinforcement learning lies in estimating the transition function. Thus, an extension to unknown bounded stochastic rewards is relatively straightforward.}\\

In the episodic reinforcement learning problem, the base algorithm interacts with an MDP in a sequence of episodes: $k = 1, 2, \dots, K$. We denote the set of episodes by: $[K] = \{1, 2, \dots K\}$; a convention that we adopt for sets of integers. In this paper, we consider base algorithms that compute a deterministic policy $\pi_k: \mathcal{S}\times [H]\rightarrow \mathcal{A}$ at the start of each episode $k\in[K]$. It is known that in finite horizon stochastic MDPs, if an optimal policy exists, there is a deterministic optimal policy \citep{MP1994}. \\

Once the base algorithm has computed a policy, an agent uses said policy to sample feedback from the environment by: selecting an action, $a_{h}^{k} = \pi_{k}(s_{h}^{k}, h)$; receiving a reward, $r_{h}^{k} = R_{h}(s_{h}^{k}, a_{h}^{k})$; and transitioning to the next state, $s_{h + 1}^{k} \sim P_{h}(\cdot \vert s_{h}^{k}, a_{h}^{k})$; for each $h = 1, \cdots ,H$. The feedback associated with the $h$-th step of the $k$-th episode is given by: 
\begin{equation}
    \mathcal{D}_{h}^{k} \coloneqq \{(s_{h}^{k}, a_{h}^{k}, r_{h}^{k}, s_{h + 1}^{k})\}\;.
\end{equation}
We measure the quality of a policy, $\pi$, using the value function, which is the expected return at the end of the episode from the current step, given the current state: 
\begin{equation}\label{eqn: value function}
    V^{\pi}_{h}\left(s\right) = \mathbb{E}_{\pi}\left[\,\sum_{h' = h}^{H} r_{h'}^{k} \Big\vert s_{h'}^{k} = s\right]\;.
\end{equation}
Further, we denote the optimal value function by: $V^{*}_{h}(s) = \max_{\pi} \{V^{\pi}_{h}(s)\}$, which gives the maximum expected return over deterministic policies $\forall (s, h)\in\mathcal{S}\times [H]$. When evaluating reinforcement learning algorithms, it is common to use regret: 
\begin{equation}\label{eqn: regret}
    \mathfrak{R}_{K} = \sum_{k = 1}^{K}V^{*}_{1}\left(s_{1}^{k}\right) - V^{\pi_{k}}_{1}\left(s_{1}^{k}\right) \coloneqq \sum_{k = 1}^{K}\Delta_{1}^{k}\;.
\end{equation}
Throughout, $T = KH$ denotes the total number of steps. \citet{TDMDPLB} show that the lower bound for the regret in the standard episodic reinforcement learning setting with stage-dependent transitions is: $\Omega(H\sqrt{SAT})$.

\subsection{Regret Minimisation in Model-Based RL}\label{sec: regret minimisation}
Many provably efficient algorithms exist for learning in EFH-MDPs when feedback is immediate. In this paper, we focus on the large class of optimistic model-based reinforcement learning algorithms. These algorithms maintain estimators of the transition probabilities for each $(s, a, s')\in\mathcal{S}\times\mathcal{A}\times\mathcal{S}$:
\begin{align*}
    \hat{P}_{kh}\left(s'\vert s, a\right)
    = \frac{\sum_{i: i < k}\mathds{1}\left\{s_{h + 1}^{i} = s' \,\vert\, \left(s_{h}^{i}, a_{h}^{i}\right) = \left(s, a\right)\right\}}{N_{kh}\left(s, a\right)}
\end{align*}
where 
\begin{align*}
    N_{kh}(s, a) = \max\left\{1, \sum_{i: i < k}\mathds{1}\left\{\left(s_{h}^{i} = s, a_{h}^{i} = a\right)\right\}\right\}
\end{align*}
is the total visitation count.\\

There are two main ways of ensuring optimism using model-based algorithms. The first is the model-optimistic approach, which maintains a confidence set around $\hat{P}_{kh}$ that contains $P_h$ with high probability \citep{UCRL2, KLUCRL, UCRL2B}. The second is the value-optimistic approach, which involves directly upper bounding the optimal value function with high probability by adding a bonus to the value function of a policy under the estimated transition density $\hat{P}_{kh}$ \citep{UBEV, UCBVI}. Recent work has shown that all model-based optimistic algorithms have a value-optimistic representation, meaning they all compute a value function of the following form \citep{PB2020}: 
\begin{align}
    \tilde{V}_{h}^{\pi} = \left(H' + 1\right)\land \left(R_{h} + \big\langle \hat{P}_{kh},  \tilde{V}_{h + 1}^{\pi}\big\rangle + \beta_{kh}^{+}\right)\label{eqn: optimistic value function}
\end{align}
where $H' = H - h$ and
\begin{align}
    \beta_{kh}^{+}\left(s, a\right) &= H' \land 
    \left(\frac{B_{1}}{\sqrt{N_{kh}\left(s, a\right)}} + \frac{B_{2}}{N_{kh}\left(s, a\right)}\right)\notag\\
    &= H' \land \beta_{kh}\left(s, a\right) \label{eqn: exploration bonus}
\end{align}
is the exploration bonus and $x\land y = \min\{x, y\}$. Here, $B_{1}$ and $B_{2}$ are algorithm-dependent quantities which may depend on $S,A,H, \log(T)$ or the empirical variance of the optimistic value function. A suitably chosen exploration bonus ensures the computed value function is optimistic with high probability. For our theoretical results to hold, we require the following assumption on the base algorithm.
\begin{assumption}\label{assumption: estimation error bonus}
The exploration bonus upper bounds the estimation error with high probability. Mathematically: $\beta_{kh}^{+}(s, a) \geq \langle (\hat{P}_{kh} - P_{h})(\cdot\vert s, a), V^{*}_{h + 1}(\cdot)\rangle$ for all time-steps, with probability $1 - \delta$.
\end{assumption}

All value-optimistic algorithms explicitly use the estimation error to derive suitable bonuses. Further, model-optimistic algorithms compute bonuses satisfying this assumption implicitly \citep{PB2020}. Therefore, Assumption \ref{assumption: estimation error bonus} allows us to capture a wide range of model-based algorithms.\\

For our analysis, it will be helpful to define an algorithm-dependent variable $C$, which indicates whether the algorithm's bonuses satisfy the following inequality:
\begin{equation}\label{eqn: C}
    \beta_{kh}^{+}(s, a)) < \Big\langle \left(\hat{P}_{kh} - P_{h}\right)\left(\cdot \,\vert s, a \right), \tilde{V}_{h + 1}^{\pi_{k}}(\cdot) \Big\rangle
\end{equation}

for all $s, a, h, k$ with probability $1 - \delta$. Intuitively, $C = 1$ corresponds to a bonuses that sits somewhere between the estimation error and the difference between the expectation of the optimistic value function under the estimated and true transition function. Since these bonuses must sit within a specific (potentially narrow) interval, they are tighter. However, as we will see later, such bonuses come at the expense of lower-order terms. UBEV and UCBVI are algorithms where $C = 1$. Whereas UCRL2, UCRL2B, KL-UCRL and $\chi^2$-UCRL are algorithms with $C = 0$.

\section{Delayed Feedback}\label{sec: delayed feedback}
Under stochastic delays, the feedback from an episode does not return to the base algorithm immediately after the interaction. Instead, it returns at some unknown time in  the future, $k + \tau_{k}$. Here, $\tau_{k}$ denotes the random delay between the agent playing the $k^\text{th}$ episode and the base algorithm receiving the corresponding feedback. Throughout this paper, we make the following assumption about the delays:
\begin{assumption}\label{assumption: delays}
The delays are positive, independent and identically distributed random variables with a finite expected value, $\mathbb{E}[\tau_{k}] < \infty$.
\end{assumption}

The introduction of delays causes the feedback associated with an episode to return at some unknown time in the future, $k + \tau_{k}$. As a result, the base algorithm cannot update its policy using feedback from episode $k$ at the start of episode $k + 1$. Instead, it can only use feedback it has observed, e.g. the feedback associated with episodes $i: i + \tau_{i} < k + 1$.\\

When working with delayed feedback in RL, it is helpful to introduce the observed and missing visitation counters: 
\begin{align}
    N_{kh}'\left(s, a\right) &= \sum_{i: i + \tau_{i} < k}\mathds{1}\left\{\left(s_{h}^{i}, a_{h}^{i}\right) = \left(s, a\right)\right\}\label{eqn: observed visitation counter}\\
    N_{kh}''\left(s, a\right) &= \sum_{i: i + \tau_{i} \geq k}\mathds{1}\left\{\left(s_{h}^{i}, a_{h}^{i}\right) = \left(s, a\right)\right\}.\label{eqn: missing visitation counter}
\end{align}
These are related to the total visitation counter by\begin{equation}
    N_{kh}\left(s, a\right) = N_{kh}'\left(s, a\right) + N_{kh}''\left(s, a\right)\label{eqn: visitation counter relationships}\;.
\end{equation}
When the feedback is delayed, optimistic algorithms can only compute their bonuses and any required estimators using the observed visitation counter. The corresponding value functions are still optimistic, but they contract to the optimal value function more slowly since $N_{kh}(s, a) \geq N_{kh}'(s, a)$.

\subsection{Bounding the Missing Episodes}
In our analysis, it is helpful to bound the number of missing episodes to get an upper bound on the amount of information missing for each state-action-step. This is done in the following lemma.

\begin{restatable}{lemma}{delays}
\label{lemma: delay failure event}
Let $S_{k} = \sum_{i = 1}^{k - 1}\mathds{1}\{i + \tau_{i}\geq k\}$, where $\tau_{1}, \tau_{2}, \cdots \tau_{k - 1} \sim f_{\tau}(\cdot)$ are independent and identically distributed random variables with finite expected value. We define
 \begin{equation*}
    F_{k}^{\tau} = \left\{S_{k} \geq \mathbb{E}\left[\tau\right] +  \log\left(\frac{K\pi}{6\delta'}\right) + \sqrt{2\mathbb{E}\left[\tau\right]\log\left(\frac{K\pi}{6\delta'}\right)}\right\}
 \end{equation*}
to be the failure event for a single $k$. Then, $\mathbb{P}(F_{\tau}) = \mathbb{P}(\cup_{k = 1}^{\infty} F_{k}^{\tau}) \leq \delta'$.
\end{restatable}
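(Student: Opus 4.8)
The plan is to recognise $S_k$ as a sum of independent Bernoulli random variables, control its mean and variance by $\mathbb{E}[\tau]$, apply a Bernstein-type tail bound for each fixed $k$, and finally merge the per-episode failure events through a union bound whose weights are summable over \emph{all} episodes.

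First I would write $S_k = \sum_{i=1}^{k-1} X_i$ with $X_i = \mathds{1}\{i + \tau_i \geq k\} = \mathds{1}\{\tau_i \geq k - i\}$. Since the delays $\tau_i$ are independent, the $X_i$ are independent Bernoulli variables with parameter $p_i = \mathbb{P}(\tau \geq k - i)$. Reindexing by $j = k - i$ and using the tail-sum comparison $\mathbb{E}[\tau] = \int_0^\infty \mathbb{P}(\tau \geq t)\,dt \geq \sum_{j \geq 1}\mathbb{P}(\tau \geq j)$, valid for a non-negative random variable, gives $\mathbb{E}[S_k] = \sum_{j=1}^{k-1}\mathbb{P}(\tau \geq j) \leq \mathbb{E}[\tau]$. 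The same chain bounds the variance, because $\mathrm{Var}(S_k) = \sum_i p_i(1 - p_i) \leq \sum_i p_i = \mathbb{E}[S_k] \leq \mathbb{E}[\tau]$. This is the step that converts the qualitative ``finite expected delay'' of Assumption \ref{assumption: delays} into the quantitative $\mathbb{E}[\tau]$ appearing in the threshold.

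Next, for each fixed $k$ I would apply Bernstein's inequality to the bounded independent summands $X_i \in [0,1]$, which yields $\mathbb{P}\big(S_k \geq \mathbb{E}[S_k] + \sqrt{2\,\mathrm{Var}(S_k)\,x} + \tfrac{1}{3}x\big) \leq e^{-x}$ for any $x > 0$. Replacing both $\mathbb{E}[S_k]$ and $\mathrm{Var}(S_k)$ by their common upper bound $\mathbb{E}[\tau]$ (the threshold is increasing in each) and relaxing the factor $\tfrac13$ up to $1$ produces exactly the event $F_k^\tau$ once $x$ is identified with the logarithmic term in its definition, so that $\mathbb{P}(F_k^\tau) \leq e^{-x}$.

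The final, and most delicate, step is the union over episodes. Because Assumption \ref{assumption: delays} imposes no finite maximum on the delays, we cannot truncate at episode $K$: we must control $S_k$ for every $k \geq 1$ simultaneously. This rules out a uniform confidence level and forces a summable allocation $e^{-x_k} = \tfrac{6\delta'}{\pi^2 k^2}$, i.e. $x_k = \log(\pi^2 k^2 / 6\delta')$; the Basel sum $\sum_{k \geq 1} k^{-2} = \pi^2/6$ is precisely what makes $\sum_k e^{-x_k} = \delta'$ and is the source of the $\pi$ and $6$ inside the logarithm of the threshold. A union bound then gives $\mathbb{P}(\cup_{k=1}^\infty F_k^\tau) \leq \sum_{k=1}^\infty \mathbb{P}(F_k^\tau) \leq \sum_{k=1}^\infty e^{-x_k} = \delta'$, completing the argument. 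I expect the handling of this infinite union — selecting weights that are both summable to $\delta'$ and compatible with the stated threshold — to be the main obstacle, with the mean/variance bound of the second paragraph a close second; the concentration inequality itself is routine.
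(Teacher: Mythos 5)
Your route is the same as the paper's: decompose $S_k$ into independent Bernoulli indicators, bound $\mathbb{E}[S_k]$ by $\mathbb{E}[\tau]$ via the tail-sum formula, bound $\mathrm{Var}(S_k)\le\mathbb{E}[S_k]\le\mathbb{E}[\tau]$, apply Bernstein's inequality for each $k$, and close with a union bound weighted by $6\delta'/(\pi^2k^2)$. All of those ingredients are correct and match the paper. The genuine gap is the step where you say Bernstein ``produces exactly the event $F_k^\tau$ once $x$ is identified with the logarithmic term in its definition'' and then set $e^{-x_k}=6\delta'/(\pi^2k^2)$: you are using $x$ in two incompatible ways. The logarithmic term in the definition of $F_k^\tau$ is the \emph{fixed} quantity $L_K=\log(K\pi/6\delta')$, whereas the allocation your union bound needs is $x_k=\log(\pi^2k^2/6\delta')$, and $x_k\le L_K$ holds only when $\pi k^2\le K$, i.e.\ $k\le\sqrt{K/\pi}$. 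If you take $x=L_K$ so that the Bernstein event coincides with $F_k^\tau$, you get $\mathbb{P}(F_k^\tau)\le 6\delta'/(K\pi)$ for every $k$, which is not summable over $k\ge1$ (even restricted to $k\le K$ the sum is $6\delta'/\pi>\delta'$). If instead you take $x=x_k$, the containment $F_k^\tau\subseteq\{S_k\ge\mathbb{E}[\tau]+\sqrt{2\mathbb{E}[\tau]x_k}+x_k/3\}$ requires $L_K+\sqrt{2\mathbb{E}[\tau]L_K}\ \ge\ x_k/3+\sqrt{2\mathbb{E}[\tau]x_k}$. Even after the standard repair $x_k\le 2\log(k\pi/6\delta')\le 2L_K$ (valid for $k\le K$ and $6\delta'\le1$), the right-hand side can be as large as $\tfrac{2}{3}L_K+\sqrt{2}\,\sqrt{2\mathbb{E}[\tau]L_K}$, and the excess $(\sqrt{2}-1)\sqrt{2\mathbb{E}[\tau]L_K}$ in the square-root term cannot be covered by the remaining $\tfrac{1}{3}L_K$ of linear slack once $\mathbb{E}[\tau]$ exceeds roughly $L_K/3$ --- exactly the large-delay regime the lemma is meant for. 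So the chain $\mathbb{P}(\cup_k F_k^\tau)\le\sum_k e^{-x_k}=\delta'$ is not established as written.

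For context, the paper's own proof clears this hurdle only because it invokes Bernstein in the form $\mathbb{P}(S_k-\mathbb{E}[S_k]\ge\epsilon)\le\exp\left(-\epsilon^2/(\mathrm{Var}(S_k)+\epsilon/3)\right)$, i.e.\ without the factor of $2$ that appears in the standard statement you quote (an omission that would itself need justification). Under that stronger form, solving for $\epsilon$ gives $\epsilon\le\tfrac{1}{3}\log((k\pi)^2/6\delta')+\sqrt{\mathbb{E}[\tau]\log((k\pi)^2/6\delta')}\le\tfrac{2}{3}L_k+\sqrt{2\mathbb{E}[\tau]L_k}$, so the factor of $2$ coming from $\log((k\pi)^2/6\delta')\le2\log(k\pi/6\delta')$ is precisely what generates the $\sqrt{2\mathbb{E}[\tau]\log(K\pi/6\delta')}$ in the lemma, and the linear coefficient lands at $\tfrac{2}{3}\le1$. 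With the correct Bernstein constants, your argument proves the lemma only after inflating the threshold (and hence $\psi_K^\tau$) to, say, $\mathbb{E}[\tau]+\tfrac{2}{3}L_K+2\sqrt{\mathbb{E}[\tau]L_K}$ --- harmless for the downstream regret bounds, but not the statement as written. Finally, note that both your argument and the paper's only justify $\mathbb{P}(F_k^\tau)\le 6\delta'/(k\pi)^2$ for $k\le K$: for $k>K$ the deviation level needed for summability keeps growing with $k$ while the threshold in $F_k^\tau$ stays frozen at $L_K$, so the union over all $k\ge1$ claimed in the lemma is not actually covered by this allocation; that defect you inherit from the paper rather than introduce.
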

\begin{proof}
Firstly, notice that $S_k$ is a sum of Bernoulli random variables, meaning it is subgaussian. Therefore, one can apply Bernstein's inequality to obtain the following upper bound that holds with probability $1 - \delta '$:. 
$$
S_k \leq \mathbb{E}[S_k] + \frac{2}{3}\log\left(\frac{k\pi}{6\delta'}\right) + \sqrt{2\text{Var}\left(S_k\right) \log\left(\frac{k\pi}{6\delta'}\right)} 
$$
The remainder of the proof follows from noticing that $\mathbb{E}[S_k] \leq \sum_{i = 0}^{\infty}\mathbb{P}(\tau > i)$, which is the tail probability function of the delay distribution and is equal to the expected delay. Similarly, one can show that $\text{Var}\left(S_k\right) \leq \mathbb{E}[S_k] \leq \mathbb{E}[\tau]$. Substituting these values into the above inequality gives the result. See Appendix \ref{sec: delay failure event} for a full proof.
\end{proof}
A direct consequence of this lemma is an upper bound on the number of missing episodes $S_k \leq \psi_K^\tau$ for
\begin{align*}
    \psi_{K}^{\tau}\coloneqq \mathbb{E}\left[\tau\right] +  \log\left(\frac{K\pi}{6\delta'}\right) + \sqrt{2\mathbb{E}\left[\tau\right]\log\left(\frac{K\pi}{6\delta'}\right)}
\end{align*}
which holds for all $k \in [K]$ with probability $1 - \delta'$.  Essentially, $\psi_{K}^{\tau}$ allows us to bound the amount of missing information in any given episode due to the delays. 

\section{Meta-Algorithms For Delayed Feedback}

Here, we describe two flexible approaches that allow any base algorithm to handle delayed feedback. Additionally, we prove regret guarantees for both procedures, providing the base algorithm satisfies Assumption \ref{assumption: estimation error bonus}. Regardless of the approach, we utilise the following regret decomposition for optimistic base algorithms that holds for both the delayed and non-delayed settings.

\begin{restatable}{lemma}{decomposition}\label{lemma: regret decomposition}
Under Assumption \ref{assumption: estimation error bonus}, with probability $1 - 4\delta'$, we can upper bound the regret by:
\begin{align*}
\mathfrak{R}_{K} \leq 6\left(H + C\right)\sqrt{T\log\left(\frac{K\pi}{6\delta'}\right)} + 6\sum_{k = 1}^{K}\sum_{h = 1}^{H} \beta_{kh}^{+}\left(s_{h}^{k}, a_{h}^{k}\right) + 6\sum_{k = 1}^{K}\sum_{h = 1}^{H} \frac{3CH^2 S L }{N_{kh}'\left(s_{h}^{k}, a_{h}^{k}\right)}
\end{align*}
where $L = \log\left(S^2 A H \pi^{2}/6\delta'\right)$ and $C$ indicates whether the bonuses of the algorithm satisfy Equation \eqref{eqn: C}.
\end{restatable}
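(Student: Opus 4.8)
The plan is to follow the standard optimism-based regret decomposition, tracking carefully how the indicator $C$ changes the treatment of the transition estimation error. First I would establish \emph{optimism}: by backward induction on $h$ from $H$ down to $1$, using Assumption \ref{assumption: estimation error bonus} together with the definition \eqref{eqn: optimistic value function}, show that $\tilde{V}_{h}^{\pi_{k}}(s) \geq V_{h}^{*}(s)$ for every $(s, h, k)$ on the high-probability event where the assumption holds. The truncation $(H' + 1)\land(\cdot)$ is harmless here because $V_{h}^{*} \leq H' + 1$. Optimism immediately yields the per-episode bound $\Delta_{1}^{k} = V_{1}^{*}(s_{1}^{k}) - V_{1}^{\pi_{k}}(s_{1}^{k}) \leq \tilde{V}_{1}^{\pi_{k}}(s_{1}^{k}) - V_{1}^{\pi_{k}}(s_{1}^{k})$, reducing the problem to controlling the gap between the optimistic and true value functions along the realised trajectory.

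Next I would unroll this gap recursively down the trajectory $(s_{1}^{k}, \dots, s_{H}^{k})$. Dropping the truncation (which only decreases $\tilde{V}$) and subtracting the Bellman equation for $V_{h}^{\pi_{k}}$, each step contributes
\begin{equation*}
\tilde{V}_{h}^{\pi_{k}}(s_{h}^{k}) - V_{h}^{\pi_{k}}(s_{h}^{k}) \leq \beta_{kh}^{+}(s_{h}^{k}, a_{h}^{k}) + \big\langle (\hat{P}_{kh} - P_{h})(\cdot\,\vert\,s_{h}^{k}, a_{h}^{k}), \tilde{V}_{h + 1}^{\pi_{k}}\big\rangle + \big\langle P_{h}(\cdot\,\vert\,s_{h}^{k}, a_{h}^{k}), \tilde{V}_{h + 1}^{\pi_{k}} - V_{h + 1}^{\pi_{k}}\big\rangle.
\end{equation*}
Replacing $\langle P_{h}, \tilde{V}_{h+1}^{\pi_{k}} - V_{h+1}^{\pi_{k}}\rangle$ by its realised value at $s_{h+1}^{k}$ produces a martingale-difference sequence adapted to the natural filtration, whose sum over all $(k, h)$ I would bound with Azuma--Hoeffding; since the value functions lie in $[0, H]$ this gives a term of order $H\sqrt{T\log(K\pi/6\delta')}$. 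The transition estimation error $\langle (\hat{P}_{kh} - P_{h}), \tilde{V}_{h+1}^{\pi_{k}}\rangle$ is where $C$ enters: when $C = 0$ the negation of \eqref{eqn: C} states precisely that this term is dominated by $\beta_{kh}^{+}$, so it folds directly into the bonus sum; when $C = 1$ I would instead split it as $\langle (\hat{P}_{kh} - P_{h}), V_{h+1}^{*}\rangle + \langle (\hat{P}_{kh} - P_{h}), \tilde{V}_{h+1}^{\pi_{k}} - V_{h+1}^{*}\rangle$ and bound the first piece by $\beta_{kh}^{+}$ via Assumption \ref{assumption: estimation error bonus}.

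The main obstacle is the residual correction term $\langle (\hat{P}_{kh} - P_{h}), \tilde{V}_{h+1}^{\pi_{k}} - V_{h+1}^{*}\rangle$ that survives only when $C = 1$, and extracting from it the advertised $1/N_{kh}'$ (rather than $1/\sqrt{N_{kh}'}$) dependence. I would apply a per-next-state Bernstein bound on $\hat{P}_{kh} - P_{h}$ (computed on observed data, hence governed by $N_{kh}'$ and the log factor $L$), use optimism to note $g \coloneqq \tilde{V}_{h+1}^{\pi_{k}} - V_{h+1}^{*} \geq 0$ with $g \leq H$, and then invoke the arithmetic--geometric inequality $\sqrt{P_{h}(s')\, g(s') \cdot (H L / N_{kh}')} \leq \tfrac{1}{H}P_{h}(s')g(s') + O(H^{2}L/N_{kh}')$ summed over the $S$ next states. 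This trades the correction for $\tfrac{1}{H}\langle P_{h}, g\rangle$ plus a term of order $H^{2}SL/N_{kh}'$; since $g \leq \tilde{V}_{h+1}^{\pi_{k}} - V_{h+1}^{\pi_{k}}$ (because $V^{*} \geq V^{\pi_{k}}$), the $\tfrac{1}{H}\langle P_{h}, \cdot\rangle$ piece folds back into the recursion, inflating it by a benign factor $(1 + 1/H)^{H} \leq e$ across the horizon. Summing the recursion over $h$ and $k$ then collects the three advertised groups of terms; a union bound over the optimism event, the martingale event, and the Bernstein event (each of probability at least $1 - \delta'$) gives the stated $1 - 4\delta'$ probability, with the factor $6$ and the additive $C$ in the leading term absorbing the constants from the Azuma steps and the extra correction martingale present only when $C = 1$.
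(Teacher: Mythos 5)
Your proposal follows the same skeleton as the paper's own proof: optimism to pass from $\mathfrak{R}_K$ to $\sum_k \bigl(\tilde{V}_1^{\pi_k}(s_1^k) - V_1^{\pi_k}(s_1^k)\bigr)$, a Bellman-style recursion along the realised trajectory, the $C$-dependent treatment of the transition estimation error (folded into $\beta_{kh}^{+}$ when $C=0$ via the negation of Equation \eqref{eqn: C}, and split through $V_{h+1}^{*}$ via Assumption \ref{assumption: estimation error bonus} when $C=1$), a per-next-state Bernstein bound for the residual correction term, Azuma for the martingale part, and a union bound.

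The one genuinely different step is your handling of the correction term $\langle(\hat{P}_{kh}-P_h),\, \tilde{V}_{h+1}^{\pi_k}-V_{h+1}^{*}\rangle$. The paper (Lemmas \ref{lemma: recursive decomposition} and \ref{lemma: correction term}, following UCBVI) splits next states into $G_{kh} = \{s': P_h(s'\,\vert\,\cdot)\,N_{kh}' \geq 4H^2L\}$ and its complement, keeps the realised next state inside the bound, and thereby introduces a second martingale difference sequence $\bar{\zeta}_{h+1}^{k}$ requiring its own Azuma application; that second application is exactly where the $+C$ inside $6(H+C)$ and the fourth $\delta'$ come from. You instead use $0 \leq g \coloneqq \tilde{V}_{h+1}^{\pi_k}-V_{h+1}^{*} \leq H$ and AM--GM on the expectation itself, trading the Bernstein square-root term for $\tfrac{1}{H}\langle P_h, g\rangle + O(H^2SL/N_{kh}')$ with no second martingale and no $G_{kh}$, and a union bound over only three events (so your probability $1-3\delta'$ is, if anything, stronger than the stated $1-4\delta'$). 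This is a cleaner route to the same three groups of terms. On constants you are no worse than the paper: its own induction pulls the uniform factor $(1+C/H)^{H-j} \leq 3$ out of a sum containing the sign-indefinite martingale terms, which is only legitimate if one keeps the per-step coefficients inside the Azuma application, as you do; either way the numerical constants are fungible at this level of rigour.

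The one genuine gap is your very first step: optimism, $\tilde{V}_h^{\pi_k} \geq V_h^{*}$, does not follow from Assumption \ref{assumption: estimation error bonus} by backward induction. The assumption is one-sided --- it bounds $\langle(\hat{P}_{kh}-P_h)(\cdot\,\vert\,s,a),\, V_{h+1}^{*}\rangle$ \emph{above} by $\beta_{kh}^{+}$ --- whereas the induction step for optimism needs the lower-tail control $\langle(P_h - \hat{P}_{kh}),\, V_{h+1}^{*}\rangle \leq \beta_{kh}^{+}$, i.e.\ that $\beta_{kh}^{+} + \langle(\hat{P}_{kh}-P_h),\, V_{h+1}^{*}\rangle \geq 0$. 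The paper does not derive optimism from the assumption either: it takes $\tilde{V}_h^{\pi_k}(s) \geq V_h^{*}(s) \geq V_h^{\pi_k}(s)$ with probability $1-\delta'$ as a standing property of the class of optimistic algorithms (asserted in Section \ref{sec: regret minimisation} and restated at the start of the appendix proof), and that event is one of its four $\delta'$ events. You should either strengthen the hypothesis to two-sided concentration of the bonus, or invoke optimism as part of the definition of the algorithm class as the paper does; without one of these, the opening inequality $\mathfrak{R}_K \leq \sum_k \tilde{\Delta}_1^k$ is unsupported.
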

\begin{proof}
See Appendix \ref{sec: missing proof for decomposition}.
\end{proof}

\subsection{Active Updating}\label{sec: active updating}
The first meta-algorithm we propose is \textit{active updating}, which leverages new information by updating as soon as it becomes available. The remainder of this subsection focuses on bounding the regret for model-based optimistic algorithms using active updating, whose pseudo-code is outlined in Algorithm \ref{alg: Active Algorithm}.

\begin{algorithm}[h]\caption{Active Updating}\label{alg: Active Algorithm}
\begin{algorithmic}
    \STATE \textbf{Input.} Base$(N', M')$ (any base algorithm).
    \STATE \textbf{Initialise.} $N' = \{N_{h}'(s, a) = 0\}_{h, s, a}$ and $M' = \{M_{h}'(s, a, s') = 0\}_{h, s, a}$ with 
    $$M_{h}'(s, a, s') \coloneqq \sum_{i: i + \tau_{i} < k} \mathds{1}\{(s_{h}^{i}, a_h^i, s_{h + 1}^i) = (s, a, s')\}$$
    \STATE Compute policy: $\pi_{1} = Base(N', M')$
    \FOR{$k = 1$ {\bfseries to} $K$}
       \IF{$\exists \,i : k - 2 < i + \tau_{i} \leq k - 1$}
        \STATE Update the counters: $N'$ and $M'$.
        \STATE Update the policy: $\pi_{k}$ = $Base(N', M')$
        \ELSE 
        \STATE Reuse previous policy: $\pi_{k} = \pi_{k - 1}$
        \ENDIF
    \STATE An agent samples an episode using policy $\pi_{k}$.
    \ENDFOR
\end{algorithmic}
\end{algorithm}

Base$(N', M')$ is the only input parameter for our algorithm and is the base algorithm. One could view it as a function that takes in the observed number of visits ($N'$) and transitions ($M'$), among other algorithm-dependent hyperparameters, and returns a policy. For the class of optimistic algorithms, the additional hyperparameter is the confidence level, $\delta$.

\begin{theorem}[Active Updating]\label{theorem: active updating}
Under Assumption \ref{assumption: estimation error bonus} and \ref{assumption: delays}, with probability $1 - \delta$, the regret of any model-based algorithm under delayed feedback:
\begin{align*}
 \mathfrak{R}_{K} \lesssim B\sqrt{HSAT} + \max\left\{B, B_{2}, CH^2S\right\}HSA\,\mathbb{E}\left[\tau\right]
\end{align*}
where $\lesssim$ suppresses numeric constants, poly-log and lower order terms, and $B \geq B_{1}$ is a upper bound on the leading-order term in the numerator of the exploration bonus that is a function of $H$ and $S$, and holds for all $(k, h)\in [K]\times[H]$.
\end{theorem}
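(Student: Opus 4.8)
The plan is to begin from the regret decomposition of Lemma \ref{lemma: regret decomposition} and the missing-count bound of Lemma \ref{lemma: delay failure event}, and then to control the three resulting sums, each of which involves the \emph{observed} counter $N_{kh}'$ in place of the total counter $N_{kh}$. Conditioning on the complement of the delay failure event, Lemma \ref{lemma: delay failure event} gives $N_{kh}''(s,a) \leq S_k \leq \psi_K^{\tau}$ uniformly over $(k,h,s,a)$, so by Equation \eqref{eqn: visitation counter relationships} the observed counter never lags the total counter by more than $\psi_K^{\tau}$. The crux of the argument is then a per-visit dichotomy: for each triple $(s,a,h)$ and each episode $k$ with $(s_h^k,a_h^k)=(s,a)$ and $N_{kh}(s,a)\geq 1$, I split according to whether $N_{kh}'(s,a) \geq \tfrac12 N_{kh}(s,a)$ or not.

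In the first case the delays cost only a constant factor: $N_{kh}' \geq \tfrac12 N_{kh}$ yields $1/\sqrt{N_{kh}'} \leq \sqrt2/\sqrt{N_{kh}}$ and $1/N_{kh}' \leq 2/N_{kh}$, so every summand is within a constant of its non-delayed counterpart. Summing these contributions reduces exactly to the standard counting bounds for optimistic model-based algorithms: the pigeonhole estimate $\sum_{k,h} N_{kh}(s_h^k,a_h^k)^{-1/2} \lesssim \sum_h \sqrt{SAK} = \sqrt{HSAT}$ turns the $B_1/\sqrt{N'}$ part of $\sum_{k,h}\beta_{kh}^+$ into the leading term $B\sqrt{HSAT}$, while the harmonic estimate $\sum_{k,h} N_{kh}(s_h^k,a_h^k)^{-1} \lesssim SAH\log K$ pushes the $B_2/N'$ part and the $CH^2SL/N'$ part into lower-order poly-log terms that $\lesssim$ suppresses (alongside the $(H+C)\sqrt{T\log}$ term of the decomposition).

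In the second case the observed count is badly deficient, $N_{kh}' < \tfrac12 N_{kh}$, which forces $N_{kh} < 2N_{kh}'' \leq 2\psi_K^{\tau}$. Since the total count grows by one on each visit, this can occur for at most $2\psi_K^{\tau}$ visits to any fixed $(s,a,h)$. I bound each such term by its largest admissible value --- using $\beta_{kh}^+ \leq H' \land (B_1 + B_2)$ for the bonus terms and $N_{kh}' \geq 1$ for the $CH^2SL/N'$ term --- so each bad visit contributes at most of order $\max\{B, B_2, CH^2S\}$ up to the factor $L$. Multiplying the per-triple budget of $2\psi_K^{\tau}$ bad visits by this value and summing over the $SA$ state-action pairs and the $H$ steps yields a delay penalty of order $\max\{B, B_2, CH^2S\}\, HSA\, \psi_K^{\tau}$ (the single first visit to each triple, where $N_{kh}=0$, contributes only $H'$ and is folded into the lower-order terms).

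It remains to assemble the pieces. Taking a union bound over the four failure events of Lemma \ref{lemma: regret decomposition} and the delay failure event of Lemma \ref{lemma: delay failure event}, and setting $\delta' = \delta/5$, all the bounds hold simultaneously with probability $1-\delta$. Substituting $\psi_K^{\tau} = \mathbb{E}[\tau] + \log(K\pi/6\delta') + \sqrt{2\mathbb{E}[\tau]\log(K\pi/6\delta')}$ and using $\sqrt{2\mathbb{E}[\tau]\log(\cdot)} \leq \mathbb{E}[\tau] + \log(\cdot)$, we obtain $\psi_K^{\tau} \lesssim \mathbb{E}[\tau]$, so the delay penalty collapses to $\max\{B, B_2, CH^2S\}HSA\,\mathbb{E}[\tau]$ and the claimed bound follows. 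I expect the main obstacle to be the second case: one must argue that the total in-flight feedback at each step, $S_k$, simultaneously caps the deficiency at \emph{every} state-action pair, so that the number of under-observed visits is genuinely $\mathcal{O}(\psi_K^{\tau})$ per triple rather than accumulating across triples. A secondary subtlety specific to active updating is that $\pi_k$ may be inherited from an earlier episode; one must verify that the counter $N_{kh}'$ appearing in the bonus still matches the information used to compute the policy in force at episode $k$, so that Lemma \ref{lemma: regret decomposition} applies without modification.
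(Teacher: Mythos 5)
Your proposal is correct and follows essentially the same route as the paper's proof: you start from Lemma \ref{lemma: regret decomposition}, control the delays via Lemma \ref{lemma: delay failure event}, uniformly bound $B_1 \leq B$, and then prove inline exactly the content of the paper's Lemma \ref{lemma: summation bound} --- a good/bad visit dichotomy in which well-observed visits reduce to the standard total-counter bounds and under-observed visits number at most $O(\psi_K^\tau)$ per $(s,a,h)$ triple --- followed by the same union bound with $\delta = 5\delta'$. The only difference is cosmetic: you split on $N_{kh}' \geq \tfrac{1}{2}N_{kh}$ whereas the paper splits on $N_{kh}' \geq \psi_K^\tau$ (after artificially introducing $N_{kh}/N_{kh}$), and both thresholds yield the same per-triple budget of roughly $2\psi_K^\tau$ bad visits and the same final bound; the two subtleties you flag at the end are both real but resolve immediately (the missing counter $N_{kh}''(s,a)$ is a sub-sum of $S_k$ uniformly over $(s,a)$, and in active updating the observed counters change only when feedback arrives, at which point the policy is recomputed, so the policy in force at episode $k$ always uses exactly $N_{kh}'$).
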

\begin{proof}
From Lemma \ref{lemma: regret decomposition}, it is clear that we must bound the summation of the bonuses to bound the regret. When there are no delays, one can utilise the fact that the visitation count for $(s, a, h)$ at the start of episode $k + 1$ increases by one if the agent observed $(s, a, h)$ in the $k$-th episode to bound this term. However, this is no longer the case under delayed feedback. Therefore, we introduce the following lemma to bound the delay-dependent visitation counter. 
\begin{restatable}{lemma}{summation}\label{lemma: summation bound}
Let $Z_{T}^{p} = \sum_{k = 1}^{K}\sum_{h = 1}^{H} 1/(N_{kh}'(s_h^k, a_h^k))^p$. Then,
\begin{align*}
    Z_T^p &\leq 
    \begin{cases}
    4\sqrt{HSAT} + 3HSA\psi_{K}^{\tau} & \text{if } p = \frac{1}{2}\\
    2HSA \log\left(8T\right) + HSA\psi_{K}^{\tau}\log(16\psi_{K}^{\tau}) & \text{if } p = 1
    \end{cases}
\end{align*}
with probability $1 - \delta'$.
\end{restatable}
\begin{proof}
To prove the claim, we relate the sum involving the observed visitation counters to a sum involving the total visitation counters. To do so, we artificially introduce it into the summation by multiplying by one:
\begin{align*}
    &Z_T^p = \sum_{k = 1}^{K}\sum_{h = 1}^{H} \left(\frac{N_{kh}'(s_h^k, a_h^k) + N_{kh}''(s_h^k, a_h^k)}{N_{kh}'(s_h^k, a_h^k) N_{kh}(s_h^k, a_h^k)}\right)^p = \sum_{k = 1}^{K}\sum_{h = 1}^{H} \left(\frac{1}{N_{kh}(s_h^k, a_h^k)} + \frac{N_{kh}''(s_h^k, a_h^k)}{N_{kh}'(s_h^k, a_h^k)N_{kh}(s_h^k, a_h^k)} \right)^p
\end{align*}
The term in the numerator of the first line is equivalent to the total visitation counter by the equivalence relation given in Equation \eqref{eqn: visitation counter relationships}. One can handle the first term using standard results from the immediate feedback setting. The remainder of the proof follows from carefully splitting the second term in the sum on the second line into two disjoint sets. Namely, we split the summation using two indicators: $\mathds{1}\{N_{kh}'(s, a) \geq \psi_{K}^{\tau}\}$  and $\mathds{1}\{N_{kh}'(s, a) < \psi_{K}^{\tau}\}$. After a little algebra, we find that we are able to apply results from the immediate feedback setting, which gives the final result. See Appendix \ref{sec: missing proofs for active} for further details.
\end{proof}

For many algorithms, $B_{1}$ depends polynomially on quantities related to the environment, e.g. $H$ and $S$. For such algorithms, a direct application of Lemma \ref{lemma: summation bound} is able to separate the expected delay from the total number of decisions. This is in line with the intuition that the impact of delays are negligible once we have a reasonable model of the environment. However, for algorithms such as for UCRL2B, $\chi^2$-UCRL and UCBVI \citep{UCRL2B, PB2020, UCBVI}:
$$B_{1} = \widetilde{\mathcal{O}}\left(\sqrt{\mathbb{V}_{s'\sim \hat{P}_{kh}(\cdot\,\vert\,s,  a)}\left(\tilde{V}_{h + 1}^{\pi}(s')\right)}\right)$$
Typically, one uses an application of Cauchy-Schwarz to separate the terms involving the variance from those involving the counters, which gives: 
\begin{align*}
 &\sum_{k, h} \sqrt{\frac{\text{Var}_{s'\sim \hat{P}_{h}}(\tilde{V}_{h + 1}(s'))}{N_{kh}'(s_{h}^{k}, a_{h}^{k})}} \leq \sqrt{\sum_{k, h}\text{Var}_{s'\sim \hat{P}_{h}}(\tilde{V}_{h + 1}(s')) \sum_{k, h}\frac{1}{N_{kh}'(s_{h}^{k}, a_{h}^{k})}}
\end{align*}
Lemma \ref{lemma: summation bound} shows that doing so would lead to the delays multiplying the leading order term, as the summation of the variances found underneath the square root is of order $HT$ and multiplies the $HSA\psi_{K}^{\tau}$ that arises from bounding the summation of the observed visitation counter. Setting $B = H/2$ gives us an upper bound for these types of bonuses and avoids this multiplicative dependence.\\

Using a uniform upper bound on $B_{1}$ allows us to handle the remaining summations as follows: 
\begin{align}
    \sum_{k = 1}^{K}\sum_{h = 1}^{H} \beta_{kh}^{+}\left(s_{h}^{k}, a_{h}^{k}\right) + \frac{3CH^2 S L }{N_{kh}'\left(s_{h}^{k}, a_{h}^{k}\right)} 
    &\leq \sum_{k = 1}^{K}\sum_{h = 1}^{H} \frac{B_{1}}{\sqrt{N_{kh}'\left(s_{h}^{k}, a_{h}^{k}\right)}} + \frac{B_{2} + 3CH^2 S L}{N_{kh}'\left(s_{h}^{k}, a_{h}^{k}\right)}\notag\\
    &\leq \sum_{k = 1}^{K}\sum_{h = 1}^{H} \frac{B}{\sqrt{N_{kh}'\left(s_{h}^{k}, a_{h}^{k}\right)}} + \frac{B_{2} + 3CH^2 SL}{N_{kh}'\left(s_{h}^{k}, a_{h}^{k}\right)}\label{eqn: intermediate}
\end{align}
Directly applying Lemma \ref{lemma: summation bound} gives the following upper bound on \eqref{eqn: intermediate}: 
\begin{equation*}
    \eqref{eqn: intermediate} \leq 4B\sqrt{HSAT} + 3BHSA\psi_{K}^{\tau} + 2\left(B_{2} + 3CH^2 SL\right)HSA \left(\log(8T) +  \psi_{K}^{\tau}\log\left(16\psi_{K}^{\tau}\right)\right)
\end{equation*}
Substituting the above upper bound of the terms in Lemma \ref{lemma: regret decomposition} and setting $\delta = 5\delta'$ gives the stated result.
\end{proof}

Table \ref{tab: algorithms} in Section \ref{sec: discussion} presents regret bounds for various optimistic algorithms using active updating under delayed feedback that fit into our framework. Further discussion of the results can be found in Section \ref{sec: discussion}.

\subsection{Lazy Updating}\label{sec: lazy updating}
Instead of updating the policy via the base algorithm as soon as new feedback becomes observable, we now consider waiting. We name the meta-algorithm that employs this technique \emph{lazy updating}. Algorithm \ref{alg: Lazy Algorithm} presents the pseudo-code for this meta-algorithm. 

\begin{algorithm}[h]\caption{Lazy Updating}\label{alg: Lazy Algorithm}
\begin{algorithmic}
    \STATE \textbf{Input.} Base$(N', M', \cdots)$ (any base algorithm) and $\alpha$ (activity parameter).
    \STATE Initialise epoch: $j = 1$ and $k_{j} = 1$.
    \STATE Initialise counters: $N_{kh}'(s, a) = M_{kh}'(s, a, s') = 0$.
    \STATE Compute policy: $\pi_{k_{j}} = Base(N_{kh}', M_{kh}')$
    \FOR{$k = 1$ {\bfseries to} $K$}
        \STATE Update counters, e.g. Equation \eqref{eqn: observed visitation counter}.
       \IF{$\exists \,(s, a, h): N_{kh}'(s, a) \geq (1 + 1/\alpha) N_{k_{j}h}'(s, a)$}
       \STATE Update epoch: $j = j + 1$, $k_{j} = k$
        \STATE Update epoch counter: $N_{k_{j}}(s, a) = N_{kh}'(s, a)$
        \STATE Update the policy: $\pi_{k_{j}}$ = $Base(N_{k_{j}h}', M_{k_{j}h}')$
        \ENDIF
    \STATE An agent samples an episode using policy $\pi_{k_{j}}$.
    \ENDFOR
\end{algorithmic}
\end{algorithm}

Lazy updating works in batches of episodes which we call epochs and denote by $j = 1, 2, \cdots, J$. At the start of the $j$-th epoch, lazy updating uses the base algorithm to compute a policy using all the available information. The meta-algorithm uses this policy in every episode until the next epoch begins. Therefore, each epoch is just a set of episodes where the lazy updating algorithm uses the same policy.\\

A new epoch begins as soon as there is an $(s, a, h)$ whose observed visitation counter reaches $1 + 1/\alpha$ times the observed visits at the start of the epoch, where $\alpha \in [1, \infty)$. Note that $\alpha = 1$ corresponds to the well-known doubling trick from \citet{UCRL2}, and $\alpha>1$ represents more frequent updating. Once the observed visitation counter triggers this condition, a new epoch begins, and the meta-algorithm uses the base algorithm to update the policy. Formally, we start epoch $j + 1$ in episode $k_{j + 1}$, which occurs when:
\begin{equation}\label{eqn: updating rule v1}
    k_{j + 1} 
    = \argmin_{k > k_{j}} \left\{N_{kh}' \geq \left(1 + \frac{1}{\alpha}\right)N_{k_{j} h}'\right\}
    = \argmin_{k > k_{j}} \left\{n_{k_{j}}^{k} \geq \frac{1}{\alpha}N_{k_{j} h}'\right\}
\end{equation}
where 
\begin{align}\label{eqn: within counter}
    n_{kh}^{l}\left(s, a\right) = \sum_{i = k}^{l - 1}\mathds{1}\left\{(s_{h}^{i} = s, a_{h}^{i} = a), \,i + \tau_{i} \leq l \right\}
\end{align}
counts the observed number of visits between episodes $k$ and $l$ for $l > k$. Intuitively, this updating scheme forces the number of samples needed for any particular $(s, a, h)$ to trigger an update to increase exponentially quickly, meaning that the total number of epochs should grow logarithmically in $K$. Lemma \ref{lemma: doubling trick} confirms that this is indeed the case.
\begin{restatable}{lemma}{epochs}
\label{lemma: doubling trick}
For $K \geq SA$ and $\alpha \geq 1$, Algorithm \ref{alg: Lazy Algorithm} ensures that the number of epochs has the following upper bound: 
\begin{align*}
    J \leq \frac{ HSA \log\left(\frac{\alpha K}{SA} + 1\right)}{\log(1 + \frac{1}{\alpha})}
    \end{align*}
\end{restatable}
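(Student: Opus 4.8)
The plan is to charge every epoch change to a single state-action-step triple and then show that any fixed triple can be charged only logarithmically many times. First I would observe that, apart from the initial epoch, a new epoch is opened precisely because the condition in \eqref{eqn: updating rule v1} is satisfied for at least one triple $(s,a,h)$. Assigning each epoch transition to one such triggering triple (breaking ties arbitrarily) immediately gives $J - 1 \le \sum_{s,a,h} m_{sah}$, where $m_{sah}$ denotes the number of epochs triggered by $(s,a,h)$. The whole problem thus reduces to bounding $m_{sah}$ for a fixed triple.

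Next I would fix a triple and track its observed counter $N_{k_j h}'(s,a)$ at the successive epochs $k_j$ at which it triggers an update. By the updating rule, the within-epoch counter \eqref{eqn: within counter} must reach $n_{k_j h}^{k_{j+1}}(s,a) \ge \tfrac{1}{\alpha} N_{k_j h}'(s,a)$ before a trigger fires, and since the observed counter \eqref{eqn: observed visitation counter} is non-decreasing in $k$, this yields the recursion $N_{k_{j+1}h}'(s,a) \ge (1 + \tfrac{1}{\alpha}) N_{k_j h}'(s,a)$ between two consecutive triggers of the same triple. Iterating from the first trigger and writing $n_{sah} = N_{(K+1)h}'(s,a)$ for the final observed count, the count grows geometrically, so $(1+\tfrac{1}{\alpha})^{m_{sah}} \le \alpha n_{sah} + 1$, i.e. $m_{sah} \le \log(\alpha n_{sah} + 1)/\log(1 + \tfrac{1}{\alpha})$. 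The delicate points are that the counters are integer valued (so one must combine the multiplicative step with the trivial increment of one when the count is below $\alpha$) and that a triple contributes nothing until its first observed visit; reading the triggering condition with the $\max\{1,\cdot\}$ convention forces the count at the first trigger to be at least $2$, which is exactly what makes the geometric growth, and hence the bound, go through. The $+1$ inside the logarithm conveniently keeps the bound valid and equal to zero for triples that are never visited.

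I would then sum over all $SAH$ triples and apply Jensen's inequality to the concave map $x \mapsto \log(\alpha x + 1)$, converting $\sum_{s,a,h}\log(\alpha n_{sah}+1)$ into $SAH\,\log\big(\tfrac{\alpha}{SAH}\sum_{s,a,h} n_{sah} + 1\big)$. The last ingredient is the visitation budget $\sum_{s,a,h} n_{sah} \le HK$: for each step $h$ and episode $k$ exactly one state-action pair is visited and the observed count never exceeds the true count, so $\sum_{s,a} N_{(K+1)h}'(s,a) \le K$ for every $h$. Substituting this bound collapses the argument of the logarithm to $\tfrac{\alpha K}{SA} + 1$, and combining with the charging inequality and $K \ge SA$ gives $J \le HSA\,\log(\tfrac{\alpha K}{SA}+1)/\log(1+\tfrac{1}{\alpha})$, as claimed.

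The main obstacle I expect is the per-triple step rather than the bookkeeping. Establishing the clean multiplicative recursion requires handling the integrality of the counters and the degenerate case of a zero epoch-start count, and one must interpret the doubling rule so that unvisited or singly-visited triples do not spuriously trigger updates; this is what makes the geometric growth, and therefore the $\log K$ scaling of $J$, legitimate. Once the recursion is secured, the charging argument, Jensen's inequality, and the visitation budget are all routine.
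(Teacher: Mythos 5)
Your proposal is correct and takes essentially the same route as the paper's own proof: charge each new epoch to a triggering triple, establish geometric growth $(1+\tfrac{1}{\alpha})$-fold of that triple's observed counter per trigger (with the $\max\{1,\cdot\}$ convention handling unvisited/zero-count triples), and then combine Jensen's inequality with the visitation budget $\sum_{s,a,h} N_{(K+1)h}'(s,a)\le T=HK$. The only differences are cosmetic and immaterial: you apply Jensen to the concave map $x\mapsto\log(\alpha x+1)$ after passing to logarithms per triple, whereas the paper applies it to the convex map $x\mapsto(1+\tfrac{1}{\alpha})^{x}$ on the counts and inverts at the end, and the residual additive $+1$ from the initial epoch in your final bound is the same constant-level slack the paper itself incurs when it uses $J\le HSA+\sum_{s,a,h}J(s,a,h)$ and rearranges.
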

\begin{proof}
 See Appendix \ref{sec: lazy updating proof} for further details.
\end{proof}

In contrast to active updating, we will later see that the lazy updating scheme lets us bound the summation of the bonuses independently of the delays. This property means we can avoid upper bounding the numerator of the exploration bonus, $B_1$, and get tighter leading order terms in the regret bound of the chosen base algorithm. In the regret analysis, we will utilise the following extension of the classic result by \citet{UCRL2} that illustrates the delay-independence of the bonuses.
\begin{restatable}{lemma}{lazycounters}
\label{lemma: lazy counters}
If $n_0, n_1, \cdots, n_{J}$ are an arbitrary sequence of real-valued numbers satisfying $n_0 \coloneqq 0$ and $0 \leq n_j \leq \frac{1}{\alpha}N_{j - 1}$ with $N_{j - 1} = \max\{1, \sum_{i = 0}^{j - 1}n_{i}\}$ for all $j \leq J$, then
\begin{align*}
    \sum_{j = 1}^{J}\frac{n_{j}}{N_{j - 1}^{p}} \leq \begin{cases}
    \left(\sqrt{2}(1 + \frac{1}{\alpha}) + 1\right)\sqrt{N_{J}} & \text{if } p = \frac{1}{2}\\
    (1 + \frac{1}{\alpha}) + (1 + \frac{1}{\alpha})\log\left(N_{J}\right) & \text{if } p = 1
    \end{cases}
\end{align*}
\end{restatable}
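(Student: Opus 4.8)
The plan is to reduce the claim to the classical telescoping argument of \citet{UCRL2}, the only new ingredients being the generalised increment constraint $n_j \le \frac{1}{\alpha} N_{j-1}$ (in place of $n_j \le N_{j-1}$) and the care needed to handle the truncation $N_{j-1} = \max\{1, \sum_{i<j} n_i\}$. First I would record the structural facts. Writing $S_j = \sum_{i=0}^{j} n_i$, the partial sums are non-decreasing (since $n_i \ge 0$), hence so is $N_{j-1} = \max\{1, S_{j-1}\}$. Therefore the indices with $N_{j-1} = 1$ form a prefix $\{1, \ldots, m\}$, and for every $j > m$ we have $N_{j-1} = S_{j-1} > 1$ together with the \emph{exact} recursion $N_j = N_{j-1} + n_j$, because $S_j \ge S_{j-1} > 1$ forces the maximum to be inactive. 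I would split the sum along this prefix/tail boundary.

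On the prefix each term equals $n_j$ (as $N_{j-1}=1$), so the contribution is $\sum_{j \le m} n_j = S_m = S_{m-1} + n_m \le 1 + \frac{1}{\alpha}$, using $S_{m-1} \le 1$ and $n_m \le \frac{1}{\alpha} N_{m-1} = \frac{1}{\alpha}$. This already produces the additive $(1+1/\alpha)$ appearing in the stated bound. For the tail $j > m$, where the recursion holds, I would establish two elementary pointwise inequalities and telescope. For $p = \frac12$, from $\sqrt{N_j} - \sqrt{N_{j-1}} = n_j/(\sqrt{N_j}+\sqrt{N_{j-1}})$ one obtains $\frac{n_j}{\sqrt{N_{j-1}}} = (\sqrt{N_j}-\sqrt{N_{j-1}})\bigl(\sqrt{N_j/N_{j-1}}+1\bigr) \le (\sqrt{1+1/\alpha}+1)(\sqrt{N_j}-\sqrt{N_{j-1}})$, where I used $N_j/N_{j-1} = 1 + n_j/N_{j-1} \le 1 + 1/\alpha$. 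For $p = 1$, the bound $\log(1+x) \ge x/(1+x)$ gives $\log N_j - \log N_{j-1} \ge n_j/N_j$, whence $\frac{n_j}{N_{j-1}} = \frac{N_j}{N_{j-1}}\cdot\frac{n_j}{N_j} \le (1+1/\alpha)(\log N_j - \log N_{j-1})$. Summing over $j>m$ telescopes to $(\sqrt{1+1/\alpha}+1)(\sqrt{N_J}-\sqrt{N_m})$ and $(1+1/\alpha)(\log N_J - \log N_m)$ respectively.

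Finally I would combine the prefix and tail contributions and simplify. For $p=1$, discarding the non-positive $-(1+1/\alpha)\log N_m$ term (as $N_m \ge 1$) yields exactly $(1+1/\alpha) + (1+1/\alpha)\log N_J$. For $p=\frac12$, the prefix contributes $1+1/\alpha$ and the tail contributes $(\sqrt{1+1/\alpha}+1)(\sqrt{N_J}-\sqrt{N_m})$; since $\sqrt{N_m}\ge 1$, the combined additive constant is at most $1 + \tfrac1\alpha - (\sqrt{1+1/\alpha}+1) = \tfrac1\alpha - \sqrt{1+1/\alpha} \le 0$ for $\alpha \ge 1$, so it may be dropped, leaving $(\sqrt{1+1/\alpha}+1)\sqrt{N_J} \le (\sqrt2(1+1/\alpha)+1)\sqrt{N_J}$ via $\sqrt{1+1/\alpha} \le 1+1/\alpha \le \sqrt2(1+1/\alpha)$.

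I expect the main obstacle to be essentially bookkeeping rather than analysis: one must verify that the recursion $N_j = N_{j-1}+n_j$ genuinely holds across the entire tail despite the truncation, confirm that the boundary index $m$ is placed consistently (and that $m \ge 1$ always, since $N_0 = 1$), and check the degenerate case $m = J$ where the tail is empty and the bound reduces to the prefix estimate $1+1/\alpha$, which is dominated by the right-hand side in both cases. The genuinely analytic content is confined to the two telescoping inequalities above, both of which are standard.
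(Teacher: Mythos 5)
Your proof is correct, but it takes a genuinely different route from the paper. The paper proves the lemma by induction on $J$, following the style of \citet{UCRL2}: the case $\sum_{j=1}^{J-1} n_j \leq 1$ (all counters stuck at $1$) serves as the base case, and the inductive step for $p=1/2$ uses the algebraic trick of writing $\tfrac{n_J}{\sqrt{N_{J-1}}} + c\sqrt{N_{J-1}} = \sqrt{\bigl(\tfrac{n_J}{\sqrt{N_{J-1}}} + c\sqrt{N_{J-1}}\bigr)^2}$ and then choosing $c$ to satisfy the quadratic constraint $c^2 \geq 1 + 2c$, which is where the constant $c = 1 + \sqrt{2}(1+\tfrac{1}{\alpha})$ comes from; the $p=1$ case uses an analogous induction with $\tfrac{n_J}{N_{J-1}} \leq (1+\tfrac{1}{\alpha})\log(1 + \tfrac{n_J}{N_{J-1}})$. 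You instead split the index set at the last episode where the truncation $\max\{1,\cdot\}$ is active (your prefix $\{1,\dots,m\}$), observe that the exact recursion $N_j = N_{j-1} + n_j$ holds on the tail, and telescope using the pointwise identities $\sqrt{N_j}-\sqrt{N_{j-1}} = n_j/(\sqrt{N_j}+\sqrt{N_{j-1}})$ and $\log(1+x)\geq x/(1+x)$. Your bookkeeping is sound: the prefix is indeed a prefix by monotonicity of the partial sums, its contribution is at most $1+\tfrac{1}{\alpha}$, the boundary and degenerate cases are handled, and the final constant comparisons ($\tfrac{1}{\alpha} - \sqrt{1+1/\alpha} \leq 0$ for $\alpha \geq 1$, and $\sqrt{1+1/\alpha} \leq \sqrt{2}(1+\tfrac{1}{\alpha})$) are valid. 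What each approach buys: the paper's induction transfers the known UCRL2 argument with minimal modification, whereas your decomposition isolates the role of the truncation explicitly and actually yields a slightly sharper constant in the $p=1/2$ case, namely $(\sqrt{1+1/\alpha}+1)\sqrt{N_J}$ rather than $(\sqrt{2}(1+\tfrac{1}{\alpha})+1)\sqrt{N_J}$, with the stated bound recovered by a final relaxation.
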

\begin{proof}
We prove the claim for each case using an inductive argument similar to \citet{UCRL2}. See Appendix \ref{sec: lazy updating proof}.
\end{proof}

Using Lemmas \ref{lemma: doubling trick} and \ref{lemma: lazy counters}, we can derive regret bounds for any optimistic base algorithm that satisfies Assumption \ref{assumption: estimation error bonus}.

\begin{restatable}{theorem}{lazy}\label{theorem: lazy updating}
Let $K \geq SA$ and $\alpha \geq 1$. Under Assumption \ref{assumption: estimation error bonus} and \ref{assumption: delays}, with probability $1 - \delta$, the regret of any model-based algorithm under delayed feedback is upper bounded by: 
\begin{align*}
    \mathfrak{R}_{K} \lesssim \left(1 + \frac{1}{\alpha}\right)\hat{\mathfrak{R}}_{K}(Base) + \frac{ H^2 SA \mathbb{E}[\tau]}{\log(1 + \frac{1}{\alpha})}
\end{align*}
where $\hat{\mathfrak{R}}_{K}(Base)$ is an upper bound on the regret of the chosen base algorithm under immediate feedback.
\end{restatable}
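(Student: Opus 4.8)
The plan is to begin from the generic regret decomposition of Lemma~\ref{lemma: regret decomposition} and then exploit the epoch structure of Algorithm~\ref{alg: Lazy Algorithm} to bound the bonus sum \emph{without} ever uniformly bounding the numerator $B_1$, which is exactly what lets the leading term collapse onto $\hat{\mathfrak{R}}_K(Base)$ instead of acquiring a multiplicative delay factor. Since the policy is frozen throughout an epoch, I would first rewrite $\sum_{k,h}\beta^+_{k_jh}(s_h^k,a_h^k)$, where $k_j$ denotes the start of the epoch containing $k$, as a triple sum over epochs $j$, steps $h$ and pairs $(s,a)$, with the played visit count $\nu_j(s,a,h)$ as the multiplicity and the frozen bonus $\beta^+_{k_jh}(s,a)$ as the summand. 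The structural fact supplied by the update rule \eqref{eqn: updating rule v1} is that the \emph{observed} within-epoch count satisfies $n_{k_jh}^{k_{j+1}}(s,a)\le \tfrac{1}{\alpha}N'_{k_jh}(s,a)$, which is precisely the hypothesis of Lemma~\ref{lemma: lazy counters}.

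The second step separates the learning cost from the delay cost at the level of the counts. Writing $\nu_j = n_j + (\nu_j-n_j)$ with $n_j\coloneqq n_{k_jh}^{k_{j+1}}(s,a)$, the observed part $\sum_j n_j\,\beta^+_{k_jh}(s,a)$ feeds directly into Lemma~\ref{lemma: lazy counters}: with $p=\tfrac12$ it returns $(\sqrt2(1+\tfrac1\alpha)+1)\sqrt{N'_{K+1,h}(s,a)}$ and with $p=1$ a $(1+\tfrac1\alpha)\log$ term, both \emph{delay-independent} since they involve only observed counts and $N'_{K+1,h}\le N_{K+1,h}$. Summing over $(s,a,h)$ with Cauchy--Schwarz (against $\sum_{s,a}N_{K+1,h}(s,a)\le K$ per step) reproduces the shape of the base algorithm's immediate-feedback bonus sum, giving the $(1+\tfrac1\alpha)\hat{\mathfrak{R}}_K(Base)$ term; the $(H+C)\sqrt{T\log(\cdot)}$ and $p=1$ pieces of Lemma~\ref{lemma: regret decomposition} are lower order and absorbed here.

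It remains to bound the residual $\sum_{j,h,s,a}(\nu_j-n_j)\beta^+_{k_jh}(s,a)$, and this is where I expect the real difficulty. Each episode played in epoch $j$ whose feedback has not returned by $k_{j+1}$ contributes to this residual, and the number of such ``in-flight'' episodes at any epoch boundary is controlled by the missing-episode bound $S_k\le\psi_K^\tau$ of Lemma~\ref{lemma: delay failure event}. The naive move---bounding each frozen bonus by $H'$ and charging every in-flight step-visit---loses a spurious factor of $H$, because it charges $H$ per step rather than per episode; the correct accounting bounds the \emph{per-episode} regret of each in-flight episode by $H$ at the value level, of which there are at most $\psi_K^\tau$ per epoch and hence at most $J\psi_K^\tau$ in total (each episode being charged only in its own played epoch). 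Multiplying the value bound $H$ by the epoch count $J\lesssim HSA\log(\cdot)/\log(1+\tfrac1\alpha)$ from Lemma~\ref{lemma: doubling trick} and by $\psi_K^\tau\lesssim\mathbb{E}[\tau]$ then yields the additive penalty $H^2SA\,\mathbb{E}[\tau]/\log(1+\tfrac1\alpha)$.

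Finally I would collect the failure events---$\delta'$ for the delay bound (Lemma~\ref{lemma: delay failure event}), $4\delta'$ for the decomposition (Lemma~\ref{lemma: regret decomposition}) and $\delta'$ for Lemma~\ref{lemma: lazy counters}---by a union bound and set $\delta$ proportional to $\delta'$. The main obstacle is the clean split in the third step: one must argue that the in-flight episodes can be excised from the bonus expansion and charged only at the value level, while the remaining ``caught-up'' visits are exactly those governed by the observed within-epoch counts $n_j$ so that Lemma~\ref{lemma: lazy counters} applies verbatim. This is, in effect, a proof that lazy updating really does present the base algorithm with a simulated non-delayed stream, with the delays surfacing solely through the $J\psi_K^\tau$ residual.
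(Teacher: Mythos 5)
Your proposal follows essentially the same route as the paper's proof: split the regret by epochs, excise the in-flight (long-delay) episodes and charge them at the value level by $H$ each---at most $\psi_{K}^{\tau}$ per epoch across $J \lesssim HSA\log(\cdot)/\log(1+\tfrac{1}{\alpha})$ epochs via Lemma~\ref{lemma: doubling trick}---and bound the bonuses of episodes played and observed within the same epoch via Lemma~\ref{lemma: lazy counters}, which is delay-independent and recovers the $(1+\tfrac{1}{\alpha})\hat{\mathfrak{R}}_{K}(Base)$ term, exactly as in the paper's three-way decomposition. The only detail you gloss over is that the update episodes $k_{j}$ themselves, together with the state-action-step triples that triggered each update (for which the within-epoch observed count can \emph{exceed} $\tfrac{1}{\alpha}N'_{k_{j}h}(s,a)$ rather than satisfy your stated inequality), must also be excised and charged by $H$ each, contributing the lower-order $HJ$ term the paper isolates as term $(i)$.
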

\begin{proof}
By optimism and utilising the fact that epochs are disjoint sets of episodes, with probability $1 - \delta'$: 
\begin{align*}
\mathfrak{R}_K &\leq \tilde{\mathfrak{R}}_K  \coloneqq \sum_{k = 1}^{K}\tilde{\Delta}_{1}^{k}\left(s_1^k\right) =  \sum_{j = 1}^{J}\sum_{k = k_{j}}^{k_{j + 1} - 1}\tilde{\Delta}_{1}^{k}\left(s_1^k\right)\\
&\leq HJ + \sum_{j = 1}^{J} \sum_{k = k_{j} + 1}^{k_{j + 1} - 1}\tilde{\Delta}_{1}^{k}\left(s_1^k\right)
\end{align*}
where the final inequality follows from separating the episodes where we update and bounding their contribution to the regret by $HJ$.\\

Handling the remaining summation in the regret bound requires a little more care, which we do by splitting the remaining sum into two sets; episodes with short and long delays. An episode has a short delay if it is played and observed in the same epoch, $\mathds{1}\{k + \tau_k < k_{j + 1}\}$. Otherwise, it has a long delay, $\mathds{1}\{k + \tau_k \geq k_{j + 1}\}$.\\

One can show that the regret of episodes with long delays has the following upper bound:
$$
\sum_{j = 1}^{J} \sum_{k = k_{j} + 1}^{k_{j + 1} - 1}\tilde{\Delta}_{1}^{k}\left(s_1^k\right)\mathds{1}\{k + \tau_k \geq k_{j + 1}\} \leq H \sum_{j = 1}^{J}S_{k_{j + 1}}
$$
Aforementioned, $S_{k} \leq \psi_{K}^{\tau}$ for all $k \leq K$ with probability $1 - \delta'$. Therefore, we can upper bound the regret of episodes with long delays by $HJ\psi_{K}^{\tau}$.\\

All that remains is bounding the regret of episodes with short delays. Applying Lemma \ref{lemma: regret decomposition} to these episodes and re-arranging gives:\footnote{Here, we have omitted lower order terms for brevity.}
\begin{align*}
    \sum_{j = 1}^{J} \sum_{k = k_{j} + 1}^{k_{j + 1} - 1}\tilde{\Delta}_{1}^{k}\left(s_1^k\right)\mathds{1}\{k + \tau_k < k_{j + 1}\} \lesssim \sum_{s, a, h}\sum_{j = 1}^{J} n_{k_{j + 1}}^{k_{j + 1}}(s, a) \beta_{kh}\left(s, a\right)\mathds{1}\{k + \tau_{k} < k_{j + 1}\}
\end{align*}
where we have omitted the state-action-step triples that caused the update from the summation. By construction, all the state-action-step triples satisfy the conditions of Lemma \ref{lemma: lazy counters}. Applying this result to the summation of the bonuses and combining the contributions of the other terms gives the result. See Appendix \ref{sec: lazy regret bound proof} for a full proof of the claim.
\end{proof}

\subsection{Discussion}\label{sec: discussion}
Table \ref{tab: algorithms} presents a selection of algorithms that fit into our framework and their accompanying theoretical guarantees when using the active and lazy updating meta-algorithms to handle delayed feedback. In particular, we see that acting in delayed environments causes an additive increase in regret for almost all combinations of optimistic base algorithms and meta-algorithms considered. This result mirrors what is seen in the bandit setting where algorithms incur an additive regret penalty involving $\mathbb{E}[\tau]$ \citep{PJ2013}.

\begin{center}
\begin{table*}[h!]
\label{tab: algorithms}
\resizebox{\textwidth}{!}{%
    \begin{tabular}{l l l l l l l }
        \\
        \toprule
        \textbf{Base Algorithm} & $C$ & $\hat{\mathfrak{R}}_{K}(\text{Base})$ & \textbf{Active Updating} & \textbf{Lazy Updating}\\
        \midrule
        UBEV \citep{UBEV} & $1$ & $H^{3/2}\sqrt{SAT}$ & $\hat{\mathfrak{R}}_{K}(\text{Base}) + H^3 S^2 A \mathbb{E}[\tau]$ & $(1 + \frac{1}{\alpha})\,\hat{\mathfrak{R}}_{K}(Base)  + \frac{ H^2 SA \mathbb{E}[\tau]}{\log(1 + \frac{1}{\alpha})}$\\
        UCBVI-CH \citep{UCBVI} & $1$ & $H^{3/2}\sqrt{SAT}$ & $\hat{\mathfrak{R}}_{K}(\text{Base}) + H^3 S^2 A \mathbb{E}[\tau]$ & $(1 + \frac{1}{\alpha})\,\hat{\mathfrak{R}}_{K}(Base)  + \frac{ H^2 SA \mathbb{E}[\tau]}{\log(1 + \frac{1}{\alpha})}$\\
        UCRL2 \citep{UCRL2} & $0$ & $H^{3/2}S\sqrt{AT}$ & $\hat{\mathfrak{R}}_{K}(\text{Base}) + H^2 S^{3/2}A \mathbb{E}[\tau]$ & $(1 + \frac{1}{\alpha})\,\hat{\mathfrak{R}}_{K}(Base)  + \frac{ H^2 SA \mathbb{E}[\tau]}{\log(1 + \frac{1}{\alpha})}$\\
        KL-UCRL \citep{KLUCRL} & $0$ & $H^{3/2}S\sqrt{AT}$ & $\hat{\mathfrak{R}}_{K}(\text{Base}) + H^2 S^{3/2}A \mathbb{E}[\tau]$& $(1 + \frac{1}{\alpha})\,\hat{\mathfrak{R}}_{K}(Base)  + \frac{ H^2 SA \mathbb{E}[\tau]}{\log(1 + \frac{1}{\alpha})}$\\
        UCRL2B \citep{UCRL2B} & $0$ & $H\sqrt{S\Gamma AT}$ & $\sqrt{H}\hat{\mathfrak{R}}_{K}(\text{Base}) + H^2 S^2 A \mathbb{E}[\tau]$ & $(1 + \frac{1}{\alpha})\,\hat{\mathfrak{R}}_{K}(Base)  + \frac{ H^2 SA \mathbb{E}[\tau]}{\log(1 + \frac{1}{\alpha})}$\\
        $\chi^2$-UCRL \citep{PB2020} & $0$ & $HS\sqrt{AT}$ & $\sqrt{H}\hat{\mathfrak{R}}_{K}(\text{Base}) + H^2 S^2 A \mathbb{E}[\tau]$ & $(1 + \frac{1}{\alpha})\,\hat{\mathfrak{R}}_{K}(Base)  + \frac{ H^2 SA \mathbb{E}[\tau]}{\log(1 + \frac{1}{\alpha})}$\\
        UCBVI-BF \citep{UCBVI} & $1$ & $H\sqrt{SAT}$ & $\sqrt{H} \hat{\mathfrak{R}}_{K}(\text{Base}) + H^3 S^2 A \mathbb{E}[\tau]$ & $(1 + \frac{1}{\alpha})\,\hat{\mathfrak{R}}_{K}(Base)  + \frac{ H^2 SA \mathbb{E}[\tau]}{\log(1 + \frac{1}{\alpha})}$\\
        \bottomrule
    \end{tabular}}
\caption{A selection of algorithms that fit into our framework and their regret bounds under delayed feedback. Here, $\Gamma \leq S$ denotes a uniform upper bound on the number of reachable states.}
\end{table*}
\end{center} 

For active updating and some base algorithms, we found that the additive delay dependence comes at the price of a penalty to the leading order term in the regret bound. Namely, an extra $\sqrt{H}$. This extra penalty multiplying the leading order term is a feature of the theoretical analysis. Another important factor influencing the impact of the delays when using active updating is the parameter $C$. The penalty for delayed feedback is higher when $C = 1$. The worsened delay dependence for these algorithms is due to the introduction of lower-order terms in the probabilistic analysis under immediate feedback, which allows for tighter bonuses. Unfortunately, these lower-order terms become dependent on the delays in our setting and thus lead to a worse delay dependence.\\

To rectify the undesirable penalty to the leading order terms and the dependence on $C$, we developed an alternative approach called lazy updating, which achieves the same additive delay dependence for all algorithms that fit into our framework with only a logarithmic penalty to the leading order term in the regret bound of the base algorithm under immediate feedback. This approach works by introducing an additional hyperparameter that controls how frequently the base algorithm updates its policy. We denote this hyperparameter by $\alpha$ and name it the activity parameter. Theorem \ref{theorem: lazy updating} indicates that there is a trade-off when selecting $\alpha$. On the one hand, we would like to choose a large value of $\alpha$ to minimise the penalty to the leading order term, which is arises from the slower updating. On the other hand, the penalty introduced by the delays is a strictly increasing function of $\alpha$, making large values undesirable. As $\alpha \rightarrow \infty$, lazy updating tends to active updating; at this limiting value, lazy updating will update as soon as it receives new feedback, just like active updating. Thus, the empirical performance of lazy updating should get closer to active updating as $\alpha$ increases. In Section \ref{sec: experiments}, we demonstrate that this is the case and show that it is possible to get most of the benefits of active updating with a relatively modest value of $\alpha$, which has better worst-case regret bounds in the delayed feedback setting. \\

Comparatively, our work significantly improves the regret bounds for many algorithms in the delayed feedback setting. \citet{DAMDP} presents regret bounds for stochastic MDPs of the form $H^{3/2}S\sqrt{AT} + H^2 S \tau_{\max}$ for all optimistic algorithms. Except for UCRL2 and KL-UCRL, the leading order term in their regret bound is loose in either $H$, $S$ or both. Conversely, the leading order terms in our regret bounds are tight for all algorithms when utilising lazy updating and are only loose by a factor of $\sqrt{H}$ for a few algorithms when utilising active updating. Furthermore, $\mathbb{E}[\tau] \ll \tau_{\max}$ in almost all scenarios. As a result, our regret bounds have a tighter delay dependence. Our algorithms also remove the need for \textit{a-priori} knowledge of the maximal delay.\\

The setting of delayed feedback also generalises the case where only the rewards are delayed. Thus, our theoretical results also hold for this setting if we directly apply active or lazy updating. However, one could do better in this case by realising that it is only the delays impacting the rewards, meaning it is only necessary to apply the meta-algorithms to the estimation of the rewards. We expect the additive penalty to be $HSA\mathbb{E}[\tau]$. Indeed, the improved delay-dependence is due to the fact that learning the expected reward function is an easier task than learning the transitions. We prove that this is indeed the case for UCRL2 algorithm of \cite{UCRL2} in Appendix \ref{sec: missing proof for delayed rewards}.

\section{Experimental Results}\label{sec: experiments}

In this section, we investigate the impact of delayed feedback on the regret of active and lazy updating in the chain environment of \citet{ChainMDP}. Briefly, this environment consists of a sequence of $S$ states arranged side-by-side. The learner starts in the left-most state and has to decide between $A = 2$ actions, head left or right. Each episode consists of $H = S$ decisions and the only state with a reward is the right-most state. Thus, the optimal policy is to head right at every step. Heading left is always successful. However, heading right is successful with probability $1 - 1/S$. If unsuccessful, the learner moves one state to the left. Notably, any inefficient exploration strategy will take at least $2^{S}$ episodes to learn the optimal policy \citep{ChainMDP}.\\

We consider chains with $H = S \in \{5, 10, 20, 30\}$ and use UCBVI-BF as the base algorithm in all of our experiments as it has the best regret guarantees under immediate feedback. For our lazy updating approach, we selected several values for the activity hyperparameter, $\alpha \in \{1, 10, 100\}$. In all our experiments, we set the confidence parameter of the base algorithm so that the regret bounds hold with probability $0.95$. Additionally, we compare our meta-algorithms to the explicit exploration procedure proposed by \citet{DAMDP}. Their procedure requires prior knowledge of the maximum delay, which we provide by generating all the delays before the first episode and taking the maximum. In practice, the maximum delay is often unknown and possibly infinite, making this approach infeasible. \\

Our experiments consider Constant, Geometric, Poisson and Uniform delays. For each of these distributions, we consider the following expected delays: $\mathbb{E}[\tau] \in \{0, 100, 200, 300, 400, 500\}$.\footnote{For the uniformly distributed delays, we set the lower and upper limits to $0$ and $2\mathbb{E}[\tau]$, respectively.} All results are averaged over $30$ independent runs and the shaded regions in all the figures contain 95\% of our empirical results.\\

\begin{figure*}[h!]
    \centering
    \includegraphics[width = \textwidth]{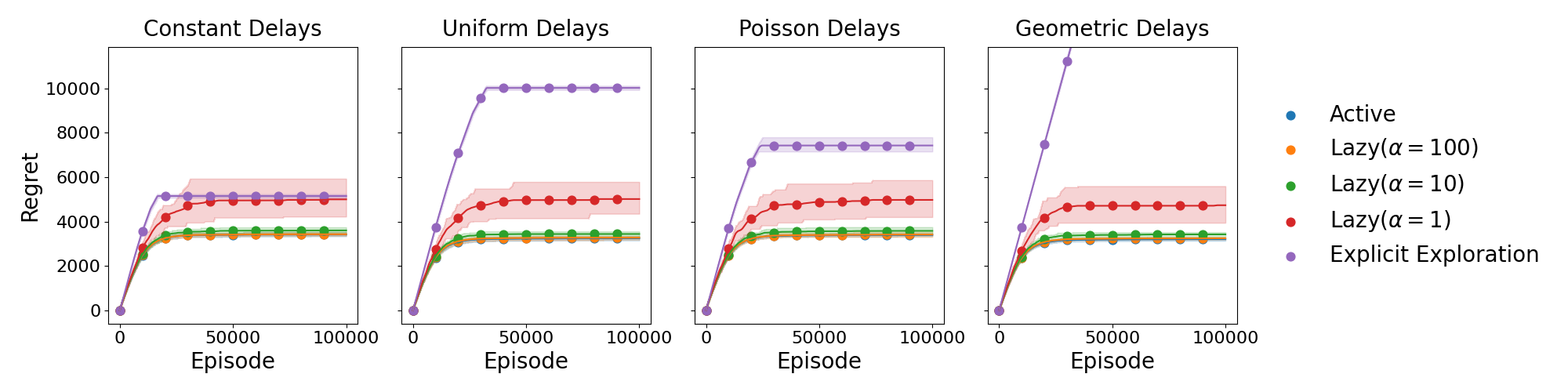}
    \caption{Cumulative Regret $\left(S = 30, \mathbb{E}[\tau] = 100\right)$.}
    \label{fig: chain_regret_100}
\end{figure*}

Figure \ref{fig: chain_regret_100} displays the results for our experiments in the chain environment with $S= 30$ and $\mathbb{E}[\tau] = 100$. The results for the other chain lengths and expected values are in Appendix \ref{sec: additional experiments}. Empirically, active updating achieves the best performance of all three meta-algorithms. However, our experimental results suggest that it is possible to get near identical performance with lazy updating by setting $\alpha$ to be a large enough constant. Both active and lazy updating offer superior performance to the explicit exploration approach of \citet{DAMDP} in all of our experiments, despite their meta-algorithm having prior knowledge of the delays. In some cases, our meta-algorithms have converged to the optimal policy before the explicit exploration procedure finishes; e.g. see Appendix \ref{sec: additional experiments}.\\

\begin{figure*}[h!]
    \centering
    \includegraphics[width = \textwidth]{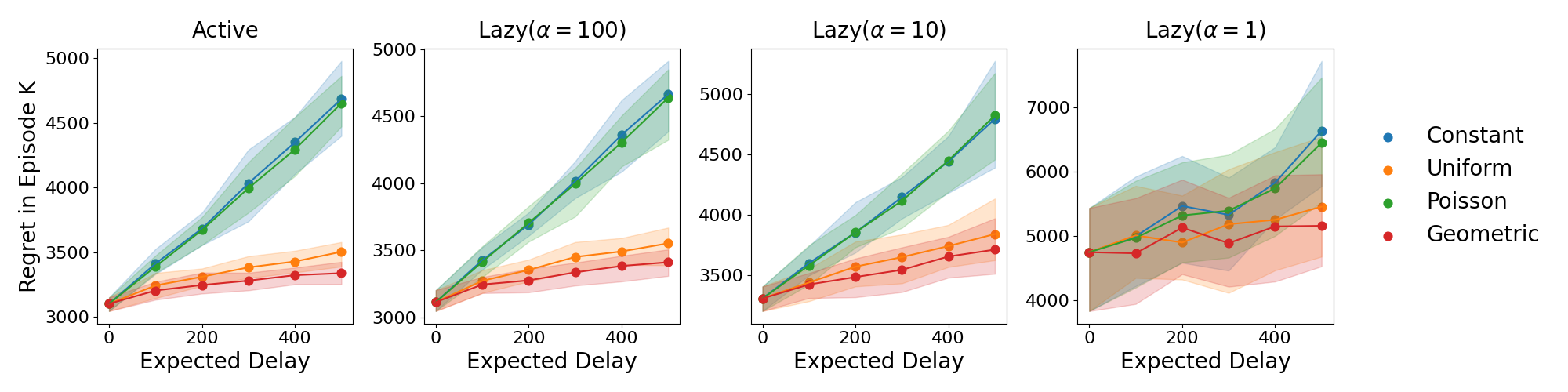}
    \caption{Delay Dependence $\left(S = 30\right)$.}
    \label{fig: chain_delays}
\end{figure*}

Next, we turn to considering the impact of different delay distributions on the regret of our meta-algorithms. Empirically, Figure \ref{fig: chain_delays} shows that the regret penalty of delays at the end of the final episode is linear in the expected delay for active updating and lazy updating, as our theory predicts. For lazy updating, the gradient of this linear relationship decreases with $\alpha$, which is to be expected based on the $\log(1 + 1/\alpha)$ term in the denominator of the delay-dependent terms in our regret bounds. Interestingly, lazy updating with $\alpha = 1$ is the most robust to the delay distribution. We believe that this is due to forcing the base algorithm to wait for long periods of time between updates. Intuitively, if the epochs are long enough, most information within an epoch will be received before an update, leading to little loss of information. Investigating this further is an interesting avenue for future work.

\section{Conclusion}
In this paper, we provide two generic meta-algorithms that can extend any episodic reinforcement learning base algorithm to the setting of delayed feedback. Under mild assumptions on the algorithm and the delays, we show that both maintain the sub-linear theoretical guarantees of the chosen base algorithm and provide good empirical performance, regardless of the delay distribution. These first positive results for stochastically delayed feedback in episodic reinforcement learning prove that the penalty for delays is an additive term involving the expected delay that is independent of the number of episodes. This additive penalty matches what is seen in the multi-armed bandit setting, despite the additional complexities of the reinforcement learning problem.\\

Our framework is broad enough to cover the theoretically successful class of optimistic model-based algorithms, and many existing algorithms fit into our framework. However, we believe that both updating procedures could be used for a wider class of base algorithms. For example, model-free optimistic algorithms and posterior sampling \citep{QLearning, ChainMDP}. Extending our analyses to cover these algorithms is left to future work.
\newpage
\bibliographystyle{plainnat}
\bibliography{references}

\begin{thebibliography}{22}
\providecommand{\natexlab}[1]{#1}
\providecommand{\url}[1]{\texttt{#1}}
\expandafter\ifx\csname urlstyle\endcsname\relax
  \providecommand{\doi}[1]{doi: #1}\else
  \providecommand{\doi}{doi: \begingroup \urlstyle{rm}\Url}\fi

\bibitem[Agarwal and Duchi(2011)]{AA2011}
Alekh Agarwal and John~C Duchi.
\newblock {Distributed Delayed Stochastic Optimization}.
\newblock In J.~Shawe-Taylor, R.~Zemel, P.~Bartlett, F.~Pereira, and K.~Q.
  Weinberger, editors, \emph{Advances in Neural Information Processing
  Systems}, volume~24. Curran Associates, Inc., 2011.

\bibitem[Azar et~al.(2017)Azar, Osband, and Munos]{UCBVI}
Mohammad~Gheshlaghi Azar, Ian Osband, and R{\'e}mi Munos.
\newblock {Minimax Regret Bounds for Reinforcement Learning}.
\newblock In Doina Precup and Yee~Whye Teh, editors, \emph{Proceedings of the
  34th International Conference on Machine Learning}, volume~70, pages
  263--272. PMLR, 2017.

\bibitem[Dann et~al.(2017)Dann, Lattimore, and Brunskill]{UBEV}
Christoph Dann, Tor Lattimore, and Emma Brunskill.
\newblock {Unifying PAC and Regret: Uniform PAC Bounds for Episodic
  Reinforcement Learning}.
\newblock In \emph{Proceedings of the 31st International Conference on Neural
  Information Processing Systems}, page 5717–5727. Curran Associates Inc.,
  2017.

\bibitem[Domingues et~al.(2020)Domingues, Ménard, Kaufmann, and
  Valko]{TDMDPLB}
Omar~Darwiche Domingues, Pierre Ménard, Emilie Kaufmann, and Michal Valko.
\newblock {Episodic Reinforcement Learning in Finite MDPs: Minimax Lower Bounds
  Revisited}.
\newblock In \emph{Proceedings of the 32nd International Conference on
  Algorithmic Learning Theory}, 2020.

\bibitem[Dudik et~al.(2011)Dudik, Hsu, Kale, Karampatziakis, Langford, Reyzin,
  and Zhang]{MD2011}
Miroslav Dudik, Daniel Hsu, Satyen Kale, Nikos Karampatziakis, John Langford,
  Lev Reyzin, and Tong Zhang.
\newblock {Efficient Optimal Learning for Contextual Bandits}.
\newblock In \emph{Proceedings of the 27th Conference on Uncertainty in
  Artificial Intelligence}, page 169–178. AUAI Press, 2011.

\bibitem[Filippi et~al.(2010)Filippi, Cappé, and Garivier]{KLUCRL}
Sarah Filippi, Olivier Cappé, and Aurélien Garivier.
\newblock {Optimism in Reinforcement Learning and Kullback-Leibler Divergence}.
\newblock In \emph{48th Annual Allerton Conference on Communication, Control,
  and Computing (Allerton)}, pages 115--122, 2010.

\bibitem[Fruit et~al.(2020)Fruit, Pirotta, and Lazaric]{UCRL2B}
Ronan Fruit, Matteo Pirotta, and Alessandro Lazaric.
\newblock {Improved Analysis of UCRL2 with Empirical Bernstein Inequality},
  2020.
\newblock URL \url{https://arxiv.org/abs/2007.05456}.

\bibitem[Jaksch et~al.(2010)Jaksch, Ortner, and Auer]{UCRL2}
Thomas Jaksch, Ronald Ortner, and Peter Auer.
\newblock {Near-Optimal Regret Bounds for Reinforcement Learning}.
\newblock \emph{Journal of Machine Learning Research}, 11:\penalty0
  1563–1600, August 2010.

\bibitem[Jin et~al.(2018)Jin, Allen-Zhu, Bubeck, and Jordan]{QLearning}
Chi Jin, Zeyuan Allen-Zhu, Sebastien Bubeck, and Michael~I Jordan.
\newblock Is q-learning provably efficient?
\newblock In S.~Bengio, H.~Wallach, H.~Larochelle, K.~Grauman, N.~Cesa-Bianchi,
  and R.~Garnett, editors, \emph{Advances in Neural Information Processing
  Systems}, volume~31. Curran Associates, Inc., 2018.

\bibitem[Joulani et~al.(2013)Joulani, Gy\"{o}rgy, and Szepesv\'{a}ri]{PJ2013}
Pooria Joulani, Andr\'{a}s Gy\"{o}rgy, and Csaba Szepesv\'{a}ri.
\newblock {Online Learning under Delayed Feedback}.
\newblock In \emph{Proceedings of the 30th International Conference on
  International Conference on Machine Learning}, volume~28, page 1453–1461.
  JMLR.org, 2013.

\bibitem[Katsikopoulos and Engelbrecht(2003)]{KVK2003}
K.V. Katsikopoulos and S.E. Engelbrecht.
\newblock Markov decision processes with delays and asynchronous cost
  collection.
\newblock \emph{IEEE Transactions on Automatic Control}, 48\penalty0
  (4):\penalty0 568--574, 2003.

\bibitem[Lancewicki et~al.(2021)Lancewicki, Rosenberg, and Mansour]{DAMDP}
Tal Lancewicki, Aviv Rosenberg, and Yishay Mansour.
\newblock {Learning Adversarial Markov Decision Processes with Delayed
  Feedback}, 2021.
\newblock URL \url{https://arxiv.org/abs/2012.14843}.

\bibitem[Liese and Vajda(2006)]{Liese2006}
Friedrich Liese and Igor Vajda.
\newblock {On Divergences and Informations in Statistics and Information
  Theory}.
\newblock \emph{IEEE Transactions on Information Theory}, 52:\penalty0
  4394--4412, 2006.

\bibitem[Mandel et~al.(2015)Mandel, Liu, Brunskill, and Popović]{Mandel2015}
Travis Mandel, Yun-En Liu, Emma Brunskill, and Zoran Popović.
\newblock {The Queue Method: Handling Delay, Heuristics, Prior Data, and
  Evaluation in Bandits}.
\newblock \emph{Proceedings of the AAAI Conference on Artificial Intelligence},
  29\penalty0 (1), Feb. 2015.
\newblock \doi{10.1609/aaai.v29i1.9604}.
\newblock URL \url{https://ojs.aaai.org/index.php/AAAI/article/view/9604}.

\bibitem[Manegueu et~al.(2020)Manegueu, Vernade, Carpentier, and Valko]{AM2020}
Anne~Gael Manegueu, Claire Vernade, Alexandra Carpentier, and Michal Valko.
\newblock {Stochastic Bandits with Arm-Dependent Delays}.
\newblock In \emph{Proceedings of the 37th International Conference on
  International Conference on Machine Learning - Volume 28}. JMLR.org, 2020.

\bibitem[Neu and Pike-Burke(2020)]{PB2020}
Gergely Neu and Ciara Pike-Burke.
\newblock {A Unifying View of Optimism in Episodic Reinforcement Learning}.
\newblock In \emph{Proceedings of the 31st International Conference on Neural
  Information Processing Systems}. Curran Associates Inc., 2020.

\bibitem[Osband and Van~Roy(2017)]{ChainMDP}
Ian Osband and Benjamin Van~Roy.
\newblock Why is posterior sampling better than optimism for reinforcement
  learning?
\newblock In \emph{Proceedings of the 34th International Conference on Machine
  Learning - Volume 70}, ICML'17, page 2701–2710. JMLR.org, 2017.

\bibitem[Pike-Burke et~al.(2018)Pike-Burke, Agrawal, Szepesvari, and
  Grunewalder]{Pike-Burke2018}
Ciara Pike-Burke, Shipra Agrawal, Csaba Szepesvari, and Steffen Grunewalder.
\newblock Bandits with delayed, aggregated anonymous feedback.
\newblock In Jennifer Dy and Andreas Krause, editors, \emph{Proceedings of the
  35th International Conference on Machine Learning}, volume~80 of
  \emph{Proceedings of Machine Learning Research}, pages 4105--4113. PMLR,
  10--15 Jul 2018.

\bibitem[Puterman(1994)]{MP1994}
Martin~L. Puterman.
\newblock \emph{Markov Decision Processes: Discrete Stochastic Dynamic
  Programming}.
\newblock Wiley, 1994.

\bibitem[Vernade et~al.(2017)Vernade, Capp{\'e}, and Perchet]{CV2017}
Claire Vernade, Olivier Capp{\'e}, and Vianney Perchet.
\newblock {Stochastic Bandit Models for Delayed Conversions}.
\newblock In \emph{In Proceedings of the 33rd Conference on Uncertainty in
  Artificial Intelligence}. AUAI Press, 2017.

\bibitem[Vernade et~al.(2020)Vernade, Carpentier, Lattimore, Zappella, Ermis,
  and Brueckner]{CV2020}
Claire Vernade, Alexandra Carpentier, Tor Lattimore, Giovanni Zappella, Beyza
  Ermis, and Michael Brueckner.
\newblock Linear bandits with stochastic delayed feedback, 2020.
\newblock URL \url{https://arxiv.org/abs/1807.02089}.

\bibitem[Zhou et~al.(2019)Zhou, Xu, and Blanchet]{Zhou2019}
Zhengyuan Zhou, Renyuan Xu, and Jose Blanchet.
\newblock {Learning in Generalized Linear Contextual Bandits with Stochastic
  Delays}.
\newblock In H.~Wallach, H.~Larochelle, A.~Beygelzimer, F.~d\textquotesingle
  Alch\'{e}-Buc, E.~Fox, and R.~Garnett, editors, \emph{Advances in Neural
  Information Processing Systems}, volume~32. Curran Associates, Inc., 2019.
\newblock URL
  \url{https://proceedings.neurips.cc/paper/2019/file/56cb94cb34617aeadff1e79b53f38354-Paper.pdf}.

\end{thebibliography}

\newpage
\begin{appendix}
\section{Missing Proofs}
\subsection{Bounding the Missing Episodes}\label{sec: delay failure event}
An important aspect in our proofs is to bound the amount of missing information. Since we see only one state-action pair per step of an episode, an upper bound on the missing visitation counter is simply the number of missing episodes. Lemma \ref{lemma: delay failure event} bounds the number of missing episodes with high probability and only requires the delays have a finite expected value. 

\delays*
\begin{proof}
By definition, the summation involves a sequence of independent indicator random variables. Considering its expectation reveals that: 
\begin{align*}
    \mathbb{E}\left[S_k\right] &=  \sum_{i = 1}^{k - 1}\mathbb{E}\left[\mathds{1}\left\{i + \tau_{i} \geq k\right\}\right] =  \sum_{i = 1}^{k - 1}\mathbb{P}\left[\mathds{1}\left\{i + \tau_{i} \geq k\right\}\right] = \sum_{i = 1}^{k - 1}\mathbb{P}\left[\,\tau_{k - i} > i\,\right] = \sum_{i = 0}^{k - 2} \mathbb{P}\left[\,\tau_{k - i + 1} > i\,\right]\\
    &\leq \sum_{i = 0}^{\infty} \mathbb{P}\left[\,\tau > i\,\right] = \sum_{i = 0}^{\infty} \sum_{j = i + 1}^{\infty} \mathbb{P}\left[\,\tau = j\,\right] = \sum_{j = 1}^{\infty} \sum_{i = 0}^{j - 1} \mathbb{P}\left[\,\tau = j\,\right]= \sum_{j = 1}^{\infty} j\,\mathbb{P}\left[\,\tau = j\,\right]\\
    &= \mathbb{E}\left[\tau\right].
\end{align*}
Next, looking at its variance reveals that: 
\begin{align*}
    \text{Var}\left(S_k\right) &= \sum_{i = 1}^{k - 1}\text{Var}\left(\mathds{1}\left\{i + \tau_{i} \geq k\right\}\right) = \sum_{i = 1}^{k - 1}\mathbb{E}\left[\left(\mathds{1}\left\{i + \tau_{i} \geq k\right\} - \mathbb{E}\left[\mathds{1}\left\{i + \tau_{i} \geq k\right\}\right]\right)^2\right]\\
    &\leq \sum_{i = 1}^{k - 1} \mathbb{E}\left[\mathds{1}\left\{i + \tau_{i}\geq k\right\}^2\right] = \sum_{i = 1}^{k - 1} \mathbb{E}\left[\mathds{1}\left\{i + \tau_{i}\geq k\right\}\right] = \mathbb{E}\left[S_{k}\right]\\
    &\leq \mathbb{E}\left[\tau\right]
\end{align*}
By Bernstein's inequality, we have that: 
\begin{align*}
    \mathbb{P}\left(S_{k} - \mathbb{E}\left[S_{k}\right] \geq \epsilon\right) \leq \exp\left(-\frac{ \epsilon^{2}}{\text{Var}\left(S_{k}\right) + \frac{\epsilon}{3}}\right) = \frac{6\delta'}{\left(k\pi\right)^2}
\end{align*}
Rearranging the above reveals that: 
\begin{align*}
    \epsilon \leq \frac{1}{3}\log\left(\frac{\left(k\pi\right)^2}{6\delta'}\right) + \sqrt{\text{Var}\left(S_{k}\right)\log\left(\frac{\left(k\pi\right)^2}{6\delta'}\right)}\leq \frac{2}{3}\log\left(\frac{k\pi}{6\delta'}\right) + \sqrt{2\mathbb{E}\left[\tau\right]\log\left(\frac{k\pi}{6\delta'}\right)}
\end{align*}
Since $k \leq K$, we have that: 
 \begin{align*}
    \mathbb{P}\left(F_{k}^{\tau}\right) = \mathbb{P}\left(S_{k} - \mathbb{E}\left[\tau\right] \geq \frac{2}{3}\log\left(\frac{K\pi}{6\delta'}\right) + \sqrt{2\mathbb{E}\left[\tau\right]\log\left(\frac{K\pi}{6\delta'}\right)}\right) \leq \frac{6\delta'}{(k\pi)^2}
\end{align*}
By Boole's inequality, we have that: 
\begin{align*}
    \mathbb{P}\left(\bigcup_{k = 1}^{\infty} F_{k}^{\tau}\right) & \leq \sum_{k = 1}^{\infty} \mathbb{P}\left(F_{k}^{\tau}\right) = \frac{6\delta'}{\pi^2}\sum_{k = 1}^{\infty}\frac{1}{k^2} = \delta'
\end{align*}
as required. 
\end{proof}

\subsection{Missing Proofs for Active Updating}\label{sec: missing proofs for active}
Lemma \ref{lemma: regret decomposition} (the regret decomposition) and Equation \eqref{eqn: exploration bonus} (the form of the exploration bonuses) reveal that the summation of the counters is an important quantity in determining the regret of an optimistic algorithm. Whenever $\tau_{k} = 0$ for all $k \leq K$, e.g. immediate feedback, we can use standard results that utilise the fact the counters increase by one between successive plays of a state-action pair at a given step. 

\begin{lemma}\label{lemma: p-series}
Let $Z_n^p = \sum_{n = 0}^{N} 1/(1\lor n)^p$. Then, $Z_n^p$ has the following upper bound: 
\begin{align*}
    Z_n^p \leq 
    \begin{cases}
    2\sqrt{N} & \text{if } p \in \frac{1}{2}\\
    \log\left(8N\right) & \text{if } p = 1
    \end{cases}
\end{align*}
for $p = 1/2$ and $p = 1$. 
\end{lemma}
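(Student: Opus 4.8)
The plan is to prove both bounds by the standard integral comparison for monotone decreasing summands, treating the two cases $p = 1/2$ and $p = 1$ in parallel. The only subtlety is the $1 \lor n$ inside the summand, which forces the terms at $n = 0$ and $n = 1$ to both equal $1$ regardless of $p$. So first I would peel these two terms off, writing $Z_n^p = 2 + \sum_{n = 2}^{N} n^{-p}$, which leaves a clean tail over $n \geq 2$ where $1 \lor n = n$.

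For the tail I would use that $x \mapsto x^{-p}$ is decreasing on $[1, \infty)$ for $p > 0$. Hence for every $n \geq 2$ we have $n^{-p} \leq \int_{n - 1}^{n} x^{-p}\, dx$, and summing over $n = 2, \dots, N$ telescopes to give $\sum_{n = 2}^{N} n^{-p} \leq \int_{1}^{N} x^{-p}\, dx$. Evaluating the integral gives $2\sqrt{N} - 2$ when $p = 1/2$ and $\log N$ when $p = 1$.

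It then remains to add back the peeled constant $2$ and simplify in each case. For $p = 1/2$ the bound reads $Z_n^{1/2} \leq 2 + (2\sqrt{N} - 2) = 2\sqrt{N}$, matching the claim exactly. For $p = 1$ we obtain $Z_n^{1} \leq 2 + \log N$, and I would finish by absorbing the constant into the logarithm via $2 + \log N = \log(e^2 N) \leq \log(8N)$, where the last step uses $e^2 \leq 8$, i.e. $\log 8 \geq 2$ for the natural logarithm.

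I do not anticipate a genuine obstacle, as the argument is elementary. The only points needing care are the off-by-one bookkeeping around the $1 \lor n$ terms (making sure exactly two summands are peeled off and that the integral lower limit is $1$ rather than $2$) and the numerical check $e^2 \leq 8$ that keeps the $p = 1$ constant clean. Both are routine, so the real value of the lemma is simply to record these two inequalities in the precise form later invoked by Lemma \ref{lemma: summation bound}.
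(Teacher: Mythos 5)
Your proposal is correct and follows essentially the same route as the paper: peel off the two unit terms at $n = 0$ and $n = 1$, bound the tail $\sum_{n=2}^{N} n^{-p}$ by $\int_{1}^{N} x^{-p}\,dx$, and absorb the constant $2$ into $\log(8N)$ in the $p = 1$ case. Your explicit justification of the last step via $e^{2} \leq 8$ is a detail the paper leaves implicit, but the argument is otherwise identical.
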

\begin{proof}
Removing the first two terms from the summation and upper bounding the remaining terms by an integral gives:
\begin{align*}
    Z_n^p &= 2 + \sum_{n = 2}^{N}\frac{1}{n^p}\leq 2 + \int_{1}^{N} \frac{1}{n^p} dn \leq 2 + 
    \begin{cases}
    2\sqrt{N} - 2 & \text{if } p \in \frac{1}{2}\\
    \log\left(N\right) & \text{if } p = 1
    \end{cases}\\
    &\leq  
    \begin{cases}
    2\sqrt{N} & \text{if } p \in \frac{1}{2}\\
    \log\left(8N\right) & \text{if } p = 1
    \end{cases}
\end{align*}
as required.
\end{proof}

When $\tau_{k}$ is random, the observed visitation counter need not increase by one between successive plays of the same state-action-step. Instead, the counter only increases by one (or more in some cases) after a random number of episodes. In the worst-case scenario, the counter will remain constant between playing and observing the feedback associated with a specific state-action-step. Thus, the standard techniques no longer apply, and we must find another way to bound the summation of counters than can remain unchanged for numerous episodes due to the delays. We do this by relating the summation involving the observed visitation counter to one involving the total visitation counter, thereby splitting the terms affected by the delays from those that are not.

\summation*
\begin{proof}
Unless otherwise stated, we let: $N_{kh}(s, a) = 1\lor N_{kh}(s, a)$ and $N_{kh}'(s, a) = 1\lor N_{kh}'(s, a)$ for notational convenience. First, we use the relationships between the observed, missing and total visitation counters to split the summation into two parts. To do so, in a similar manner to \citet{DAMDP}, we start by artificially introducing the total visitation counter:
\begin{align*}
    Z_{T}^{p}&=\sum_{k = 1}^{K}\sum_{h = 1}^{H} \left(\frac{1}{N_{kh}'\left(s_{h}^{k}, a_{h}^{k}\right)}\right)^{p} = \sum_{k, h}\left(\frac{1}{N_{kh}\left(s_{h}^{k}, a_{h}^{k}\right)}\right)^{p} \left(\frac{N_{kh}\left(s_{h}^{k}, a_{h}^{k}\right)}{N_{kh}'\left(s_{h}^{k}, a_{h}^{k}\right)}\right)^{p}
\end{align*}
From Equation \eqref{eqn: visitation counter relationships}, $N_{kh}(s, a) = N_{kh}'(s, a) + N_{kh}''(s, a)$, for any $(s, a, h)\in \mathcal{S}\times \mathcal{A}\times [H]$. Consequently,
\begin{align*}
    Z_{T}^{p} &\leq \underbrace{\sum_{k, h}\left(\frac{1}{N_{kh}\left(s_{h}^{k}, a_{h}^{k}\right)}\right)^{p}}_{(i)} + \underbrace{\sum_{k, h}\left(\frac{1}{N_{kh}\left(s_{h}^{k}, a_{h}^{k}\right)} \frac{N_{kh}''\left(s_{h}^{k}, a_{h}^{k}\right)}{N_{kh}'\left(s_{h}^{k}, a_{h}^{k}\right)}\right)^{p}}_{(ii)},
\end{align*}
since $(1+x)^p \leq 1 + x^p$ for $p = 1/2$ and $p = 1$ and any $x>0$. Term $(i)$ is the summation of the total visitation counter. Thus, Lemma \ref{lemma: p-series} applies.\\

Bounding $(ii)$ requires more care, as it involves the observed and missing visitation counters. Recall that the algorithm plays one state-action pair at each step in every episode. Thus, the missing visitation counter is upper bounded by the number of missing episodes: $N_{kh}''(s, a) \leq S_k$. Lemma \ref{lemma: delay failure event} bounds the number of missing episodes: with probability $1 - \delta'$, $S_{k} \leq \psi_{K}^{\tau}$ across all $k \in \mathbb{Z}^{+}$. Splitting $(ii)$ using the observed visitation counts and the upper bound on $S_k$ gives: 
\begingroup
\allowdisplaybreaks
\begin{align*}
    (ii) &\leq \sum_{k, h}\left(\frac{\mathds{1}\left\{N_{kh}'\left(s_{h}^{k}, a_{h}^{k}\right) \geq \psi_{K}^{\tau}\right\}\psi_{K}^{\tau}}{N_{kh}\left(s_{h}^{k}, a_{h}^{k}\right)N_{kh}'\left(s_{h}^{k}, a_{h}^{k}\right)} \right)^{p} + \sum_{k, h}\left(\frac{\mathds{1}\left\{N_{kh}'\left(s_{h}^{k}, a_{h}^{k}\right) \leq \psi_{K}^{\tau}\right\}\psi_{K}^{\tau}}{N_{kh}\left(s_{h}^{k}, a_{h}^{k}\right) N_{kh}'\left(s_{h}^{k}, a_{h}^{k}\right)} \right)^{p}\nonumber\\
    &\leq \underbrace{\sum_{k, h}\left(\frac{\mathds{1}\left\{N_{kh}'\left(s_{h}^{k}, a_{h}^{k}\right) \geq \psi_{K}^{\tau}\right\}}{N_{kh}\left(s_{h}^{k}, a_{h}^{k}\right)} \right)^{p}}_{(ii.a)} + \underbrace{\sum_{k, h}\left(\frac{\mathds{1}\left\{N_{kh}'\left(s_{h}^{k}, a_{h}^{k}\right) \leq \psi_{K}^{\tau}\right\}\psi_{K}^{\tau}}{N_{kh}\left(s_{h}^{k}, a_{h}^{k}\right)N_{kh}'\left(s_{h}^{k}, a_{h}^{k}\right)}\right)^{p}}_{(ii.b)}
\end{align*}
\endgroup
The last inequality follows since for the first sum,
$N_{kh}'(s, a) \geq \psi_{K}^{\tau}$.\\

Clearly, $(ii.a) \leq (i)$, as it is a summation over a subset of all the episodes. Using \eqref{eqn: visitation counter relationships}, it is possible to rewrite the indicator in the remaining term as: $\mathds{1}\{N_{kh}(s, a) - N_{kh}''(s, a) \leq \psi_{K}^{\tau}\}$, for any $(s, a, h)\in \mathcal{S}\times \mathcal{A}\times [H]$. Further, $N_{kh}''(s, a) \leq \psi_{K}^{\tau}$ and $N_{kh}'(s, a) \geq 1$. Therefore,  
\begin{align*}
    (ii.b) &\leq (\psi_{K}^{\tau})^{p} \sum_{k, h} \left(\frac{\mathds{1}\left\{N_{kh}\left(s_{h}^{k}, a_{h}^{k}\right)  \leq 2\psi_{K}^{\tau}\right\}}{N_{kh}\left(s_{h}^{k}, a_{h}^{k}\right)}\right)^{p}\\
    &\leq (\psi_{K}^{\tau})^{p}\sum_{s, a, h}\sum_{n = 0}^{2\psi_{K}^{\tau}}\frac{1}{(1\lor n)^p}
\end{align*}
Lemma \ref{lemma: p-series} gives an upper bound of $\sum_{n = 0}^{N}1/(1\lor n)^p$. Summing this upper bound over all state-action-step triples gives: 
\begin{align*}
    (ii.b) \leq \begin{cases}
    3HSA\psi_{K}^{\tau} & \text{if } p = \frac{1}{2}\\
    HSA\psi_{K}^{\tau}\log\left(16\psi_{K}^{\tau}\right) & \text{if } p = 1
    \end{cases}
\end{align*}
Therefore: 
\begin{align*}
    Z_{T}^{p} &\leq 2A +  B.2 \\
    &\leq 
    \begin{cases}
    4\sqrt{HSAT} + 3HSA\psi_{K}^{\tau} & \text{if } p = \frac{1}{2}\\
    HSA\left(2\log\left(8T\right) + \psi_{K}^{\tau}\log\left(16\psi_{K}^{\tau}\right)\right) & \text{if } p = 1
    \end{cases}
\end{align*}
as required.
\end{proof}

\subsection{Missing Proofs for Lazy Updating}\label{sec: lazy updating proof}

When using active updating, we prove that the bound on the counts depends on the delay. However, we can mitigate this delay-dependence by taking a slower approach to updating, providing that the number of epochs is bounded and the counts between epochs satisfy certain constraints outlined in Section \ref{sec: lazy updating}.  

\epochs*
\begin{proof}
In this proof, we extend arguments from the standard doubling trick of \citet{UCRL2} so that the learner can update more frequently. Firstly, we recall the definition of the observed visitation counter:\footnote{We move the subscript denoting the step into the bracket for notational convenience} 
\begin{align*}
    N_{k}'\left(s, a, h\right) &= \sum_{i = 1}^{k - 1}\mathds{1}\left\{\left(s_{h}^{i}, a_{h}^{i}\right) = \left(s, a\right), i + \tau_{i} < k\right\}
\end{align*}
and the updating rule for $j\geq 1$:
\begin{align*}
    k_{j + 1} = \argmin_{k > k_{j}} \left\{\exists s, a, h: N_{k}'(s, a, h) \geq \left(1 + \frac{1}{\alpha}\right)N_{k_{j}}'(s, a, h)\right\}
\end{align*}

Now, we define a counter that counts the observed number of visits between two episodes:
\begin{align*}
    n_{k}^{l}\left(s, a, h\right) &= \sum_{i = 1}^{l - 1}\mathds{1}\left\{\left(s_{h}^{i}, a_{h}^{i}\right) = \left(s, a\right), k \leq i + \tau_{i} < l\right\}
\end{align*}

Direct computation allows us to relate the observed visitation counter at the start of the $(j + 1)$-th epoch to the sum of the observed visitation counts within each of the previous epochs: 

\begin{align*}
    N_{k_{j + 1}}'\left(s, a, h\right) &= \sum_{i = 1}^{k_{j + 1} - 1}\mathds{1}\left\{\left(s_{h}^{i}, a_{h}^{i}\right) = \left(s, a\right), i + \tau_{i} < k\right\}\\
    &= \sum_{l = 1}^{j}\sum_{i = k_{l}}^{k_{l + 1} - 1}\mathds{1}\left\{\left(s_{h}^{i}, a_{h}^{i}\right) = \left(s, a\right), i + \tau_{i} < k\right\}\\
    &= \sum_{l = 1}^{j}\sum_{i = 1}^{k_{j + 1}}\mathds{1}\left\{\left(s_{h}^{i}, a_{h}^{i}\right) = \left(s, a\right), k_{l} \leq i + \tau_{i} < k\right\}\\
    &= \sum_{l = 1}^{j} n_{k_{l}}^{k_{l + 1}}\left(s, a, h\right)
\end{align*}
where the second equality follows from the fact that an epoch is a disjoint set of episodes and the final equality follows from the definition of the between episodes visitation counter. From the above, it is easy to see that

\begin{align*}
    N_{k_{j + 1}}'\left(s, a, h\right)
    = n_{k_{j}}^{k_{j + 1}}\left(s, a, h\right) + \sum_{l = 1}^{j - 1} n_{k_{l}}^{k_{l + 1}}\left(s, a, h\right) = n_{k_{j}}^{k_{j + 1}}\left(s, a, h\right) +  N_{k_{j}}'\left(s, a, h\right)
\end{align*}
Thus, we can re-write the updating rule using the within episode counter as:
\begin{align*}
    k_{j + 1} &= \argmin_{k > k_{j}} \left\{\exists s, a, h: N_{k}'(s, a, h) \geq \left(1 + \frac{1}{\alpha}\right)N_{k_{j}}'(s, a, h)\right\}\\
    &= \argmin_{k > k_{j}} \left\{\exists s, a, h: N_{k}'(s, a, h) - N_{k_{j}}'(s, a, h) \geq \frac{1}{\alpha}N_{k_{j}}'(s, a, h)\right\}\\
    &= \argmin_{k > k_{j}} \left\{\exists s, a, h: n_{k_{j}}^{k_{j + 1}}\left(s, a, h\right) \geq \frac{1}{\alpha}N_{k_{j}}'(s, a, h)\right\}
\end{align*}
providing that we have seen the state-action-step at least once.\footnote{We handle the case for the epochs where the observed visitation count is zero later on in the proof.} Therefore, at the end of each epoch there is a state-action-step with $n_{k_{j}}^{k_{j + 1}}\left(s, a, h\right) \geq N_{k_{j}}'(s, a, h)/\alpha$.

Suppose $N_{(K + 1) h}'(s, a) > 0$ for a fixed $(s, a, h)\in \mathcal{S}\times \mathcal{A}\times [H]$. Define $J(s, a, h)$ as the number of epochs with $n_{k_{j}}^{k_{j + 1}}\left(s, a, h\right) \geq N_{k_{j} h}'(s, a)/\alpha$. Or, equivalently, it is the number of epochs with $N_{k_{j + 1}}\left(s, a, h\right) \geq (1 + 1/\alpha)N_{k_{j}}'(s, a, h)$. Then,
\begin{align*}
    N_{K + 1}'\left(s, a, h\right) &= \sum_{j = 1}^{J}  n_{k_{j}}^{k_{j + 1}}\left(s, a, h\right)\\
    &\geq 1 + \sum_{j : n_{k_{j}}^{k_{j + 1}}\left(s, a, h\right) \geq N_{k_{j} }'(s, a, h)/\alpha}n_{k_{j}}^{k_{j + 1}}\left(s, a, h\right)\\
    &\geq 1 + \frac{1}{\alpha}\sum_{j : n_{k_{j}}^{k_{j + 1}}\left(s, a, h\right) \geq N_{k_{j}}'(s, a, h)/\alpha}N_{k_{j}}'(s, a, h)\\
    &\geq 1 + \frac{1}{\alpha}\sum_{j = 1}^{J(s, a, h)}\left(1 + \frac{1}{\alpha}\right)^{j}
\end{align*}

The first inequality follows from focusing only on the epochs where we update due to $(s, a, h)$, where the $+ 1$ accounts for the first update due to the observing the given state-action-step triple. The second inequality follows from the condition in the subscript of the summation, e.g. we are updating due to $(s, a, h)$. The final inequality follows from the definition of how we trigger updates and because we update $J(s, a, h)$ times due to $(s,a,h)$. Since $\alpha \in [1, \infty)$, Lemma \ref{lemma: geometric lazy} applies. Rearranging terms reveals that: 
\begin{align*}
    \sum_{j = 1}^{J(s, a, h)}\left(1 + \frac{1}{\alpha}\right)^{j} \geq \left(1 + \frac{1}{\alpha}\right)^{J(s, a, h) + 1} - \left(1 + \frac{1}{\alpha}\right)
\end{align*}
Therefore, for $N_{K + 1}'(s, a, h) > 0$:
\begin{align*}
    N_{K + 1}'\left(s, a, h\right) &\geq 1 - \frac{1}{\alpha}\left(1 + \frac{1}{\alpha}\right) + \frac{1}{\alpha}\left(1 + \frac{1}{\alpha}\right)^{J(s, a, h) + 1} >  \frac{1}{\alpha}\left(1 + \frac{1}{\alpha}\right)^{J(s, a, h) + 1} - \frac{1}{\alpha}\left(1 + \frac{1}{\alpha}\right)
\end{align*}
If $N_{K + 1}'(s, a, h) = 0$ it follows we never update due to this state-action-step triple, which means that $J(s, a, h) = 0$ too. Plugging this into the above expression reveals that:
$$
N_{K + 1}'(s, a, h) = \frac{1}{\alpha}\left(1 + \frac{1}{\alpha}\right)^{J(s, a, h) + 1} - \frac{1}{\alpha}\left(1 + \frac{1}{\alpha}\right) = 0
$$
Thus, for all possible values of the observed visitation counter, we have that:
\begin{align*}
    N_{K + 1}'\left(s, a, h\right) &\geq \frac{1}{\alpha}\left(1 + \frac{1}{\alpha}\right)^{J(s, a, h) + 1} - \frac{1}{\alpha}\left(1 + \frac{1}{\alpha}\right)
\end{align*}

Using the above inequality, we have that
\begin{align*}
    T &= \sum_{s, a, h} N_{K + 1}(s, a, h)\\
    &\geq \sum_{s, a, h} N_{K + 1}'(s, a, h)\\
    &\geq \sum_{s, a, h}\left(\frac{1}{\alpha}\left(1 + \frac{1}{\alpha}\right)^{J(s, a, h) + 1} - \frac{1}{\alpha}\left(1 + \frac{1}{\alpha}\right)\right)\\
    &= -\frac{HSA}{\alpha}\left(1 + \frac{1}{\alpha}\right) + \sum_{s, a, h}\frac{1}{\alpha}\left(1 + \frac{1}{\alpha}\right)^{J(s, a, h) + 1}\\
    &\geq -\frac{HSA}{\alpha}\left(1 + \frac{1}{\alpha}\right) + \frac{HSA}{\alpha}\left(1 + \frac{1}{\alpha}\right)^{\frac{HSA + \sum_{s, a, h}J(s, a, h)}{HSA}}\tag{Jensen's inequality}\\
    &\geq -\frac{HSA}{\alpha}\left(1 + \frac{1}{\alpha}\right) + \frac{HSA}{\alpha}\left(1 + \frac{1}{\alpha}\right)^{\frac{J}{HSA}}
\end{align*}
where the final line follows from the fact that $J \leq HSA + \sum_{s, a, h}J(s, a, h)$ because we may or may not visit every state-action-step. Rearranging this gives:
\begin{align*}
    \frac{T\alpha}{HSA} + 1 \geq \left(1 + \frac{1}{\alpha}\right)^{\frac{J}{HSA}}
\end{align*}
Taking logs of both sides and rearranging one last time gives: 
\begin{align*}
    J&\leq HSA \log_{1 + 1/\alpha}\left(\frac{T\alpha}{HSA} + 1 \right)\\
    &= \frac{ HSA \log\left(\frac{T\alpha}{HSA} + 1 \right)}{\log(1 + \frac{1}{\alpha})}\\
    &=  \frac{ HSA \log\left(\frac{K \alpha + SA}{SA}\right)}{\log(1 + \frac{1}{\alpha})}
\end{align*}
as required. 
\end{proof}

\lazycounters*
\begin{proof}
We prove the claim via induction in a similar manner to \cite{UCRL2}. First, consider the case where $p = 1/2$. Suppose 
\begin{align*}
    \sum_{j = 1}^{J - 1}n_{j} \leq 1 &\implies N_{1} = N_2 = \cdots = N_{J - 1} = 1 \tag{$N_{j - 1} = \max\{1, \sum_{i = 0}^{j - 1} n_i$\}}\\
    &\implies n_{J} \in [0, N_{J - 1}] = \left[0, \frac{1}{\alpha}\right]
\end{align*}
Then, 
\begin{align*}
    \sum_{j = 1}^{J}\frac{n_{j}}{\sqrt{N_{j - 1}}} = \sum_{j = 1}^{J} n_{j} = n_J +  \sum_{j = 1}^{J - 1} n_{j} \leq \frac{1}{\alpha} + 1 \leq c \sqrt{N_J} \tag{For $c \geq 1 + 1/\alpha$}
\end{align*}
because $N_J \geq 1$. The above is our base case and covers us as long as $\sum_{j = 1}^{J - 1} n_{j} \leq 1$ e.g., when $J = 1$ due to $n_0 \coloneqq 1$. Now, we assume the above holds for $\sum_{j = 1}^{J - 1}n_{j} > 1$: 
\begin{align*}
    \sum_{j = 1}^{J - 1}\frac{n_{j}}{\sqrt{N_{j - 1}}} \leq c \sqrt{N_{J - 1}}
\end{align*}
Finally, we prove the claim holds for $J$:
\begin{align*}
    \sum_{j = 1}^{J}\frac{n_{j}}{\sqrt{N_{j - 1}}} &= \frac{n_{J}}{\sqrt{N_{J - 1}}} + \sum_{j = 1}^{J - 1}\frac{n_{j}}{\sqrt{N_{j - 1}}}\\
    &\leq  \frac{n_{J}}{\sqrt{N_{J - 1}}} + c \sqrt{N_{J - 1}} \tag{Induction Hypothesis}\\
    &= \sqrt{\left(\frac{n_{J}}{\sqrt{N_{J - 1}}} + c \sqrt{N_{J - 1}}\right)^2}\\
    &= \sqrt{\frac{n_J^2}{N_{J - 1}} + 2c n_J + c^2 N_{J - 1} }\\
    &\leq \sqrt{\frac{1}{\alpha} n_J + 2c n_J + c^2 N_{J - 1} } \tag{As $n_J \in [0, N_{J - 1}/\alpha$}\\
    &\leq \sqrt{n_J + 2c n_J + c^2 N_{J - 1} } \tag{As $\alpha \geq 1$}\\
    &= \sqrt{\left(1 + 2c\right) n_J + c^2 N_{J - 1}}\\
    &\leq c\sqrt{n_J + N_{J - 1}} \tag{Pick $c: c^2 \geq 1 + 2c$}\\
    &= c \sqrt{N_J}
\end{align*}
where the final inequality follows from the fact that $\sum_{j = 1}^{J - 1}n_{j} > 1 \implies N_{J} = n_J + N_{J - 1}$. All that remains is selecting $c$. Using the quadratic formula to find the roots of $c^2 - 2c - 1 = 0$, one can deduce that selecting:
$$
c = 1 + \sqrt{2}\left(1 + \frac{1}{\alpha}\right)
$$
satisfies $c \geq 1 + 1/\alpha$ and 

\begin{align*}
    c^2 &= 1 + 2\sqrt{2}\left(1 + \frac{1}{\alpha}\right) + 2\left(1 + \frac{1}{\alpha}\right)^2\\
    &\geq 1 + 2\sqrt{2}\left(1 + \frac{1}{\alpha}\right) + \tag{$\alpha \geq 1$}\\
    &= 1 + 2\left(1 + \sqrt{2}\left(1 + \frac{1}{\alpha}\right)\right)\\
    &= 1 + 2c
\end{align*}
giving the required result. All that remains is to prove the claim for $p = 1$. Similarly to before, suppose:
\begin{align*}
    \sum_{j = 1}^{J - 1}n_{j} \leq 1 &\implies N_{1} = N_2 = \cdots = N_{J - 1} = 1 \tag{$N_{j - 1} = \max\{1, \sum_{i = 0}^{j - 1} n_i$\}}\\
    &\implies n_{J} \in [0, N_{J - 1}] = \left[0, \frac{1}{\alpha}\right]
\end{align*}
Then, 
\begin{align*}
    \sum_{j = 1}^{J}\frac{n_{j}}{N_{j - 1}} = \sum_{j = 1}^{J} n_{j} = n_J +  \sum_{j = 1}^{J - 1} n_{j} \leq \frac{1}{\alpha} + 1 \leq \left( 1 + \frac{1}{\alpha}\right) + \left( 1 + \frac{1}{\alpha}\right) \log(N_{J})
\end{align*}
because $N_J \geq 1$. The above is our base case and covers us as long as $\sum_{j = 1}^{J - 1} n_{j} \leq 1$ e.g., when $J = 1$ due to $n_0 \coloneqq 1$. Now, we assume the above holds for $\sum_{j = 1}^{J - 1}n_{j} > 1$: 
\begin{align*}
    \sum_{j = 1}^{J - 1}\frac{n_{j}}{N_{j - 1}} \leq \left( 1 + \frac{1}{\alpha}\right) + \left( 1 + \frac{1}{\alpha}\right) \log(N_{J - 1})
\end{align*}
Finally, we prove the claim holds for $J$: 
\begin{align*}
    \sum_{j = 1}^{J}\frac{n_{j}}{N_{j - 1}} &= \frac{n_{J}}{N_{J - 1}} + \sum_{j = 1}^{J - 1}\frac{n_{j}}{N_{j - 1}}\\
    &\leq \frac{n_{J}}{N_{J - 1}} + \left( 1 + \frac{1}{\alpha}\right) + \left( 1 + \frac{1}{\alpha}\right) \log(N_{J - 1}) \tag{Induction Hypothesis}\\
    &\leq \left( 1 + \frac{1}{\alpha}\right) \log\left(\frac{n_{J}}{N_{J - 1}} + 1\right) + \left( 1 + \frac{1}{\alpha}\right) + \left( 1 + \frac{1}{\alpha}\right) \log(N_{J - 1})\\
    &= \left( 1 + \frac{1}{\alpha}\right) + \left( 1 + \frac{1}{\alpha}\right) \log\left(N_{J - 1}\left(\frac{n_{J}}{N_{J - 1}} + 1\right)\right)\\
    &= \left( 1 + \frac{1}{\alpha}\right) + \left( 1 + \frac{1}{\alpha}\right) \log\left(n_{J} + N_{J - 1}\right)\\
    &= \left( 1 + \frac{1}{\alpha}\right) + \left( 1 + \frac{1}{\alpha}\right) \log\left(N_{J}\right)
\end{align*}
where the final inequality follows from the fact that $n_j/N_{j - 1} \in [0, 1]$ for all $j \leq J$.
\end{proof}

\begin{lemma}\label{lemma: geometric lazy}
Let $\alpha \in [1, \infty)$. Then
\begin{align*}
    \sum_{i = 0}^{n}\left(1 + \frac{1}{\alpha}\right)^{i} \geq \left(1 + \frac{1}{\alpha}\right)^{n + 1} - \frac{1}{\alpha}
\end{align*}
\end{lemma}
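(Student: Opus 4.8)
The plan is to evaluate the left-hand side in closed form and then reduce the claimed bound to a one-line comparison. Writing $r = 1 + 1/\alpha$, the hypothesis $\alpha \in [1, \infty)$ gives $r \geq 1$ and, crucially, $r - 1 = 1/\alpha > 0$, so the finite geometric series has the exact value
\begin{equation*}
\sum_{i = 0}^{n}\left(1 + \frac{1}{\alpha}\right)^{i} = \frac{r^{n + 1} - 1}{r - 1} = \alpha\left(r^{n + 1} - 1\right).
\end{equation*}
Substituting this into the desired inequality, the claim $\alpha\left(r^{n + 1} - 1\right) \geq r^{n + 1} - 1/\alpha$ rearranges to the equivalent statement $(\alpha - 1)\,r^{n + 1} \geq \alpha - 1/\alpha$.

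Next I would observe that the right-hand side factors conveniently: $\alpha - 1/\alpha = (\alpha^2 - 1)/\alpha = (\alpha - 1)(\alpha + 1)/\alpha = (\alpha - 1)\,r$, using $r = (\alpha + 1)/\alpha$. Hence it suffices to prove $(\alpha - 1)\,r^{n + 1} \geq (\alpha - 1)\,r$. This I would establish directly: since $\alpha \geq 1$ we have $\alpha - 1 \geq 0$, and since $r \geq 1$ with $n + 1 \geq 1$ we have $r^{n + 1} \geq r$; multiplying the inequality $r^{n + 1} \geq r$ by the nonnegative factor $\alpha - 1$ yields the result. Keeping $\alpha - 1$ as a common factor on both sides, rather than dividing it out, is precisely what lets the argument absorb the boundary case $\alpha = 1$ with no separate treatment, since there both sides collapse to zero.

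There is essentially no hard step here; the only point requiring care is exactly that one must \emph{not} divide through by $\alpha - 1$ (which vanishes at $\alpha = 1$), but instead retain it as a common nonnegative factor. As a sanity check I would verify the base case $n = 0$, where the left-hand side equals $1$ and the right-hand side equals $r - 1/\alpha = 1$, so the bound holds with equality. An entirely equivalent route would be a short induction on $n$ in the style of the preceding lemmas, but the closed-form geometric sum makes the direct computation the shortest path.
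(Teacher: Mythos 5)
Your proof is correct, but it takes a genuinely different route from the paper's. The paper proves the lemma by induction on $n$: the base case $n = 0$ holds with equality, and the inductive step writes $\sum_{i=0}^{n+1} r^i = r^{n+1} + \sum_{i=0}^{n} r^i \geq 2r^{n+1} - \tfrac{1}{\alpha}$ and then uses $2 \geq 1 + \tfrac{1}{\alpha} = r$ (valid since $\alpha \geq 1$) to conclude $2r^{n+1} - \tfrac{1}{\alpha} \geq r^{n+2} - \tfrac{1}{\alpha}$. You instead evaluate the sum in closed form, $\sum_{i=0}^{n} r^i = \alpha\left(r^{n+1} - 1\right)$ — legitimate because $r - 1 = \tfrac{1}{\alpha} > 0$ for finite $\alpha$ — and reduce the claim to $(\alpha - 1)\,r^{n+1} \geq (\alpha - 1)\,r$, which follows from $\alpha - 1 \geq 0$ and $r^{n+1} \geq r$. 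Both arguments are complete; your care in keeping $\alpha - 1$ as a common nonnegative factor rather than dividing by it correctly absorbs the boundary case $\alpha = 1$. What each approach buys: yours is exact and therefore exposes precisely where the inequality is loose (the slack is $(\alpha - 1)\left(r^{n+1} - r\right)$, so equality holds for all $n$ when $\alpha = 1$ and only at $n = 0$ otherwise), whereas the paper's induction introduces slack through the step $2 \geq r$ but matches the style of the neighbouring results (Lemma \ref{lemma: lazy counters} is likewise proved inductively) and avoids invoking the geometric-series formula and its nondegeneracy condition $r \neq 1$ altogether.
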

\begin{proof}
Trivially, the statement is true for $n = 0$, because $(1 + 1/\alpha)^{0} = 1$ and $(1 + 1/\alpha)^{1} - 1/\alpha = 1$. Thus, we proceed by induction. Suppose
\begin{align*}
    \sum_{i = 0}^{n}\left(1 + \frac{1}{\alpha} \right)^{i} \geq \left(1 + \frac{1}{\alpha}\right)^{n + 1} - \frac{1}{\alpha}
\end{align*}
for some $n$. Then 
\begin{align*}
    \sum_{i = 0}^{n + 1}\left(1 + \frac{1}{\alpha}\right)^{i} &= \left(1 + \frac{1}{\alpha}\right)^{n + 1} + \sum_{i = 0}^{n}\left(1 + \frac{1}{\alpha}\right)^{i}\\
    &\geq \left(1 + \frac{1}{\alpha}\right)^{n + 1} + \left(1 + \frac{1}{\alpha}\right)^{n + 1} - \frac{1}{\alpha}\\
    &= 2\left(1 + \frac{1}{\alpha}\right)^{n + 1} - \frac{1}{\alpha}\\
    &\geq \left(1 + \frac{1}{\alpha}\right)\left(1 + \frac{1}{\alpha}\right)^{n + 1} - \frac{1}{\alpha} \tag{Since $2 \ge 1 + 1/\alpha$}\\
    &= \left(1 + \frac{1}{\alpha}\right)^{n + 2} - \frac{1}{\alpha}
\end{align*}
Thus, the claim holds for $n + 1$, which proves the lemma for all $n \geq 0$.
\end{proof}

\begin{lemma}\label{lemma: lazy lower order}
Algorithm \ref{alg: Lazy Algorithm} ensures that the summation of the counters across the episodes where we do not update have the following upper bounds:
\begin{align*}
    \sum_{s, a, h}\sum_{j = 1}^{J}\frac{n_{k_{j} + 1, h}^{k_{j + 1}}(s, a)}{N_{k_{j}h}'\left(s, a\right)}\leq \left(1 + \frac{1}{\alpha}\right)HSA + \left(1 + \frac{1}{\alpha}\right)HSA\log\left(\frac{K}{SA}\right) \leq 2\left(1 + \frac{1}{\alpha}\right)HSA\left(\frac{K}{SA}\right)
\end{align*}
where the final inequality holds for $K/SA \geq \exp(1)$.
\end{lemma}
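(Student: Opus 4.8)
The plan is to collapse the double sum into a per-triple application of Lemma~\ref{lemma: lazy counters} with $p = 1$, and then to aggregate the resulting logarithmic terms over state-action-step triples by concavity of the logarithm. First I would exchange the order of summation and fix a triple $(s, a, h)$, so that it suffices to bound the inner sum $\sum_{j = 1}^{J} n_{k_{j} + 1, h}^{k_{j + 1}}(s, a)/N_{k_{j} h}'(s, a)$ over the epochs. The aim is to identify the increments $n_{j} \coloneqq n_{k_{j} + 1, h}^{k_{j + 1}}(s, a)$ and the running totals $N_{j - 1} \coloneqq N_{k_{j} h}'(s, a)$ with the sequences $n_{j}$ and $N_{j-1}$ appearing in Lemma~\ref{lemma: lazy counters}.

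To justify this identification I would check the two hypotheses of that lemma. For the growth constraint $n_{j} \leq \tfrac{1}{\alpha} N_{j - 1}$, I would argue directly from the updating rule \eqref{eqn: updating rule v1}: since no epoch boundary occurs strictly between $k_{j}$ and $k_{j + 1}$, the observed visits to $(s, a, h)$ accumulated within the epoch never reach the $\tfrac{1}{\alpha}$-fraction threshold, so $n_{k_{j} + 1, h}^{k_{j + 1}}(s, a) \leq \tfrac{1}{\alpha} N_{k_{j} h}'(s, a)$. For the cumulative relationship $N_{j - 1} = \max\{1, \sum_{i < j} n_{i}\}$, I would reuse the telescoping established in the proof of Lemma~\ref{lemma: doubling trick}: because the epochs partition the episodes and each observed feedback falls into exactly one arrival window, the within-epoch observed counts sum to the observed count at the start of the current epoch. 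Applying Lemma~\ref{lemma: lazy counters} with $p = 1$ then gives $\sum_{j} n_{j}/N_{j - 1} \leq (1 + \tfrac{1}{\alpha}) + (1 + \tfrac{1}{\alpha}) \log(N_{J})$, where $N_{J} \leq N_{(K + 1) h}(s, a)$ is the total number of visits to $(s, a, h)$.

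It then remains to sum over the triples. Writing the per-triple bound as $(1 + \tfrac{1}{\alpha}) HSA + (1 + \tfrac{1}{\alpha}) \sum_{s, a, h} \log N_{(K + 1) h}(s, a)$, I would control the logarithmic sum by Jensen's inequality. Only the $m \leq HSA$ visited triples contribute, and since $\sum_{s, a, h} N_{(K + 1) h}(s, a) = T = KH$ while $m \mapsto m \log(T/m)$ is increasing for $m \leq T/e$, the hypothesis $K/SA \geq \exp(1)$ (equivalently $HSA \leq T/e$) ensures the worst case is attained when all $HSA$ triples are visited. This yields $\sum_{s, a, h} \log N_{(K + 1) h}(s, a) \leq HSA \log(T/HSA) = HSA \log(K/SA)$, giving the first stated inequality. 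The second inequality is then immediate, as $1 + \log(K/SA) \leq 2(K/SA)$ whenever $K/SA \geq 1$ (indeed $\log x \leq x$).

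The step I expect to be the main obstacle is the hypothesis-checking of the second paragraph, namely disentangling the delays from the epoch structure. Two points require care. The triple that actually triggers an update may slightly overshoot the $\tfrac{1}{\alpha}$ threshold, because several delayed observations can arrive simultaneously at the epoch boundary; I would handle this by noting that the triggering triples are accounted for separately in Theorem~\ref{theorem: lazy updating} (they feed the $HJ$-type term, so bounding the full sum over all triples only loosens the estimate). There are also off-by-one discrepancies between the strict inequality defining $N_{kh}'$ in \eqref{eqn: observed visitation counter} and the non-strict one defining $n_{kh}^{l}$ in \eqref{eqn: within counter}; these contribute only lower-order terms that are absorbed into the constants, so they do not affect the logarithmic order of the final bound.
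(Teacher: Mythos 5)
Your proposal is correct and follows the paper's own proof essentially step for step: a per-triple application of Lemma \ref{lemma: lazy counters} with $p = 1$, aggregation of the logarithmic terms via Jensen's inequality together with $\sum_{s,a,h} N_{J}(s,a,h) \leq T = KH$, and the elementary final inequality under $K/SA \geq \exp(1)$. The boundary issues you flag (overshoot by triggering triples and off-by-one mismatches between the counters) are treated in the paper at the same level of rigour, via an indicator restricting the sum to epoch--triple pairs satisfying the $\tfrac{1}{\alpha}$-threshold condition, so your handling is equivalent.
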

\begin{proof}
To prove the result, we extend the summation to include the state-action-step triples in episode $k_j$ that did not trigger the update rule: 
\begin{align*}
    \sum_{s, a, h}\sum_{j = 1}^{J}\frac{n_{k_{j} + 1, h}^{k_{j + 1}}(s, a)}{N_{k_{j}h}'\left(s, a\right)} &\leq \sum_{s, a, h}\sum_{j = 1}^{J}\frac{n_{k_{j} h}^{k_{j + 1}}(s, a)}{N_{k_{j}h}'\left(s, a\right)} \mathds{1}\left\{n_{k_j h}(s, a) \leq \frac{1}{\alpha} N_{k_j h}'(s, a)\right\} \\
    &\leq \sum_{s, a, h}\left(\left(1 + \frac{1}{\alpha}\right) + \left(1 + \frac{1}{\alpha}\right)\log\left(N_{J}(s, a, h)\right)\right)\tag{Lemma \ref{lemma: lazy counters}}\\
    &= \left(1 + \frac{1}{\alpha}\right)HSA + \left(1 + \frac{1}{\alpha}\right)\sum_{s, a, h}\log\left(N_{J}(s, a, h)\right)\tag{Expand Summation}\\
    &\leq \left(1 + \frac{1}{\alpha}\right)HSA + \left(1 + \frac{1}{\alpha}\right)HSA\log\left(\frac{\sum_{s, a, h}N_{J}(s, a, h)}{HSA}\right)\tag{Jensen's}\\
    &\leq \left(1 + \frac{1}{\alpha}\right)HSA + \left(1 + \frac{1}{\alpha}\right)HSA\log\left(\frac{T}{HSA}\right)\tag{$\sum_{s, a, h}N_{J}(s, a, h) \leq T$}\\
    &= \left(1 + \frac{1}{\alpha}\right)HSA + \left(1 + \frac{1}{\alpha}\right)HSA\log\left(\frac{K}{SA}\right) \tag{$T = KH$}\\
    &\leq 2\left(1 + \frac{1}{\alpha}\right)HSA\log\left(\frac{K}{SA}\right)
\end{align*}
for $K/SA \geq \exp(1)$, as required.
\end{proof}

\subsection{Proof of Regret Bound for Lazy Updating}\label{sec: lazy regret bound proof}
\lazy*
\begin{tproof}
Let $\tilde{\Delta}_{h}^{k}(s) = \tilde{V}_{h}^{\pi_{k}}\left(s\right) - V_{h}^{\pi_{k}}\left(s\right)$ denote the difference between the optimistic and actual value of policy $\pi_{k}$ from state $s$ and step $h$. By definition, the regret of any episodic reinforcement learning algorithm is given by: 
\begin{align*}
    \mathfrak{R}_{K} &= \sum_{k = 1}^{K}\left( V_{1}^{*}\left(s_{1}^{k}\right) - V_{1}^{\pi_{k}}\left(s_{1}^{k}\right) \right)\\
    &\leq \sum_{k = 1}^{K}\left( \tilde{V}_{1}^{\pi_{k}}\left(s_{1}^{k}\right) - V_{1}^{\pi_{k}}\left(s_{1}^{k}\right) \right) = \sum_{k = 1}^{K}\tilde{\Delta}_{1}^{k}\left(s_{1}^{k}\right) = \sum_{j = 1}^{J}\sum_{k = k_{j}}^{k_{j + 1} - 1}\tilde{\Delta}_{1}^{k}\left(s_{1}^{k}\right) \\
    &= \sum_{j = 1}^{J}\sum_{k = k_{j}}^{k_{j + 1} - 1}\tilde{\Delta}_{1}^{k}\left(s_{1}^{k}\right)\mathds{1}\left\{k + \tau_{k} \geq k_{j + 1}\right\} = \sum_{j = 1}^{J}\tilde{\Delta}_{1}^{k_j}\left(s_{1}^{k_j}\right) + \sum_{j = 1}^{J}\sum_{k = k_{j}}^{k_{j + 1} - 1}\tilde{\Delta}_{1}^{k}\left(s_{1}^{k}\right) \\
    &= \underbrace{\sum_{j = 1}^{J}\tilde{\Delta}_{1}^{k_j}\left(s_{1}^{k_j}\right)}_{(i)} + 
    \underbrace{\sum_{j = 1}^{J}\sum_{k = k_{j} + 1}^{k_{j + 1} - 1}\tilde{\Delta}_{1}^{k}\left(s_{1}^{k}\right) \mathds{1}\left\{k + \tau_{k} \geq k_{j + 1}\right\} }_{(ii)} + 
    \underbrace{\sum_{j = 1}^{J}\sum_{k = k_{j} + 1}^{k_{j + 1} - 1}\tilde{\Delta}_{1}^{k}\left(s_{1}^{k}\right) \mathds{1}\left\{k + \tau_{k} < k_{j + 1}\right\} }_{(iii)}
\end{align*}
where the inequality follows from optimism, the penultimate equality follows from epochs consisting of disjoint sets of episodes and the final equality follows from splitting the episodes into three disjoint sets, $(i)$, $(ii)$, and $(ii)$: 
\begin{itemize}
    \item [$(i)$] episodes where we perform a policy update,
    \item [$(ii)$] episodes played in the $j$-th epoch but observed in epoch $j' > j$,
    \item [$(iii)$] episodes played in the $j$-th epoch and observed in the $j$-th epoch.
\end{itemize}

First, we focus on the episodes where we perform a policy update, e.g. $(i)$. Recall that Lemma \ref{lemma: doubling trick} tells us the total number of updates is logarithmic in the number of episodes. Further, the rewards are bounded between zero and one, meaning the regret of any episode is at most $H$. Combining these two results gives a trivial bound on regret of this term: $(i) \leq HJ$.\\

Next, we bound the regret of the episodes whose feedback is not observable before the start of the next epoch e.g., $(ii)$. Once again, we can rely on Lemma \ref{lemma: doubling trick} and the fact that the regret of any episode is at most $H$ to get a bound on this term that is logarithmic in $K$. Doing so gives the following result:
\begin{align*}
    (ii) &= \sum_{j = 1}^{J}\sum_{k = k_{j} + 1}^{k_{j + 1} - 1}\tilde{\Delta}_{1}^{k}\left(s_{1}^{k}\right)\mathds{1}\left\{k + \tau_{k} \geq k_{j + 1}\right\}\\
    &\leq H\sum_{j = 1}^{J}\sum_{k = k_{j} + 1}^{k_{j + 1} - 1}\mathds{1}\left\{k + \tau_{k} \geq k_{j + 1}\right\}\\
    &\leq H\sum_{j = 1}^{J}\sum_{k = 1}^{k_{j + 1} - 1}\mathds{1}\left\{k + \tau_{k} \geq k_{j + 1}\right\}\\
    &=  H\sum_{j = 1}^{J}S_{k_{j + 1}}\\
    &\leq HJ\psi_{K}^{\tau} \tag{$S_{k}\leq \psi_{k}^{\tau} \leq \psi_{K}^{\tau}$}
\end{align*}
Finally, we handle the episodes that are played and observed in the same epoch e.g., term $(iii)$. Lemma \ref{lemma: regret decomposition} allows us to make a start on bounding this term:

\begin{align*}
    (iii) &= \sum_{j = 1}^{J}\sum_{k = k_{j} + 1}^{k_{j + 1} - 1}\tilde{\Delta}_{1}^{k}\left(s_{1}^{k}\right)\mathds{1}\left\{k + \tau_{k} < k_{j + 1}\right\}\\
    &\leq 6\left(H + C\right)\sqrt{T\log\left(\frac{K\pi}{6\delta'}\right)} + 6\sum_{j = 1}^{J}\sum_{k = k_{j} + 1}^{k_{j + 1} - 1}\sum_{h = 1}^{H}\Bigg(\beta_{kh}\left(s_{h}^{k}, a_{h}^{k}\right) + \frac{3CH^2 SL}{N_{kh}'\left(s_{h}^{k}, a_{h}^{k}\right)}\Bigg)\mathds{1}\left\{k + \tau_{k} < k_{j + 1}\right\}\\
    &= 6\left(H + C\right)\sqrt{T\log\left(\frac{K\pi}{6\delta'}\right)}\\
    &\quad + 6\sum_{j = 1}^{J}\sum_{k = k_{j} + 1}^{k_{j + 1} - 1}\sum_{h = 1}^{H}\frac{3CH^2 SL}{N_{kh}'\left(s_{h}^{k}, a_{h}^{k}\right)}\mathds{1}\left\{k + \tau_{k} < k_{j + 1}\right\} \tag{$iii.a$}\\
    &\quad + 6\sum_{j = 1}^{J}\sum_{k = k_{j} + 1}^{k_{j + 1} - 1}\sum_{h = 1}^{H}\beta_{kh}\left(s_{h}^{k}, a_{h}^{k}\right)\mathds{1}\left\{k + \tau_{k} < k_{j + 1}\right\}\tag{$iii.b$}
\end{align*}

Thus, bounding $(iii)$ now amounts to finding an upper bounds for $(iii.a)$ and $(iii.b)$. Since $k_j$ does not feature in either summation, we know that 
$$
n_{k' h}^{k_{j + 1}}(s, a) \leq \frac{1}{\alpha}N_{k_j h}'(s, a)
$$
for all $(s, a, h)\in \mathcal{S}\times \mathcal{A} \times [H]$ and $k' \geq k_j + 1$. By introducing a summation over all the states-actions and steps, we can easily bound $(iii.a)$ via Lemma \ref{lemma: lazy lower order}:
\begin{align*}
    (iii.a) &= 3CH^2 SL\sum_{j = 1}^{J}\sum_{k = k_{j} + 1}^{k_{j + 1} - 1}\sum_{h = 1}^{H} \frac{\mathds{1}\left\{k + \tau_{k} < k_{j + 1}\right\}}{N_{k_{j}h}'\left(s_{h}^{k}, a_{h}^{k}\right)}\\
    &= 3CH^2 SL\sum_{s, a, h}\sum_{j = 1}^{J}\sum_{k = k_{j} + 1}^{k_{j + 1} - 1}\frac{\mathds{1}\left\{s_{h}^{k} = s, a_{h}^{k} = a, k + \tau_{k} < k_{j + 1}\right\}}{N_{k_{j}h}'\left(s, a\right)}\\
    &=  \left(B_{2} + 3CH^2 SL\right)\sum_{s, a, h}\sum_{j = 1}^{J}\frac{\sum_{k = k_{j} + 1}^{k_{j + 1} - 1}\mathds{1}\left\{s_{h}^{k} = s, a_{h}^{k} = a, k + \tau_{k} < k_{j + 1}\right\}}{N_{k_{j}h}'\left(s, a\right)}\\
    &= 3CH^2 SL\sum_{s, a, h}\sum_{j = 1}^{J}\frac{n_{k_{j} + 1, h}^{k_{j + 1}}(s, a)}{N_{k_{j}h}'\left(s, a\right)} \tag{Eq. \eqref{eqn: within counter}}\\
    &\leq 3CH^2 SL \left(2\left(1 + \frac{1}{\alpha}\right)HSA\log\left(\frac{K}{SA}\right)\right)\tag{By Lemma \ref{lemma: lazy lower order}}\\
    &= 6\left(1 + \frac{1}{\alpha}\right)CH^3 S^2 AL \log\left(\frac{K}{SA}\right)
\end{align*}
Bounding $(iii.b)$ requires some care due to the various forms of $B_1$ e.g., those that remain constant and those that utilise variance reduction techniques. By Lemma \ref{lemma: lazy counters}, it is clear that the summation of the visitation counters no longer depends on the delay. Therefore, we begin by an application of Cauchy-Schwarz (CS) to separate the numerator of the exploration bonus from the summation of the visitation counters:

\begin{align*}
    (iii.b) &= \sum_{j = 1}^{J}\sum_{k = k_{j} + 1}^{k_{j + 1} - 1}\sum_{h = 1}^{H}\Bigg(  \frac{B_{1}}{\sqrt{N_{k_{j}h}'\left(s_{h}^{k}, a_{h}^{k}\right)}} + \frac{B_2}{N_{k_{j}h}'\left(s_{h}^{k}, a_{h}^{k}\right)}\Bigg)\mathds{1}\left\{k + \tau_{k} < k_{j + 1}\right\}\\
    &= \sum_{j = 1}^{J}\sum_{k = k_{j}+1}^{k_{j + 1} - 1}\sum_{h = 1}^{H} \frac{B_{1} \mathds{1}\left\{k + \tau_{k} < k_{j + 1}\right\} }{\sqrt{N_{k_{j}h}'\left(s_{h}^{k}, a_{h}^{k}\right)}} + B_2 \sum_{j = 1}^{J}\sum_{k = k_{j}+1}^{k_{j + 1} - 1}\sum_{h = 1}^{H} \frac{\mathds{1}\left\{k + \tau_{k} < k_{j + 1}\right\} }{N_{k_{j}h}'\left(s_{h}^{k}, a_{h}^{k}\right)}\\
    &\leq \sqrt{ \sum_{j = 1}^{J}\sum_{k = k_{j}+1}^{k_{j + 1} - 1}\sum_{h = 1}^{H} B_{1}^{2}\,\mathds{1}\left\{k + \tau_{k} < k_{j + 1}\right\}\, \sum_{j = 1}^{J}\sum_{k = k_{j}+1}^{k_{j + 1} - 1}\sum_{h = 1}^{H} \frac{\mathds{1}\left\{k + \tau_{k} < k_{j + 1}\right\}}{N_{k_{j}h}'\left(s_{h}^{k}, a_{h}^{k}\right)}} \\
    & + B_2 \sum_{j = 1}^{J}\sum_{k = k_{j}+1}^{k_{j + 1} - 1}\sum_{h = 1}^{H} \frac{\mathds{1}\left\{k + \tau_{k} < k_{j + 1}\right\} }{N_{k_{j}h}'\left(s_{h}^{k}, a_{h}^{k}\right)} \\
    &= \sqrt{ \sum_{j = 1}^{J}\sum_{k = k_{j}+1}^{k_{j + 1} - 1}\sum_{h = 1}^{H} B_{1}^{2} \, \mathds{1}\left\{k + \tau_{k} < k_{j + 1}\right\}\, \sum_{s, a, h}\sum_{j = 1}^{J}\sum_{k = k_{j} +1}^{k_{j + 1} - 1} \frac{\mathds{1}\left\{s_{h}^{k} = s, a_{h}^{k} = a, k + \tau_{k} < k_{j + 1}\right\}}{N_{k_{j}h}'\left(s_{h}^{k}, a_{h}^{k}\right)}}\\
    &\quad + B_2 \sum_{s, a, h}\sum_{j = 1}^{J}\sum_{k = k_{j}+1}^{k_{j + 1} - 1} \frac{\mathds{1}\left\{s_{h}^{k} = s, a_{h}^{k} = a, k + \tau_{k} < k_{j + 1}\right\}}{N_{k_{j}h}'\left(s_{h}^{k}, a_{h}^{k}\right)}\\
    &= \sqrt{ \sum_{j = 1}^{J}\sum_{k = k_{j}+1}^{k_{j + 1} - 1}\sum_{h = 1}^{H} B_{1}^{2}\, \mathds{1}\left\{k + \tau_{k} < k_{j + 1}\right\}\,\sum_{s, a, h}\sum_{j = 1}^{J} \frac{\sum_{k = k_{j}+1}^{k_{j + 1} - 1} \mathds{1}\left\{s_{h}^{k} = s, a_{h}^{k} = a, k + \tau_{k} < k_{j + 1}\right\}}{N_{k_{j}h}'\left(s_{h}^{k}, a_{h}^{k}\right)}}\\
    &\quad + B_2 \sum_{s, a, h}\sum_{j = 1}^{J} \frac{\sum_{k = k_{j}+1}^{k_{j + 1} - 1}\mathds{1}\left\{s_{h}^{k} = s, a_{h}^{k} = a, k + \tau_{k} < k_{j + 1}\right\}}{N_{k_{j}h}'\left(s_{h}^{k}, a_{h}^{k}\right)}\\
    &= \sqrt{ \sum_{j = 1}^{J}\sum_{k = k_{j}+1}^{k_{j + 1} - 1}\sum_{h = 1}^{H} B_{1}^{2}\, \mathds{1}\left\{k + \tau_{k} < k_{j + 1}\right\}\,\sum_{s, a, h}\sum_{j = 1}^{J} \frac{n_{k_{j} +1, h}^{k_{j + 1}}(s, a)}{N_{k_{j}h}'\left(s_{h}^{k}, a_{h}^{k}\right)}} + B_2 \sum_{s, a, h}\sum_{j = 1}^{J} \frac{n_{k_{j} +1, h}^{k_{j + 1}}(s, a)}{N_{k_{j}h}'\left(s_{h}^{k}, a_{h}^{k}\right)} \tag{Eq. \eqref{eqn: within counter}}\\
    &\leq \sqrt{2\left(1 + \frac{1}{\alpha}\right)HSA\log\left(\frac{K}{SA}\right) \sum_{j = 1}^{J}\sum_{k = k_{j}+1}^{k_{j + 1} - 1}\sum_{h = 1}^{H} B_{1}^{2}\,\mathds{1}\left\{k + \tau_{k} < k_{j + 1}\right\}} \\
    &\quad + 2\left(1 + \frac{1}{\alpha}\right)B_2 HSA\log\left(\frac{K}{SA}\right)\tag{Lemma \ref{lemma: lazy lower order}}\\
    &\leq \left(1 + \frac{1}{\alpha}\right)\hat{\mathfrak{R}}_{K}(Base) \log\left(\frac{K}{SA}\right)\\
    &\lesssim\left(1 + \frac{1}{\alpha}\right)\hat{\mathfrak{R}}_{K}(Base)
\end{align*}

The penultimate line in the above is simply the sum of the bonuses for the chosen base algorithm under immediate feedback scaled by a logarithmic factor, which is introduced by the slower updating. For $B_{1} \approx B$ e.g., the upper bound only involves inflating terms inside logarithms, one can upper bound the summation under the square-root by $T B^{2}$, which is tight up to logarithmic factors. When $B_{1}$ involves some form of empirical variance term, one can use the techniques outlined by \cite{PB2020, UCBVI, UCRL2B} to bound the summation under the square-root by $\approx HT$; once again this too is tight up to logarithmic factors. More simply, the epochs form a simulated non-delayed version of the environment for the base algorithm. Therefore, $(iii.b)$ can be replaced with the upper bound of the regret in the non-delayed environment multiplied by the extra logarithmic factors that arise from the slower updating, because the summation of the bonuses are the leading term in the regret bound.

Bringing everything together gives: 
\begin{align*}
    \mathfrak{R}_{K} &\leq (i) + (ii) + (iii.a) + (iii.b)\\
    &\lesssim \left(1 + \frac{1}{\alpha}\right) \hat{\mathfrak{R}}_{K}(Base) + HJ\psi_{K}^{\tau}\\
    &\lesssim \left(1 + \frac{1}{\alpha}\right)\hat{\mathfrak{R}}_{K}(Base) + \frac{ H^2 SA \psi_{K}^{\tau}}{\log(1 + \frac{1}{\alpha})}\\
\end{align*}
Plugging in $\psi_{K}^{\tau}$ (and suppressing poly-logarithmic factors) gives the stated result.
\end{tproof}

\section{Additional Theoretical Results}

Here, we present a brief overview of the results that unify model-optimistic and value-optimistic model-based episodic reinforcement learning algorithms \citep{PB2020}. The class of model-optimistic algorithms explicitly define the following failure event for some divergence $D(\hat{P}_kh(\cdot\vert s, a), P_h(\cdot\vert s, a)$:
\begin{align*}
    F_{k}^{p} &= \left\{\exists\, s, a, h: D\left(\hat{P}_{kh}\left(\cdot\vert s, a\right), P_{h}\left(\cdot\vert s, a\right) \right) \geq \epsilon_{kh}^{p}\left(s, a\right)\right\}
\end{align*}
which holds across all episodes with probability $\delta'$. Indeed, $D$ must satisfy some conditions. Namely, $D$ must be jointly convex in its arguments so that $\mathcal{P}_{kh}$ (defined below) is convex, and it must be positive homogeneous.\footnote{The distance $\norm{p - p'}$ for any norm and all f -divergences satisfy these conditions \citep{Liese2006}.} Outside the failure event, with probability $1 - \delta'$, the  divergence between the empirical and actual transition density of the $h^\text{th}$ step at the start of the $k^\text{th}$ episode is therefore, at most: $D(\hat{P}_{kh}(\cdot\vert s, a), P_{h}(\cdot\vert s, a)) \leq \epsilon_{kh}^{p}(s, a)$. Using $\epsilon_{kh}^{p}(s, a)$ as the maximum divergence allows for the construction of the following plausible set: 
\begin{align*}
    \mathcal{P}_{kh} = \left\{\tilde{P}_{h}\left(\cdot\vert s, a\right) \in \Delta: D\left(\tilde{P}_{h}\left(\cdot\vert s, a\right), \hat{P}_{kh}\left(\cdot\vert s, a\right)\right) \leq \epsilon_{kh}^{p}\left(s, a\right)\right\}
\end{align*}
for each $(s, a, h)\in \mathcal{S}\times\mathcal{A}\times[H]$. Here, $\Delta$ denotes the set of valid transition densities. From here, it is possible to derive the bonus by finding the conjugate of the divergence: 
\begin{align*}
    \beta_{kh}^{*}\left(s, a\right) &= \max_{\tilde{P}_{h}\left(\cdot\vert s, a\right) \in \Delta}\left\{\Big\langle \tilde{V}, \tilde{P}_{h}\left(\cdot\vert s, a\right) - \hat{P}_{h}\left(\cdot\vert s, a\right)\Big\rangle\right\}\\
    \beta_{kh}^{-}\left(s, a\right) &= \max_{\tilde{P}_{h}\left(\cdot\vert s, a\right) \in \Delta}\left\{\Big\langle -\tilde{V}, \tilde{P}_{h}\left(\cdot\vert s, a\right) - \hat{P}_{h}\left(\cdot\vert s, a\right)\Big\rangle\right\}\\
    \beta_{kh}\left(s, a\right) &\geq \max\left\{\beta_{kh}^{*}\left(s, a\right), \beta_{kh}^{-}\left(s, a\right)\right\}
\end{align*}
by introducing a Lagrange multiplier. For a derivation of the bonuses associated with each divergence, we refer the reader to Appendix A.5 of \citet{PB2020}.

\subsection{Missing Proofs for the Regret Decomposition}\label{sec: missing proof for decomposition}
In this subsection, we utilise the fact that all model-based algorithms compute an optimistic value function of the form \eqref{eqn: optimistic value function} to derive an adaptable regret decomposition. The decomposition is adaptable in the sense it allows for tighter delay-dependence when the bonuses satisfy a symmetry-like property.\\

Throughout, we assume that the model-based algorithm is optimistic with high probability. That is, $\tilde{V}_{h}^{\pi_{k}}(s) \geq V_{h}^{*}(s) \geq V_{h}^{\pi_{k}}(s)$ with high probability at least $1 - \delta'$. Further, $C$ is defined as the event where: 

$$
\beta_{kh}^{+}(s, a)) \geq \Big\langle \left(\hat{P}_{kh} - P_{h}\right)\left(\cdot \,\vert s, a \right), \tilde{V}_{h + 1}^{\pi_{k}}(\cdot) \Big\rangle
$$
which holds across all episodes for every state-action-step triple conditional on the complement of the failure event.

\decomposition*
\begin{proof}
By definition, the regret of any episodic reinforcement learning algorithm is given by: 
\begin{align*}
    \mathfrak{R}_{K} &= \sum_{k = 1}^{K}V_{1}^{*}\left(s_{1}^{k}\right) - V_{1}^{\pi_{k}}\left(s_{1}^{k}\right) \leq \sum_{k = 1}^{K}\tilde{V}_{1}^{\pi_{k}}\left(s_{1}^{k}\right) - V_{1}^{\pi_{k}}\left(s_{1}^{k}\right)
\end{align*}
where the final inequality holds by optimism, which holds across all episodes with probability at least $1 - \delta'$. Consider the more general case of bounding the regret from the $h$-th step of each episode, rather than just the first step. Define $\tilde{\Delta}_{h}^{k} = \tilde{V}_{h}^{\pi_{k}}(s_{h}^{k}) - V_{h}^{\pi_{k}}(s_{h}^{k})$. Applying Lemma \ref{lemma: recursive decomposition} gives, with probability at least $1 - \delta'$:
\begin{align*}
\tilde{\Delta}_{h}^{k}(s_{h}^{k}) 
    &\leq \left(1 + \frac{C}{H}\right)\tilde{\Delta}_{h + 1}^{k}\left(s_{h + 1}^{k}\right) + 2\beta_{kh}\left(s_{h}^{k}, a_{h}^{k}\right) + \frac{6CH^2 SL}{N_{kh}'\left(s_{h}^{k}, a_{h}^{k}\right)} + \zeta_{h + 1}^{k} + C\bar{\zeta}_{h + 1}^{k}
\end{align*}
where 
\begin{align*}
    \zeta_{h + 1}^{k} & \coloneqq \big\langle P_{h}\left(\cdot\,\vert\,s_{h}^{k}, a_{h}^{k}\right),\, \tilde{\Delta}_{h + 1}^{k}\left(\cdot\right)\big\rangle  - \tilde{\Delta}_{h + 1}^{k}\left(s_{h + 1}^{k}\right)\\
    \bar{\zeta}_{h + 1}^{k}&\coloneqq\sqrt{\frac{4L}{N_{kh}'\left(s_{h}^{k}, a_{h}^{k}\right)}}\left[\left(\sum_{s'\in G_{kh}}P_{h}\left(s'\,\vert\,s_{h}^{k}, a_{h}^{k}\right)\frac{\tilde{\Delta}_{h + 1}^{k}\left(s'\right)}{\sqrt{P_{h}\left(s'\,\vert\,s_{h}^{k}, a_{h}^{k}\right)}}\right) - \frac{\tilde{\Delta}_{h + 1}^{k}\left(s_{h + 1}^{k}\right)}{\sqrt{P_{h}\left(s_{h + 1}^{k}\,\vert\,s_{h}^{k}, a_{h}^{k}\right)}}\right]
\end{align*}
and 
\begin{align*}
G_{kh} &\coloneqq \{s': P_{h}\left(s'\,\vert\,s_{h}^{k}, a_{h}^{k}\right) N_{kh}'\left(s_{h}^{k}, a_{h}^{k}\right) \geq 4H^2 L\}
\end{align*}
Now, we can utilise the recursive decomposition above to show that:    
\begin{align*}
    \tilde{\Delta}_{j}^{k}\left(s_{j}^{k}\right) \leq \left(1 + \frac{C}{H}\right)^{H - j}\sum_{h = j}^{H} 2\beta_{kh}\left(s_{h}^{k}, a_{h}^{k}\right) + \frac{6CH^2 SL}{N_{kh}'\left(s_{h}^{k}, a_{h}^{k}\right)} + \zeta_{h + 1}^{k} + C\bar{\zeta}_{h + 1}^{k}
\end{align*}
which we do by induction.  Recall that: $\tilde{V}_{H + 1}^{\pi_{k}} = V_{H + 1}^{*} = V_{H + 1}^{\pi_{k}} = \vec{0}$. Therefore, the statement holds when $j = H$, because: $\tilde{\Delta}_{H + 1}^{k} =  0$. Now assume the statement holds for $h = j + 1$. Then,  
\begin{align*}
    \tilde{\Delta}_{j}^{k}\left(s_{j}^{k}\right) &\leq \left(1 + \frac{C}{H}\right)\tilde{\Delta}_{j + 1}^{k}\left(s_{j + 1}^{k}\right) + \left(2\beta_{kh}\left(s_{h}^{k}, a_{h}^{k}\right) + \frac{6CH^2 SL}{N_{kh}'\left(s_{h}^{k}, a_{h}^{k}\right)} + \zeta_{h + 1}^{k} + C\bar{\zeta}_{h + 1}^{k}\right)\\
    &\leq \left(1 + \frac{C}{H}\right)\left(\left(1 + \frac{C}{H}\right)^{H - (j + 1)} \sum_{h = j + 1}^{H} 2\beta_{kh}\left(s_{h}^{k}, a_{h}^{k}\right) + \frac{6CH^2 SL}{N_{kh}'\left(s_{h}^{k}, a_{h}^{k}\right)} + \zeta_{h + 1}^{k} + C\bar{\zeta}_{h + 1}^{k}\right)\\
    &\quad+ \left(2\beta_{kh}\left(s_{h}^{k}, a_{h}^{k}\right) + \frac{6CH^2 SL}{N_{kh}'\left(s_{h}^{k}, a_{h}^{k}\right)} + \zeta_{h + 1}^{k} + C\bar{\zeta}_{h + 1}^{k}\right)\\
    &= \left(1 + \frac{C}{H}\right)^{H - j} \sum_{h = j + 1}^{H} \left(2\beta_{kh}\left(s_{h}^{k}, a_{h}^{k}\right) + \frac{6CH^2 SL}{N_{kh}'\left(s_{h}^{k}, a_{h}^{k}\right)} + \zeta_{h + 1}^{k} + C\bar{\zeta}_{h + 1}^{k}\right)\\
    &\quad+ \left(2\beta_{kh}\left(s_{h}^{k}, a_{h}^{k}\right) + \frac{6CH^2 SL}{N_{kh}'\left(s_{h}^{k}, a_{h}^{k}\right)} + \zeta_{h + 1}^{k} + C\bar{\zeta}_{h + 1}^{k}\right)\\
    &\leq \left(1 + \frac{C}{H}\right)^{H - j} \sum_{h = j + 1}^{H} \left(2\beta_{kh}\left(s_{h}^{k}, a_{h}^{k}\right) + \frac{6CH^2 SL}{N_{kh}'\left(s_{h}^{k}, a_{h}^{k}\right)} + \zeta_{h + 1}^{k} + C\bar{\zeta}_{h + 1}^{k}\right)\\
    &\quad+ \left(1 + \frac{C}{H}\right)^{H - j} \left(2\beta_{kh}\left(s_{h}^{k}, a_{h}^{k}\right) + \frac{6CH^2 SL}{N_{kh}'\left(s_{h}^{k}, a_{h}^{k}\right)} + \zeta_{h + 1}^{k} + C\bar{\zeta}_{h + 1}^{k}\right)\\
    &\leq \left(1 + \frac{C}{H}\right)^{H - j} \sum_{h = j}^{H} \left(2\beta_{kh}\left(s_{h}^{k}, a_{h}^{k}\right) + \frac{6CH^2 SL}{N_{kh}'\left(s_{h}^{k}, a_{h}^{k}\right)} + \zeta_{h + 1}^{k} + C\bar{\zeta}_{h + 1}^{k}\right)
\end{align*}

Therefore, we are now able to upper bound the regret as follows: 
\begin{align*}
   \mathfrak{R}_{K}  &\leq \sum_{k = 1}^{K} \tilde{\Delta}_{1}^{k}\left(s_{1}^{k}\right)\\
   &\leq \underbrace{\left(1 + \frac{C}{H}\right)^{H}}_{\leq \,e < 3}\left(C\sum_{k = 1}^{K}\sum_{h = 1}^{H}\bar{\zeta}_{h + 1}^{k} + \sum_{k = 1}^{K}\sum_{h = 1}^{H}\zeta_{h + 1}^{k} + 2\sum_{k = 1}^{K}\sum_{h = 1}^{H}\beta_{kh}\left(s_{h}^{k}, a_{h}^{k}\right) + \sum_{k = 1}^{K}\sum_{h = 1}^{H}\frac{6CH^2 SL}{N_{kh}'\left(s_{h}^{k}, a_{h}^{k}\right)}\right)\\
   &\leq 3C\sum_{k = 1}^{K}\sum_{h = 1}^{H}\bar{\zeta}_{h + 1}^{k} + 3\sum_{k = 1}^{K}\sum_{h = 1}^{H}\zeta_{h + 1}^{k} + 6\sum_{k = 1}^{K}\sum_{h = 1}^{H}\beta_{kh}\left(s_{h}^{k}, a_{h}^{k}\right) + 
   \sum_{k = 1}^{K}\sum_{h = 1}^{H}\frac{18CH^2 SL}{N_{kh}'\left(s_{h}^{k}, a_{h}^{k}\right)}\\
   &= 3C\sum_{k = 1}^{K}\sum_{h = 1}^{H}\bar{\zeta}_{h + 1}^{k} + 3\sum_{k = 1}^{K}\sum_{h = 1}^{H}\zeta_{h + 1}^{k} + 6\sum_{k = 1}^{K}\sum_{h = 1}^{H}\left(\beta_{kh}\left(s_{h}^{k}, a_{h}^{k}\right) + \frac{3CH^2 SL}{N_{kh}'\left(s_{h}^{k}, a_{h}^{k}\right)}\right)
\end{align*}
Recall the definitions of $\zeta_{h + 1}^{k}$ and $\bar{\zeta}_{h + 1}^{k}$:
\begin{align*}
    \zeta_{h + 1}^{k} &= \Big\langle P_{h}\left(\cdot\,\vert\,s_{h}^{k}, a_{h}^{k}\right), \tilde{\Delta}_{h + 1}^{k}\left(\cdot\right) \Big\rangle - \tilde{\Delta}_{h + 1}^{k}(s_{h + 1}^{k})\\
    \bar{\zeta}_{h + 1}^{k} &= \sqrt{\frac{4L}{N_{kh}'\left(s_{h}^{k}, a_{h}^{k}\right)}}\left[\left(\sum_{s'\in G_{kh}}\frac{P_{h}\left(s'\vert s_{h}^{k}, a_{h}^{k}\right)\tilde{\Delta}_{h + 1}^{k}\left(s'\right)}{\sqrt{P_{h}\left(s'\vert s_{h}^{k}, a_{h}^{k}\right)}}\right) - \frac{\Delta_{h + 1}^{k}\left(s_{h + 1}^{k}\right)}{\sqrt{P_{h}\left(s_{h + 1}^{k}\vert s_{h}^{k}, a_{h}^{k}\right)}}\right]
\end{align*}
with 
\begin{align*}
G_{kh} &\coloneqq \{s': P_{h}\left(s'\,\vert\,s_{h}^{k}, a_{h}^{k}\right) N_{kh}'\left(s_{h}^{k}, a_{h}^{k}\right) \geq 4H^2 L\}
\end{align*}
Let $\mathcal{F}_{kh} = \sigma(\{\mathcal{H}_{i}\}_{i: i + \tau_{i} < k})$ be the natural filtration of the observed information. Then $\abs{\zeta_{h + 1}^{k}} \leq 2H$ and
\begin{align*}
    &\mathbb{E}_{s_{h + 1}^{k}\sim P_{h}(\cdot\,\vert\,s_{h}^{k}, a_{h}^{k})}\left[\zeta_{h + 1}^{k}\,\vert\,\mathcal{F}_{kh}\cup \{s_{h}^{k}, a_{h}^{k}\}\right]\\
    &= \mathbb{E}_{s_{h + 1}^{k}\sim P_{h}(\cdot\,\vert\,s_{h}^{k}, a_{h}^{k})}\left[ \Big\langle P_{h}\left(\cdot\,\vert\,s_{h}^{k}, a_{h}^{k}\right), \tilde{\Delta}_{h + 1}^{k}\left(\cdot\right) \Big\rangle -  \tilde{\Delta}_{h + 1}^{k}\left(s_{h + 1}\right)\,\Big\vert\,\mathcal{F}_{kh}\cup \{s_{h}^{k}, a_{h}^{k}\}\right] \\
    &= \Big\langle P_{h}\left(\cdot\,\vert\,s_{h}^{k}, a_{h}^{k}\right), \tilde{\Delta}_{h + 1}^{k}\left(\cdot\right) \Big\rangle - \mathbb{E}_{s_{h + 1}^{k}\sim P_{h}(\cdot\,\vert\,s_{h}^{k}, a_{h}^{k})}\left[\tilde{\Delta}_{h + 1}^{k}\left(s_{h + 1}^{k}\right)\,\Big\vert\,\mathcal{F}_{kh}\cup \{s_{h}^{k}, a_{h}^{k}\}\right]  \\
    &= \Big\langle P_{h}\left(\cdot\,\vert\,s_{h}^{k}, a_{h}^{k}\right), \tilde{\Delta}_{h + 1}^{k}\left(\cdot\right) \Big\rangle  - \Big\langle P_{h}\left(\cdot\,\vert\,s_{h}^{k}, a_{h}^{k}\right), \tilde{\Delta}_{h + 1}^{k}\left(\cdot\right) \Big\rangle  = 0
\end{align*}
Similarly, $\abs{\bar{\zeta}_{h + 1}^{k}} \leq 2$ and $\mathbb{E}_{s_{h + 1}^{k}\sim P_{h}(\cdot\,\vert\,s_{h}^{k}, a_{h}^{k})}\left[\bar{\zeta}_{h + 1}^{k}\,\vert\,\mathcal{F}_{kh}\cup \{s_{h}^{k}, a_{h}^{k}\}\,, s_{h + 1}^{k}\in G_{kh}\right] = 0$. Therefore, $\zeta_{h + 1}^{k}$ and $\bar{\zeta}_{h + 1}^{k}$ are martingale differences, which are easily bounded using Azuma-Hoeffding:
\begin{align*}
    \sum_{k = 1}^{K}\sum_{h = 1}^{H}\zeta_{h + 1}^{k} &\leq 2H\sqrt{T\log\left(\frac{K\pi}{6\delta'}\right)}\tag{with probability at least $1 - \delta'$}\\
    \sum_{k = 1}^{K}\sum_{h = 1}^{H} \bar{\zeta}_{h + 1}^{k} &\leq 2\sqrt{T\log\left(\frac{K\pi}{6\delta'}\right)}\tag{with probability at least $1 - \delta'$}
\end{align*}

Therefore, with probability $1 - 4\delta'$: 
\begin{align*}
    \mathfrak{R}_{K} &\leq 6C\sqrt{T\log\left(\frac{K\pi}{6\delta'}\right)} + 6H\sqrt{T\log\left(\frac{K\pi}{6\delta'}\right)} + 6\sum_{k = 1}^{K}\sum_{h = 1}^{H}\left(\beta_{kh}\left(s_{h}^{k}, a_{h}^{k}\right) + \frac{3CH^2 SL}{N_{kh}'\left(s_{h}^{k}, a_{h}^{k}\right)}\right)\\
    &\leq 6(H + C)\sqrt{T\log\left(\frac{K\pi}{6\delta'}\right)} + 6\sum_{k = 1}^{K}\sum_{h = 1}^{H}\left(\beta_{kh}\left(s_{h}^{k}, a_{h}^{k}\right) + \frac{3CH^2 SL}{N_{kh}'\left(s_{h}^{k}, a_{h}^{k}\right)}\right)
\end{align*}
as required. 
\end{proof}

\begin{lemma}\label{lemma: recursive decomposition}
Let $C$ be an algorithm dependent-constant indicating whether it is model-optimistic or value-optimistic. Under Assumption \ref{assumption: estimation error bonus}, the regret of any optimistic model-based algorithm from the $h$-th step of the $k$-th episode upper bounded by: 
\begin{align*}
\tilde{\Delta}_{h}^{k}(s_{h}^{k}) 
    &\leq \left(1 + \frac{C}{H}\right)\tilde{\Delta}_{h + 1}^{k}\left(s_{h + 1}^{k}\right) + 2\beta_{kh}\left(s_{h}^{k}, a_{h}^{k}\right) + \frac{6CH^2 SL}{N_{kh}'\left(s_{h}^{k}, a_{h}^{k}\right)}\\
    &+ \big\langle P_{h}\left(\cdot\,\vert\,s_{h}^{k}, a_{h}^{k}\right),\, \tilde{\Delta}_{h + 1}^{k}\left(\cdot\right)\big\rangle  - \tilde{\Delta}_{h + 1}^{k}\left(s_{h + 1}^{k}\right)\\
    &+ \sqrt{\frac{4CL}{N_{kh}'\left(s_{h}^{k}, a_{h}^{k}\right)}}\left[\left(\sum_{s'\in G_{kh}}P_{h}\left(s'\,\vert\,s_{h}^{k}, a_{h}^{k}\right)\frac{\tilde{\Delta}_{h + 1}^{k}\left(s'\right)}{\sqrt{P_{h}\left(s'\,\vert\,s_{h}^{k}, a_{h}^{k}\right)}}\right) - \frac{\tilde{\Delta}_{h + 1}^{k}\left(s_{h + 1}^{k}\right)}{\sqrt{P_{h}\left(s_{h + 1}^{k}\,\vert\,s_{h}^{k}, a_{h}^{k}\right) }}\right]
\end{align*}
where $L = \log(S^2 AH \pi^2 / 6\delta')$ and
\begin{align*}
G_{kh} &\coloneqq \{s': P_{h}\left(s'\,\vert\,s_{h}^{k}, a_{h}^{k}\right) N_{kh}'\left(s_{h}^{k}, a_{h}^{k}\right) \geq 4H^2 L\}
\end{align*}
with probability $1 - \delta'$.
\end{lemma}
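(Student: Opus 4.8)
The plan is to prove the per-step recursion by comparing the optimistic Bellman update \eqref{eqn: optimistic value function} defining $\tilde V_h^{\pi_k}$ with the exact Bellman equation $V_h^{\pi_k}(s_h^k) = R_h(s_h^k,a_h^k) + \langle P_h(\cdot\,\vert\,s_h^k,a_h^k), V_{h+1}^{\pi_k}\rangle$. Because $\tilde V_h^{\pi_k}(s_h^k)$ is defined as a minimum, the inequality $\tilde V_h^{\pi_k}(s_h^k) \le R_h(s_h^k,a_h^k) + \langle \hat P_{kh}, \tilde V_{h+1}^{\pi_k}\rangle + \beta_{kh}^+$ always holds, so the truncation at $H'+1$ can only decrease the left-hand side and may be dropped when upper-bounding $\tilde\Delta_h^k$. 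Subtracting the two Bellman relations cancels the known deterministic reward, and adding and subtracting $\langle P_h, \tilde V_{h+1}^{\pi_k}\rangle$ gives
\[
\tilde\Delta_h^k(s_h^k) \le \big\langle (\hat P_{kh}-P_h)(\cdot\,\vert\,s_h^k,a_h^k),\, \tilde V_{h+1}^{\pi_k}\big\rangle + \big\langle P_h, \tilde\Delta_{h+1}^k\big\rangle + \beta_{kh}^+(s_h^k,a_h^k).
\]
Rewriting $\langle P_h, \tilde\Delta_{h+1}^k\rangle = \tilde\Delta_{h+1}^k(s_{h+1}^k) + \zeta_{h+1}^k$ peels off the first martingale-difference term exactly as defined, so everything reduces to controlling the estimation-error term $\langle(\hat P_{kh}-P_h),\tilde V_{h+1}^{\pi_k}\rangle$; this is where the indicator $C$ enters.

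When $C=0$ (model-optimistic base algorithms) the confidence set is built so the bonus dominates this inner product directly: $\langle(\hat P_{kh}-P_h),\tilde V_{h+1}^{\pi_k}\rangle \le \beta_{kh}^+ \le \beta_{kh}$. Combining with the $\beta_{kh}^+$ already present yields $2\beta_{kh}$, the factor $(1+C/H)$ collapses to $1$, and the $6CH^2SL/N_{kh}'$ and $\bar\zeta_{h+1}^k$ terms both vanish, so the recursion holds immediately. The genuine work is the case $C=1$, where the bonus controls the estimation error only against the \emph{optimal} value function. I would invoke Assumption \ref{assumption: estimation error bonus} after the splitting $\langle(\hat P_{kh}-P_h),\tilde V_{h+1}^{\pi_k}\rangle = \langle(\hat P_{kh}-P_h),V_{h+1}^*\rangle + \langle(\hat P_{kh}-P_h),g\rangle$, where $g \coloneqq \tilde V_{h+1}^{\pi_k}-V_{h+1}^*$; the first piece is at most $\beta_{kh}^+$, and optimism gives $0 \le g \le \tilde\Delta_{h+1}^k$ pointwise.

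The core estimate is then $\langle(\hat P_{kh}-P_h),g\rangle$. I would apply a coordinatewise Bernstein inequality to each Bernoulli indicator $\mathds{1}\{s_{h+1}^i = s'\}$, union-bounding over all $(s,a,s',h)$ and episodes to produce $L = \log(S^2AH\pi^2/6\delta')$, so that $\lvert\hat P_{kh}(s'\,\vert\,\cdot)-P_h(s'\,\vert\,\cdot)\rvert \lesssim \sqrt{P_h(s'\,\vert\,\cdot)L/N_{kh}'} + L/N_{kh}'$. Multiplying by $g(s')\ge 0$, summing, and splitting the next-state sum at the threshold defining $G_{kh}$ gives two regimes. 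For the rare states $s'\notin G_{kh}$, where $P_h(s'\,\vert\,\cdot) < 4H^2L/N_{kh}'$, the crude bounds $g \le 2H$ and $\lvert\mathcal{S}\rvert = S$ collapse both the variance contribution and the lower-order Bernstein term into $6CH^2SL/N_{kh}'$. For the well-visited states $s'\in G_{kh}$, using $g \le \tilde\Delta_{h+1}^k$ produces $\sqrt{4L/N_{kh}'}\sum_{s'\in G_{kh}}\sqrt{P_h(s'\,\vert\,\cdot)}\,\tilde\Delta_{h+1}^k(s')$, which I would split as the martingale difference $\bar\zeta_{h+1}^k$ plus the self-normalized realized term $\sqrt{4L/N_{kh}'}\,\tilde\Delta_{h+1}^k(s_{h+1}^k)/\sqrt{P_h(s_{h+1}^k\,\vert\,\cdot)}$.

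The step I expect to be the main obstacle, and the one that forces the precise threshold $4H^2L$ in the definition of $G_{kh}$, is showing that this realized term is at most $\tfrac{1}{H}\tilde\Delta_{h+1}^k(s_{h+1}^k)$: on the event $s_{h+1}^k\in G_{kh}$ one has $P_h(s_{h+1}^k\,\vert\,\cdot)\ge 4H^2L/N_{kh}'$, whence $\sqrt{4L/N_{kh}'}/\sqrt{P_h(s_{h+1}^k\,\vert\,\cdot)} \le 1/H$, and this is exactly what supplies the extra $C/H$ in the factor $(1+C/H)$. The accompanying subtlety is verifying that $\bar\zeta_{h+1}^k$ is a genuine martingale difference; carrying the indicator $\mathds{1}\{s_{h+1}^k\in G_{kh}\}$ on the subtracted term makes $\mathbb{E}[\,\bar\zeta_{h+1}^k\,\vert\,\mathcal{F}_{kh},s_h^k,a_h^k] = 0$ hold exactly, since $\mathbb{E}[\mathds{1}\{s_{h+1}^k\in G_{kh}\}\,\tilde\Delta_{h+1}^k(s_{h+1}^k)/\sqrt{P_h(s_{h+1}^k\,\vert\,\cdot)}] = \sum_{s'\in G_{kh}}\sqrt{P_h(s'\,\vert\,\cdot)}\,\tilde\Delta_{h+1}^k(s')$. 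Collecting the $\beta_{kh}^+ \le \beta_{kh}$ from the estimation error with the one already present to form $2\beta_{kh}$, and assembling the $(1+C/H)\tilde\Delta_{h+1}^k(s_{h+1}^k)$ term, the $6CH^2SL/N_{kh}'$ term, $\zeta_{h+1}^k$ and $\bar\zeta_{h+1}^k$, reproduces the stated recursion; all bounds hold jointly on the complement of the Bernstein failure event, which has probability at most $\delta'$.
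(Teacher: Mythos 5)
Your proposal is correct and follows essentially the same route as the paper: the same Bellman-difference decomposition with $\beta_{kh}^+ \le \beta_{kh}$, the same split of the estimation error through $V_{h+1}^*$ via Assumption \ref{assumption: estimation error bonus} with $0 \le \tilde V_{h+1}^{\pi_k} - V_{h+1}^* \le \tilde\Delta_{h+1}^k$, and the same Bernstein-plus-$G_{kh}$ argument (the paper's Lemma \ref{lemma: correction term}, after \citet{UCBVI}) yielding the $1/H$ factor from the $4H^2L$ threshold, the $6CH^2SL/N_{kh}'$ lower-order term, and the two martingale differences. Your remark that the subtracted term in $\bar\zeta_{h+1}^k$ should carry the indicator $\mathds{1}\{s_{h+1}^k \in G_{kh}\}$ so that its conditional mean is exactly zero is a small but legitimate tightening of a point the paper treats only by conditioning on $s_{h+1}^k \in G_{kh}$.
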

\begin{proof}
By Proposition 2 of \citet{PB2020} and by definition of the value-optimistic algorithms, we have that: 
\begin{align*}
    \tilde{\Delta}_{h}^{k}(s_{h}^{k}) &= \tilde{V}_{h}^{\pi_{k}}\left(s_{h}^{k}\right) - V_{h}^{\pi_{k}}\left(s_{h}^{k}\right) = \beta_{kh}^{+}\left(s_{h}^{k}, a_{h}^{k} \right) + \big\langle\hat{P}_{kh}\left(\cdot\,\vert\, s_{h}^{k}, a_{h}^{k}\right) \tilde{V}_{h + 1}^{\pi_{k}}\big\rangle- \big\langle P_{h}\left(\cdot\,\vert\, s_{h}^{k}, a_{h}^{k}\right) V_{h + 1}^{\pi_{k}} \big\rangle\\
    &= \beta_{kh}^{+}\left(s_{h}^{k}, a_{h}^{k} \right) + \big\langle\hat{P}_{kh}\left(\cdot\,\vert\, s_{h}^{k}, a_{h}^{k}\right) - P_{h}\left(\cdot\,\vert\, s_{h}^{k}, a_{h}^{k}\right), \tilde{V}_{h + 1}^{\pi_{k}}\big\rangle +  \big\langle P_{h}\left(\cdot\,\vert\, s_{h}^{k}, a_{h}^{k}\right), \tilde{V}_{h + 1}^{\pi_{k}} - V_{h + 1}^{\pi_{k}} \big\rangle\\
    &= \beta_{kh}^{+}\left(s_{h}^{k}, a_{h}^{k} \right) + \big\langle\hat{P}_{kh}\left(\cdot\,\vert\, s_{h}^{k}, a_{h}^{k}\right) - P_{h}\left(\cdot\,\vert\, s_{h}^{k}, a_{h}^{k}\right), \tilde{V}_{h + 1}^{\pi_{k}}\big\rangle +  \big\langle P_{h}\left(\cdot\,\vert\, s_{h}^{k}, a_{h}^{k}\right), \tilde{\Delta}_{h + 1}^{k} \big\rangle\\
    &\leq \beta_{kh}\left(s_{h}^{k}, a_{h}^{k} \right) + \big\langle\hat{P}_{kh}\left(\cdot\,\vert\, s_{h}^{k}, a_{h}^{k}\right) - P_{h}\left(\cdot\,\vert\, s_{h}^{k}, a_{h}^{k}\right), \tilde{V}_{h + 1}^{\pi_{k}}\big\rangle +  \big\langle P_{h}\left(\cdot\,\vert\, s_{h}^{k}, a_{h}^{k}\right), \tilde{\Delta}_{h + 1}^{k}\big\rangle\\
    &= \tilde{\Delta}_{h + 1}^{k}\left(s_{h + 1}^{k}\right) + \beta_{kh}\left(s_{h}^{k}, a_{h}^{k} \right)\\
    &\quad + \big\langle\hat{P}_{kh}\left(\cdot\,\vert\, s_{h}^{k}, a_{h}^{k}\right) - P_{h}\left(\cdot\,\vert\, s_{h}^{k}, a_{h}^{k}\right), \tilde{V}_{h + 1}^{\pi_{k}}\big\rangle \\
    &\quad+  \big\langle P_{h}\left(\cdot\,\vert\, s_{h}^{k}, a_{h}^{k}\right), \tilde{\Delta}_{h + 1}^{k}\big\rangle - \tilde{\Delta}_{h + 1}^{k}\left(s_{h + 1}^{k}\right)
\end{align*}
where the inequality follows from the fact that $\beta_{kh}(s, a)^{+} \leq \beta_{kh}(s, a)$. For model-optimistic algorithms, from the definition of the bonuses, we have that: 
\begin{align*}
    \big\langle\hat{P}_{kh}\left(\cdot\,\vert\, s_{h}^{k}, a_{h}^{k}\right) - P_{h}\left(\cdot\,\vert\, s_{h}^{k}, a_{h}^{k}\right), \tilde{V}_{h + 1}^{\pi_{k}}\big\rangle \leq \beta_{kh}^{-} \left(s_{h}^{k}, a_{h}^{k}\right) \leq \beta_{kh}\left(s_{h}^{k}, a_{h}^{k}\right)
\end{align*}
However, this term cannot be bound as easily for the value-optimistic algorithms. But, Assumption \ref{assumption: estimation error bonus} allows us to show that, with probability $1 - \delta'$: 
\begin{align*}
    &\big\langle\hat{P}_{kh}\left(\cdot\,\vert\, s_{h}^{k}, a_{h}^{k}\right) - P_{h}\left(\cdot\,\vert\, s_{h}^{k}, a_{h}^{k}\right), \tilde{V}_{h + 1}^{\pi_{k}}\big\rangle \\
    &=  \big\langle\hat{P}_{kh}\left(\cdot\,\vert\, s_{h}^{k}, a_{h}^{k}\right) - P_{h}\left(\cdot\,\vert\, s_{h}^{k}, a_{h}^{k}\right), V_{h + 1}^{*}\big\rangle
    +  \big\langle\hat{P}_{kh}\left(\cdot\,\vert\, s_{h}^{k}, a_{h}^{k}\right) - P_{h}\left(\cdot\,\vert\, s_{h}^{k}, a_{h}^{k}\right), \tilde{V}_{h + 1}^{\pi_{k}} - V_{h + 1}^{*}\big\rangle\\
    &\leq \beta_{kh}\left(s_{h}^{k}, a_{h}^{k}\right) + \big\langle\hat{P}_{kh}\left(\cdot\,\vert\, s_{h}^{k}, a_{h}^{k}\right) - P_{h}\left(\cdot\,\vert\, s_{h}^{k}, a_{h}^{k}\right), \tilde{V}_{h + 1}^{\pi_{k}} - V_{h + 1}^{*}\big\rangle \tag{By Assumption \ref{assumption: estimation error bonus}}\\
    &\leq \beta_{kh}\left(s_{h}^{k}, a_{h}^{k}\right) + \big\langle\abs{\hat{P}_{kh}\left(\cdot\,\vert\, s_{h}^{k}, a_{h}^{k}\right) - P_{h}\left(\cdot\,\vert\, s_{h}^{k}, a_{h}^{k}\right)}, \tilde{V}_{h + 1}^{\pi_{k}} - V_{h + 1}^{*}\big\rangle\\
    &\leq \beta_{kh}\left(s_{h}^{k}, a_{h}^{k}\right) + \big\langle \abs{\hat{P}_{kh}\left(\cdot\,\vert\, s_{h}^{k}, a_{h}^{k}\right) - P_{h}\left(\cdot\,\vert\, s_{h}^{k}, a_{h}^{k}\right)}, \tilde{V}_{h + 1}^{\pi_{k}} - V_{h + 1}^{\pi_{k}}\big\rangle \tag{$V_{h}^{*}(s) \geq V_{h}^{\pi_{k}}(s)$}\\
    &= \beta_{kh}\left(s_{h}^{k}, a_{h}^{k}\right) + \big\langle\abs{\hat{P}_{kh}\left(\cdot\,\vert\, s_{h}^{k}, a_{h}^{k}\right) - P_{h}\left(\cdot\,\vert\, s_{h}^{k}, a_{h}^{k}\right)}, \tilde{\Delta}_{h + 1}^{k}\big\rangle\\
    &\leq \beta_{kh}\left(s_{h}^{k}, a_{h}^{k}\right)  + \frac{\tilde{\Delta}_{h + 1}^{k}\left(s_{h + 1}^{k}\right)}{H} + \frac{2HSL}{N_{kh}'\left(s_{h}^{k}, a_{h}^{k}\right)} + \frac{4H^2 SL}{N_{kh}'\left(s_{h}^{k}, a_{h}^{k}\right)} \\
    &+ \sqrt{\frac{4L}{N_{kh}'\left(s_{h}^{k}, a_{h}^{k}\right)}}\left[\left(\sum_{s'\in G_{kh}}P_{h}\left(s'\,\vert\,s_{h}^{k}, a_{h}^{k}\right)\frac{\tilde{\Delta}_{h + 1}^{k}\left(s'\right)}{\sqrt{P_{h}\left(s'\,\vert\,s_{h}^{k}, a_{h}^{k}\right)}}\right) - \frac{\tilde{\Delta}_{h + 1}^{k}\left(s_{h + 1}^{k}\right)}{\sqrt{P_{h}\left(s_{h + 1}^{k}\,\vert\,s_{h}^{k}, a_{h}^{k}\right) }}\right]
\end{align*}
where the final inequality follows from Lemma \ref{lemma: correction term}. Thus, utilising the indicator variable, we have that: 
\begin{align*}
     \tilde{\Delta}_{h}^{k}(s_{h}^{k})
     &\leq  \left(1 + \frac{C}{H}\right)\tilde{\Delta}_{h + 1}^{k}\left(s_{h + 1}^{k}\right) + 2\beta_{kh}\left(s_{h}^{k}, a_{h}^{k}\right) + \frac{2CHSL}{N_{kh}'\left(s_{h}^{k}, a_{h}^{k}\right)} + \frac{4CH^2 SL}{N_{kh}'\left(s_{h}^{k}, a_{h}^{k}\right)}\\
    &\quad+ \big\langle P_{h}\left(\cdot\,\vert\,s_{h}^{k}, a_{h}^{k}\right),\, \tilde{\Delta}_{h + 1}^{k}\left(\cdot\right)\big\rangle  - \tilde{\Delta}_{h + 1}^{k}\left(s_{h + 1}^{k}\right)\\
    &\quad+ \sqrt{\frac{4CL}{N_{kh}'\left(s_{h}^{k}, a_{h}^{k}\right)}}\left[\left(\sum_{s'\in G_{kh}}P_{h}\left(s'\,\vert\,s_{h}^{k}, a_{h}^{k}\right)\frac{\tilde{\Delta}_{h + 1}^{k}\left(s'\right)}{\sqrt{P_{h}\left(s'\,\vert\,s_{h}^{k}, a_{h}^{k}\right)}}\right) - \frac{\tilde{\Delta}_{h + 1}^{k}\left(s_{h + 1}^{k}\right)}{\sqrt{P_{h}\left(s_{h + 1}^{k}\,\vert\,s_{h}^{k}, a_{h}^{k}\right) }}\right]\\
     &\leq  \left(1 + \frac{C}{H}\right)\tilde{\Delta}_{h + 1}^{k}\left(s_{h + 1}^{k}\right) + 2\beta_{kh}\left(s_{h}^{k}, a_{h}^{k}\right) + \frac{6CH^2 SL}{N_{kh}'\left(s_{h}^{k}, a_{h}^{k}\right)}\\
    &\quad+ \big\langle P_{h}\left(\cdot\,\vert\,s_{h}^{k}, a_{h}^{k}\right),\, \tilde{\Delta}_{h + 1}^{k}\left(\cdot\right)\big\rangle  - \tilde{\Delta}_{h + 1}^{k}\left(s_{h + 1}^{k}\right)\\
    &\quad+ \sqrt{\frac{4CL}{N_{kh}'\left(s_{h}^{k}, a_{h}^{k}\right)}}\left[\left(\sum_{s'\in G_{kh}}P_{h}\left(s'\,\vert\,s_{h}^{k}, a_{h}^{k}\right)\frac{\tilde{\Delta}_{h + 1}^{k}\left(s'\right)}{\sqrt{P_{h}\left(s'\,\vert\,s_{h}^{k}, a_{h}^{k}\right)}}\right) - \frac{\tilde{\Delta}_{h + 1}^{k}\left(s_{h + 1}^{k}\right)}{\sqrt{P_{h}\left(s_{h + 1}^{k}\,\vert\,s_{h}^{k}, a_{h}^{k}\right) }}\right]
\end{align*}
as required. 
\end{proof}

\begin{lemma}\label{lemma: correction term}
Let $\gamma_{kh}(s_{h}^{k}, a_{h}^{k}) \coloneqq \langle \hat{P}_{kh}\left(\cdot\,\vert\, s_{h}^{k}, a_{h}^{k}\right) - P_{h}\left(\cdot\,\vert\, s_{h}^{k}, a_{h}^{k}\right), \tilde{\Delta}_{h + 1}^{k}\rangle$. Then, with probability at least $1 - \delta'$:
\begin{align*}
    \gamma_{kh}\left(s_{h}^{k}, a_{h}^{k}\right) &\leq \frac{\tilde{\Delta}_{h + 1}^{k}\left(s_{h + 1}^{k}\right)}{H} + \frac{2HSL}{N_{kh}'\left(s_{h}^{k}, a_{h}^{k}\right)} + \frac{4H^2 SL}{N_{kh}'\left(s_{h}^{k}, a_{h}^{k}\right)} \\
    &+ \sqrt{\frac{4L}{N_{kh}'\left(s_{h}^{k}, a_{h}^{k}\right)}}\left[\left(\sum_{s'\in G_{kh}}P_{h}\left(s'\,\vert\,s_{h}^{k}, a_{h}^{k}\right)\frac{\tilde{\Delta}_{h + 1}^{k}\left(s'\right)}{\sqrt{P_{h}\left(s'\,\vert\,s_{h}^{k}, a_{h}^{k}\right)}}\right) - \frac{\tilde{\Delta}_{h + 1}^{k}\left(s_{h + 1}^{k}\right)}{\sqrt{P_{h}\left(s_{h + 1}^{k}\,\vert\,s_{h}^{k}, a_{h}^{k}\right) }}\right]
\end{align*}
where $L = \log(S^2 AH \pi^2 / 6\delta')$ and
\begin{align*}
G_{kh} &\coloneqq \{s': P_{h}\left(s'\,\vert\,s_{h}^{k}, a_{h}^{k}\right) N_{kh}'\left(s_{h}^{k}, a_{h}^{k}\right) \geq 4H^2 L\}
\end{align*}
for all $\mathcal{S}\times\mathcal{A}\times\mathcal{H}$ and $K \in \mathbb{N}_{1}$.
\end{lemma}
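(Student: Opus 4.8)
The plan is to reduce the bound to an entrywise concentration inequality on the transition estimates, and then to split the resulting sum according to the ``good set'' $G_{kh}$. Since the value gap is nonnegative under optimism, i.e. $\tilde\Delta_{h+1}^k(s') = \tilde V_{h+1}^{\pi_k}(s') - V_{h+1}^{\pi_k}(s') \ge 0$ for every $s'$ (because $\tilde V \ge V^* \ge V^{\pi_k}$), I would first pass to absolute values, $\gamma_{kh}(s_h^k,a_h^k) = \langle \hat P_{kh}-P_h, \tilde\Delta_{h+1}^k\rangle \le \langle |\hat P_{kh}-P_h|, \tilde\Delta_{h+1}^k\rangle$. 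The core tool is Bernstein's inequality applied to each fixed coordinate $\hat P_{kh}(s'\vert s,a)-P_h(s'\vert s,a)$, which is an empirical mean of $N'_{kh}(s,a)$ i.i.d. Bernoulli$(P_h(s'\vert s,a))$ indicators. A union bound over the $S^2AH$ state-state-action-step tuples, together with a union over the possible values of the observed counter (the $\pi^2/6$ factor arising from $\sum_n n^{-2}$, which is what makes the bound hold simultaneously for every episode, hence ``for all $K\in\mathbb N_1$''), yields $L = \log(S^2AH\pi^2/6\delta')$ and an event of probability at least $1-\delta'$ on which $|\hat P_{kh}(s'\vert s,a)-P_h(s'\vert s,a)| \le \sqrt{2P_h(s'\vert s,a)L/N'_{kh}(s,a)} + 2L/(3N'_{kh}(s,a))$ for all tuples.

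On this event I would substitute the bound and treat its two pieces separately. The lower-order piece $\sum_{s'} \tfrac{2L}{3N'_{kh}}\tilde\Delta_{h+1}^k(s')$ is controlled using $0\le \tilde\Delta_{h+1}^k(s')\le H$ and $|\mathcal S|=S$, giving at most $\tfrac{2HSL}{N'_{kh}}$ after loosening constants. The leading piece $\sum_{s'}\sqrt{2P_h(s'\vert s,a)L/N'_{kh}}\,\tilde\Delta_{h+1}^k(s')$ is where $G_{kh}=\{s':P_h(s'\vert s,a)N'_{kh}(s,a)\ge 4H^2L\}$ enters: I would split the sum over $s'\in G_{kh}$ and $s'\notin G_{kh}$. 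For $s'\notin G_{kh}$ the defining inequality gives $P_h(s'\vert s,a)<4H^2L/N'_{kh}$, so each summand is at most $\sqrt{2L/N'_{kh}}\cdot 2H\sqrt{L}/\sqrt{N'_{kh}}\cdot H = 2\sqrt2\,H^2L/N'_{kh}$; summing over at most $S$ states bounds this part by $4H^2SL/N'_{kh}$.

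For the remaining $s'\in G_{kh}$ I would rewrite $\sqrt{2P_hL/N'_{kh}}\,\tilde\Delta_{h+1}^k(s') = \sqrt{2L/N'_{kh}}\,P_h(s'\vert s,a)\tilde\Delta_{h+1}^k(s')/\sqrt{P_h(s'\vert s,a)} \le \sqrt{4L/N'_{kh}}\,P_h(s'\vert s,a)\tilde\Delta_{h+1}^k(s')/\sqrt{P_h(s'\vert s,a)}$, which is exactly the conditional expectation under $P_h$ of the random variable $\sqrt{4L/N'_{kh}}\,\tilde\Delta_{h+1}^k(s')/\sqrt{P_h(s'\vert s,a)}\,\mathds{1}\{s'\in G_{kh}\}$. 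The final manoeuvre is to add and subtract the realised next-state value of this variable at $s_{h+1}^k$: the difference is precisely $\bar\zeta_{h+1}^k$, a mean-zero martingale-difference term when conditioned on $s_{h+1}^k\in G_{kh}$ (as exploited in the regret decomposition), while the added-back realised term is bounded by $\tilde\Delta_{h+1}^k(s_{h+1}^k)/H$ exactly because $s_{h+1}^k\in G_{kh}$ is equivalent to $\sqrt{4L/N'_{kh}}/\sqrt{P_h(s_{h+1}^k\vert s,a)}\le 1/H$ (and the added-back term is trivially $0\le \tilde\Delta_{h+1}^k(s_{h+1}^k)/H$ otherwise). Collecting the contributions $\tfrac{2HSL}{N'_{kh}}$, $\tfrac{4H^2SL}{N'_{kh}}$, the self-normalising martingale term, and $\tilde\Delta_{h+1}^k(s_{h+1}^k)/H$ gives the claim.

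The hard part is not any single inequality but the design of $G_{kh}$: the threshold $4H^2L$ must be chosen so that simultaneously (i) the contribution of the ``rare'' next states $s'\notin G_{kh}$ collapses into a benign $1/N'_{kh}$ lower-order term, and (ii) on $G_{kh}$ the realised self-normalised term at the observed $s_{h+1}^k$ is dominated by $\tilde\Delta_{h+1}^k(s_{h+1}^k)/H$, so that it can be absorbed into the recursive $(1+C/H)$ factor of Lemma \ref{lemma: recursive decomposition} while leaving behind a clean martingale difference for Azuma--Hoeffding. Keeping the indicator bookkeeping on $G_{kh}$ consistent between the expectation term and the subtracted realised term, so that the martingale property actually holds, is the one place that needs genuine care.
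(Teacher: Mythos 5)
Your proposal follows essentially the same route as the paper's own proof (which in turn follows the UCBVI analysis of Azar et al.): entrywise Bernstein concentration with a union bound giving $L=\log(S^2AH\pi^2/6\delta')$, a lower-order $2HSL/N'_{kh}(s_h^k,a_h^k)$ piece, a split over $G_{kh}$ whose complement collapses to $4H^2SL/N'_{kh}(s_h^k,a_h^k)$ via the defining threshold, and the add-and-subtract of the realised self-normalised value on $G_{kh}$, producing the martingale term plus $\tilde{\Delta}_{h+1}^{k}(s_{h+1}^{k})/H$. It is correct; if anything, your explicit indicator bookkeeping $\mathds{1}\{s_{h+1}^{k}\in G_{kh}\}$ on the subtracted realised term (so that the mean-zero property and the bound by $\tilde{\Delta}_{h+1}^{k}(s_{h+1}^{k})/H$ both hold even when $s_{h+1}^{k}\notin G_{kh}$) is more careful than the paper's write-up, which tacitly assumes $s_{h+1}^{k}\in G_{kh}$ at that step.
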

\begin{proof}
For completeness, we present proof of this claim and note that the ideas found here were first introduced by \citet{UCBVI}.\\

We upper bound the so-called "correction term", $C\langle \hat{P}_{kh}\left(\cdot\,\vert\, s_{h}^{k}, a_{h}^{k}\right) - P_{h}\left(\cdot\,\vert\, s_{h}^{k}, a_{h}^{k}\right), \tilde{\Delta}_{h + 1}^{k}\rangle$. Following \citet{UCBVI} and applying Bernstein's inequality to bound the difference between the estimated and actual transitions gives us, with probability $1 - \delta$:
\begin{align*}
    &\gamma_{kh}\left(s_{h}^{k}, a_{h}^{k}\right) =\Big\langle \left(\hat{P}_{kh} - P_{h}\right)\left(\cdot\,\vert\, s_{h}^{k}, a_{h}^{k}\right), \tilde{\Delta}_{h + 1}^{k}\Big\rangle \nonumber\\
    &\leq 2\sum_{s'}\left(\frac{L}{N_{kh}'\left(s_{h}^{k}, a_{h}^{k}\right)} + \sqrt{\frac{P_{h}\left(s'\,\vert\,s_{h}^{k}, a_{h}^{k}\right)L}{N_{kh}'\left(s_{h}^{k}, a_{h}^{k}\right)}} \right)\tilde{\Delta}_{h + 1}^{k}\left(s'\right)\nonumber \tag{Bernstein's Inequality}\\
    &\leq 2 \left(\frac{HSL}{N_{kh}'\left(s_{h}^{k}, a_{h}^{k}\right)} +  \sum_{s'}\sqrt{\frac{P_{h}\left(s'\,\vert\,s_{h}^{k}, a_{h}^{k}\right)L}{N_{kh}'\left(s_{h}^{k}, a_{h}^{k}\right)}}\tilde{\Delta}_{h + 1}^{k}\left(s'\right)\right)\nonumber\\
    &= 2\left(\frac{HSL}{N_{kh}'\left(s_{h}^{k}, a_{h}^{k}\right)} + \sum_{s'\not\in G_{kh}}\sqrt{\frac{P_{h}\left(s'\,\vert\,s_{h}^{k}, a_{h}^{k}\right)L}{N_{kh}'\left(s_{h}^{k}, a_{h}^{k}\right)}}\tilde{\Delta}_{h + 1}^{k}\left(s'\right) + \sum_{s'\in G_{kh}}\sqrt{\frac{P_{h}\left(s'\,\vert\,s_{h}^{k}, a_{h}^{k}\right)L}{N_{kh}'\left(s_{h}^{k}, a_{h}^{k}\right)}}\tilde{\Delta}_{h + 1}^{k}\left(s'\right) \right)
\end{align*}

By definition, $P_{h}(s'\vert s, a) < 4H^2 L/N_{kh}'(s, a)$ whenever $s'\not\in G_{kh}$, which follows simply from rearranging terms in the definition of $G_{kh}$. Therefore, 
\begin{align*}
    \sum_{s'\not\in G_{kh}}\sqrt{\frac{P_{h}\left(s'\,\vert\,s_{h}^{k}, a_{h}^{k}\right)L}{N_{kh}'\left(s_{h}^{k}, a_{h}^{k}\right)}}\tilde{\Delta}_{h + 1}^{k}\left(s'\right) &\leq \sum_{s'\not\in G_{kh}}\frac{2H L}{N_{kh}'\left(s_{h}^{k}, a_{h}^{k}\right)}\tilde{\Delta}_{h + 1}^{k}\left(s'\right) \leq \frac{2H^2 S L}{N_{kh}'\left(s_{h}^{k}, a_{h}^{k}\right)}
\end{align*}
Now, we focus on the $s'\in G_{kh}$. 
\begin{align*}
    &\sum_{s'\in G_{kh}}\sqrt{\frac{P_{h}\left(s'\,\vert\,s_{h}^{k}, a_{h}^{k}\right)L}{N_{kh}'\left(s_{h}^{k}, a_{h}^{k}\right)}}\tilde{\Delta}_{h + 1}^{k}\left(s'\right) \\
    &= \sqrt{\frac{L}{P_{h}\left(s_{h + 1}^{k}\,\vert\,s_{h}^{k}, a_{h}^{k}\right) N_{kh}'\left(s_{h}^{k}, a_{h}^{k}\right)}}\tilde{\Delta}_{h + 1}^{k}\left(s_{h + 1}^{k}\right) - \sqrt{\frac{L}{P_{h}\left(s_{h + 1}^{k}\,\vert\,s_{h}^{k}, a_{h}^{k}\right) N_{kh}'\left(s_{h}^{k}, a_{h}^{k}\right)}}\tilde{\Delta}_{h + 1}^{k}\left(s_{h + 1}^{k}\right) \\
    &\quad + \sum_{s'\in G_{kh}}P_{h}\left(s'\,\vert\,s_{h}^{k}, a_{h}^{k}\right)\sqrt{\frac{L}{P_{h}\left(s'\,\vert\,s_{h}^{k}, a_{h}^{k}\right)N_{kh}'\left(s_{h}^{k}, a_{h}^{k}\right)}} \\
    &\leq \frac{\tilde{\Delta}_{h + 1}^{k}\left(s_{h + 1}^{k}\right)}{2H} + \sqrt{\frac{L}{N_{kh}'\left(s_{h}^{k}, a_{h}^{k}\right)}}\left[\left(\sum_{s'\in G_{kh}}P_{h}\left(s'\,\vert\,s_{h}^{k}, a_{h}^{k}\right)\frac{\tilde{\Delta}_{h + 1}^{k}\left(s'\right)}{\sqrt{P_{h}\left(s'\,\vert\,s_{h}^{k}, a_{h}^{k}\right)}}\right) - \frac{\tilde{\Delta}_{h + 1}^{k}\left(s_{h + 1}^{k}\right)}{\sqrt{P_{h}\left(s_{h + 1}^{k}\,\vert\,s_{h}^{k}, a_{h}^{k}\right) }}\right]
\end{align*}
where the inequality follows from the fact that $s'\in G_{kh}$, implying that $P_{h}\left(s'\,\vert\,s_{h}^{k}, a_{h}^{k}\right) N_{kh}'\left(s_{h}^{k}, a_{h}^{k}\right) \geq 4H^2 L$. Substituting both of the above into the initial upper bound on $\gamma_{kh}\left(s_{h}^{k}, a_{h}^{k}\right)$ gives: 
\begin{align*}
    \gamma_{kh}\left(s_{h}^{k}, a_{h}^{k}\right) &\leq \frac{\tilde{\Delta}_{h + 1}^{k}\left(s_{h + 1}^{k}\right)}{H} + \frac{2HSL}{N_{kh}'}\left(s_{h}^{k}, a_{h}^{k}\right) + \frac{4H^2 SL}{N_{kh}'}\left(s_{h}^{k}, a_{h}^{k}\right) \\
    &+ \sqrt{\frac{4L}{N_{kh}'\left(s_{h}^{k}, a_{h}^{k}\right)}}\left[\left(\sum_{s'\in G_{kh}}P_{h}\left(s'\,\vert\,s_{h}^{k}, a_{h}^{k}\right)\frac{\tilde{\Delta}_{h + 1}^{k}\left(s'\right)}{\sqrt{P_{h}\left(s'\,\vert\,s_{h}^{k}, a_{h}^{k}\right)}}\right) - \frac{\tilde{\Delta}_{h + 1}^{k}\left(s_{h + 1}^{k}\right)}{\sqrt{P_{h}\left(s_{h + 1}^{k}\,\vert\,s_{h}^{k}, a_{h}^{k}\right) }}\right]
\end{align*}
completing the proof.
\end{proof}

\subsection{Missing Theoretical Results for Delayed Rewards}\label{sec: missing proof for delayed rewards}

In this section, we describe how to use active or lazy updating in the setting where only the rewards return in delay. We assume the rewards are stochastic and their expected values are unknown.\\

In the setting of delayed rewards, the agent returns the state-action pairs $\{s_h^k, a_h^k\}_{h=1}^H$ at the end of episode $k$, immediately. However, the rewards $\{r_h^k\}_{h=1}^H$ return with a random delay $\tau_k$. Since it is only the rewards that return in delay, we can estimate the transitions at the start of each episode, as usual. Thus, we apply active or lazy updating to the estimation of the expected reward function only. \\

For active updating, this amounts to estimating the expected reward function as soon as new feedback arrives: 
\begin{equation*}
    \hat{r}_{kh}\left(s, a\right) = \frac{1}{N_{kh}'\left(s, a\right)} \sum_{i = 1}^{k - 1} r_{h}^{i}\mathds{1}\{s_{h}^{i} = s, a_{h}^{i} = a, i + \tau_{i} < k\}
\end{equation*}

For lazy updating, this amounts to waiting until the observed number of rewards for a state-action-step triple have doubled before starting a new epoch. When estimating the expected reward function for $j^\text{th}$ epoch, the base algorithm will use all the available rewards: 
\begin{equation*}
    \hat{r}_{k_{j} h}\left(s, a\right) = \frac{1}{N_{k_{j} h}'\left(s, a\right)} \sum_{i = 1}^{k_{j} - 1} r_{h}^{i}\mathds{1}\{s_{h}^{i} = s, a_{h}^{i} = a, i + \tau_{i} < k\}
\end{equation*}
Using Hoeffding's inequality, one can construct confidence sets around the above estimators and derive another estimator that is optimistic, with high probability. We derive the width of the confidence set in the proof below. 
\begin{theorem}
Let $\mathfrak{R}_{K}^{P}$ denote the regret of UCRL2 from estimating the transition densities under immediate feedback. Then, with probability $1 - \delta$, the regret of UCRL2 under delayed reward is: 
\begin{equation*}
    \mathfrak{R}_{K} \lesssim \mathfrak{R}_{K}^{P} + HSA\psi_{K}^{\tau} 
\end{equation*}
for active updating. 
\end{theorem}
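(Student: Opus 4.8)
The plan is to follow the active-updating argument of Theorem \ref{theorem: active updating}, but to exploit the asymmetry that only the reward estimator is contaminated by the delays while the transition estimator is fed immediate data. First I would set up the reward confidence set: since the rewards lie in $[0,1]$, Hoeffding's inequality combined with a union bound over all $(s,a,h)$ triples and a time-uniform peeling (as in Lemma \ref{lemma: delay failure event}) yields a bonus of the form $\beta_{kh}^{R}(s,a) \approx \sqrt{L/N_{kh}'(s,a)}$ that upper bounds $\abs{\hat{r}_{kh}(s,a) - r_h(s,a)}$ with high probability, where $N_{kh}'$ is the delayed reward count. The decisive point is that the numerator of this bonus is a universal constant rather than a quantity growing with $H$ or $S$, precisely because the reward is a bounded scalar; this is the source of the improved $HSA\,\mathbb{E}[\tau]$ penalty relative to the delayed-transition case, where the analogous numerator $B_1 \approx H\sqrt{S}$ inflates the delay term.

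Next I would adapt the recursive decomposition of Lemma \ref{lemma: recursive decomposition} with $C=0$, since UCRL2 is model-optimistic. Writing the optimistic value function as $\hat{r}_{kh} + \beta_{kh}^{R,+} + \langle \hat{P}_{kh}, \tilde{V}_{h+1}^{\pi_k}\rangle + \beta_{kh}^{P,+}$, the per-step error splits into a reward part and a transition part. By the optimism just established, $(\hat{r}_{kh} - r_h) + \beta_{kh}^{R} \leq 2\beta_{kh}^{R}$, while the transition part is treated exactly as under immediate feedback, because $\hat{P}_{kh}$ and its bonus use the full visitation count $N_{kh}$. Unrolling over $h = 1,\dots,H$, summing over episodes, and bounding the transition-sampling martingale via Azuma--Hoeffding gives
\begin{align*}
    \mathfrak{R}_{K} \lesssim \underbrace{\sum_{k,h}\beta_{kh}^{P}\left(s_h^k,a_h^k\right) + (\text{martingale})}_{=\;\mathfrak{R}_{K}^{P}} \;+\; 2\sum_{k,h}\beta_{kh}^{R}\left(s_h^k,a_h^k\right),
\end{align*}
where the first group is exactly the immediate-feedback transition regret $\mathfrak{R}_{K}^{P}$, since nothing in it references the delayed reward counters.

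It then remains to bound the reward-bonus sum $\sum_{k,h}\sqrt{L/N_{kh}'(s_h^k,a_h^k)}$, for which I would invoke Lemma \ref{lemma: summation bound} with $p=\tfrac12$ to obtain $\sum_{k,h} 1/\sqrt{N_{kh}'} \leq 4\sqrt{HSAT} + 3HSA\psi_K^\tau$. Because the reward bonus multiplies this sum only by a constant (up to $\sqrt{L}$), the delay-dependent contribution is $O(HSA\psi_K^\tau)$ and the delay-free part is $O(\sqrt{HSAT})$, which is dominated by $\mathfrak{R}_K^P = \widetilde{O}(H^{3/2}S\sqrt{AT})$ and hence absorbed by $\lesssim$. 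A union bound over the reward-confidence, transition-confidence, martingale, and delay failure events (Lemma \ref{lemma: delay failure event}) with rescaled $\delta'$ gives $\mathfrak{R}_K \lesssim \mathfrak{R}_K^P + HSA\psi_K^\tau$, and substituting $\psi_K^\tau \approx \mathbb{E}[\tau]$ completes the argument.

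The main obstacle I anticipate is organising the decomposition so that the reward and transition uncertainties land cleanly on the two distinct counters $N_{kh}'$ and $N_{kh}$ while keeping the combined value function optimistic; once that split is in place the transition side collapses verbatim to the immediate-feedback analysis and the reward side is a one-line application of Lemma \ref{lemma: summation bound}. A secondary check is that no spurious factor of $H$ enters the reward-bonus sum during the value-iteration unrolling: each step's scalar reward error contributes additively exactly once, so the total really is $\sum_{k,h}\beta_{kh}^R$ with no extra horizon dependence, which is what pins the penalty at $HSA\,\mathbb{E}[\tau]$ rather than the larger $H^2 S^{3/2}A\,\mathbb{E}[\tau]$ incurred for delayed transitions.
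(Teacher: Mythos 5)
Your proposal is correct and follows essentially the same route as the paper's proof: a Hoeffding-based reward bonus $\sqrt{L/N_{kh}'}$ on the delayed counter, a decomposition in which the transition error (with its $H$-scaled bonus on the full counter $N_{kh}$) reduces verbatim to the immediate-feedback UCRL2 analysis, an Azuma--Hoeffding martingale term, and a final application of Lemma~\ref{lemma: summation bound} with $p=\tfrac12$ to the reward-bonus sum, yielding the additive $HSA\psi_K^\tau$ penalty. The only discrepancy is immaterial: you take $C=0$ for UCRL2, consistent with the paper's main text, whereas the appendix proof states ``$C=1$'' (evidently a typo, since its displayed inequalities contain no $C$-dependent lower-order terms).
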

\begin{proof}
First, since the rewards are stochastic and their expected values are unknown, we must derive an estimator. Naturally, we use only the observed information to compute the expected value, as it is an unbiased estimator: 
\begin{equation*}
    \hat{r}_{kh}\left(s, a\right) = \frac{1}{N_{kh}'\left(s, a\right)} \sum_{i = 1}^{k - 1} r_{h}^{i}\mathds{1}\{s_{h}^{i} = s, a_{h}^{i} = a, i + \tau_{i} < k\}
\end{equation*}

Now, assume that the rewards are bounded in $[0, 1]$. Using Hoeffding's inequality, we can define an additional failure event to account for the fact that we are estimating the expected reward function: 

\begin{equation*}
    F_{k}^{r} = \left\{\exists\, s, a, h: \lvert \hat{r}_{kh}\left(s, a\right) - r_{h}\left(s, a\right)\rvert \geq \sqrt{\frac{6\log\left(2SAT\pi/6\delta'\right)}{N_{kh}'\left(s, a\right)}} := \epsilon_{kh}^{r}\left(s, a\right) \right\}
\end{equation*}

which holds across all episodes with probability $1 - \delta'$. Recall, we have a failure event for the transitions that holds with probability $1 - \delta'$ too. Thus, we get the following optimistic estimator of the expected reward function: 
\begin{equation*}
    \tilde{r}_{kh}\left(s, a\right) = \min\left\{1, \hat{r}_{kh}\left(s, a\right) +  \sqrt{\frac{6\log\left(2SAT\pi/6\delta'\right)}{N_{kh}'\left(s, a\right)}}\right\}
\end{equation*}
which upper bounds the true expected reward function with probability $1 - \delta'$ across all episodes. As in the immediate feedback setting, the failure event for the transition densities is: 
\begin{align*}
     F_{k}^{p} = \left\{\exists\, s, a, h: \norm{\hat{P}_{kh}\left(\cdot \vert s, a\right) - P_{h}\left(\cdot \vert s, a\right)}_{1} \geq \sqrt{\frac{6S \log\left(AT\pi/6\delta'\right)}{N_{kh}\left(s, a\right)}}:= \epsilon_{kh}^{p}\left(s, a\right) \right\}   
\end{align*}
where
\begin{equation*}
    \tilde{P}_{kh}\left(\cdot\vert s, a\right) \in \left\{Q \in \Delta: \norm{Q - P_{h}\left(\cdot\vert s, a\right)} \leq \epsilon_{kh}^{p}\left(s, a\right) \right\}
\end{equation*}

By optimism, and due to UCRL2 having $C = 1$: with probability $1 - 2\delta'$: 
\begin{align}
    \mathfrak{R}_{K} &=\sum_{k = 1}^{K}\Delta_{1}^{k}= \sum_{k = 1}^{K}V^{*}_{1}\left(s_{1}^{k}\right) - V^{\pi_{k}}_{1}\left(s_{1}^{k}\right) \nonumber\\
    &\leq \sum_{k = 1}^{K}\tilde{\Delta}_{1}^{k}= \sum_{k = 1}^{K}\tilde{V}^{\pi_{k}}_{1}\left(s_{1}^{k}\right) - V^{\pi_{k}}_{1}\left(s_{1}^{k}\right)\label{eqn: delayed reward regret decomp}\\
    &\leq \sum_{k = 1}^{K} \sum_{h = 1}^{H} 2H\epsilon_{kh}^{p}\left(s_{h}^{k}, a_{h}^{k}\right) + 2\epsilon_{kh}^{r}\left(s_{h}^{k}, a_{h}^{k}\right) + \zeta_{h}^{k}\left(s_{h}^{k}, a_{h}^{k}\right)\nonumber\\
    &\leq 2H\sqrt{T\log\left(\frac{K\pi}{6\delta'}\right)} + \sum_{k = 1}^{K} \sum_{h = 1}^{H} 2H\epsilon_{kh}^{p}\left(s_{h}^{k}, a_{h}^{k}\right) + \sum_{k = 1}^{K} \sum_{h = 1}^{H} 2\epsilon_{kh}^{r}\left(s_{h}^{k}, a_{h}^{k}\right)\nonumber\\
    &\leq 2H\sqrt{T\log\left(\frac{K\pi}{6\delta'}\right)} + 2H\sqrt{6S\log\left(AT\pi/6\delta'\right)}\sum_{k = 1}^{K}\sum_{h = 1}^{H}\frac{1}{\sqrt{N_{kh}\left(s_h^k, a_h^k\right)}}+ \sum_{k = 1}^{K} \sum_{h = 1}^{H} 2\epsilon_{kh}^{r}\left(s_{h}^{k}, a_{h}^{k}\right)\nonumber\\
    &\leq  2H\sqrt{T\log\left(\frac{K\pi}{6\delta'}\right)} + 4H\sqrt{6S HSAT\log\left(AT\pi/6\delta'\right)} + \sum_{k = 1}^{K} \sum_{h = 1}^{H} 2\epsilon_{kh}^{r}\left(s_{h}^{k}, a_{h}^{k}\right)\nonumber\\
    &\leq 2H\sqrt{T\log\left(\frac{K\pi}{6\delta'}\right)} + 10H^{3/2}S\sqrt{AT\log\left(AT\pi/6\delta'\right)} + \sum_{k = 1}^{K} \sum_{h = 1}^{H} 2\epsilon_{kh}^{r}\left(s_{h}^{k}, a_{h}^{k}\right)\nonumber\\
    &\leq \mathfrak{R}_{K}^{P} + \sum_{k = 1}^{K} \sum_{h = 1}^{H} 2\epsilon_{kh}^{r}\left(s_{h}^{k}, a_{h}^{k}\right)\label{eqn: delayed-reward active regret}
\end{align}

The penultimate inequality follows from Lemma \ref{lemma: p-series}. Further, 
$$\mathfrak{R}_{K}^{P} = 2H\sqrt{T\log\left(\frac{K\pi}{6\delta'}\right)} + 10H^{3/2}S\sqrt{AT\log\left(AT\pi/6\delta'\right)}$$
is the regret of the base algorithm (UCRL2) in an immediate feedback environment with know reward functions. Now, to prove the statements of the corollary, we must bound the summation of the estimation error for the rewards. Doing so is just a matter of applying Lemma \ref{lemma: summation bound}: 
\begin{align*}
    \sum_{k = 1}^{K} \sum_{h = 1}^{H} 2\epsilon_{kh}^{r}\left(s_{h}^{k}, a_{h}^{k}\right) 
    &= 2\sum_{k = 1}^{K} \sum_{h = 1}^{H} \sqrt{\frac{6\log\left(2SAT\pi/6\delta'\right)}{N_{kh}'\left(s, a\right)}} \\
    &\leq 8\sqrt{6HSAT\log\left(SAT\pi/6\delta'\right)} + 6HSA\psi_{K}^{\tau}\sqrt{6\log\left(SAT\pi/6\delta'\right)}
\end{align*}
Substituting the above into Equation \eqref{eqn: delayed-reward active regret} and omitting poly-logarithmic factors gives the stated result. 
\end{proof}

\section{Additional Experimental Results}\label{sec: additional experiments}
Here, we present additional experimental results for the chain environments with $H = S \in \{5, 10, 20\}$ and $\mathbb{E}[\tau] \in \{100, 300, 500\}$. In all combinations of chain length and expected delay, our updating procedures give better empirical performance, especially for the delay distributions with higher variances. For all expected delays, active updating gives the best performance. However, our experiments indicate that lazy updating with $\alpha = 10$ or $100$ is comparable, as one would expect based on the intuition that it is an approximation to active updating that converges in the limit as $\alpha \rightarrow\infty$.  
\subsection{Chain Environment with $H = S = 5$}
\begin{figure}[h!]
    \centering
    \includegraphics[width = \textwidth]{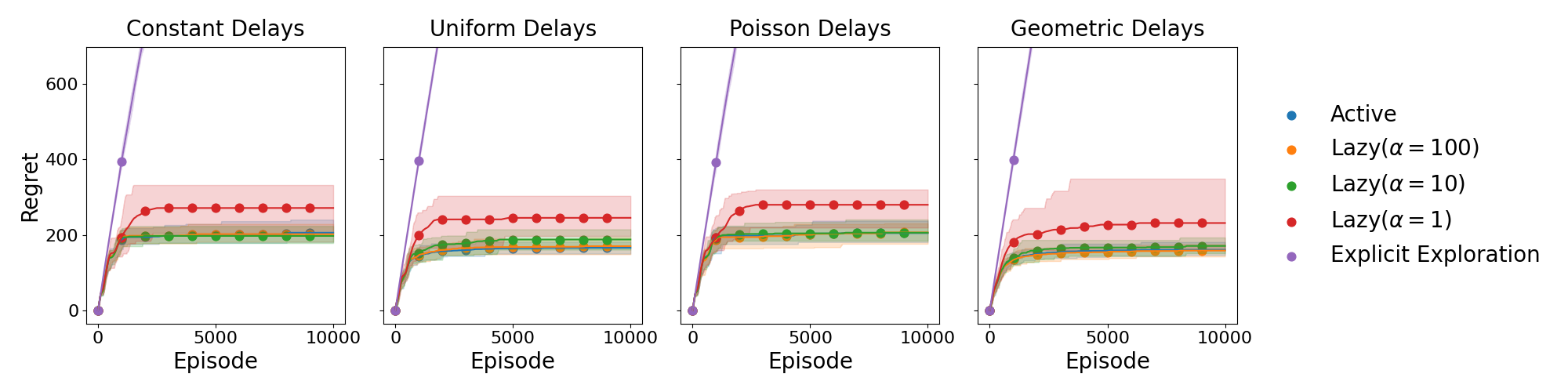}
    \caption{Cumulative Regret $\left(S = 5, \mathbb{E}[\tau] = 100\right)$.}
\end{figure}

\begin{figure}[h!]
    \centering
    \includegraphics[width = \textwidth]{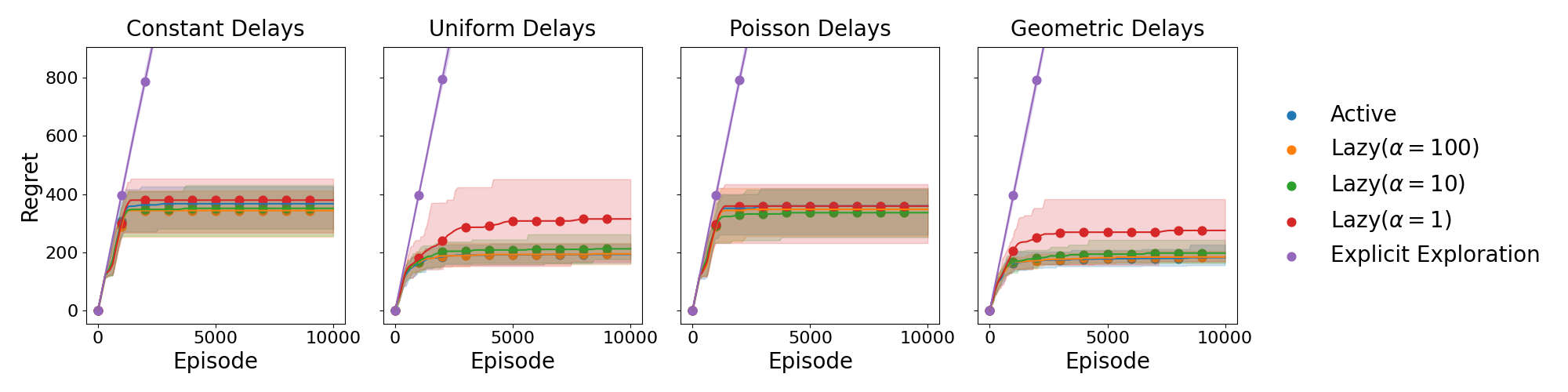}
    \caption{Cumulative Regret $\left(S = 5, \mathbb{E}[\tau] = 300\right)$.}
\end{figure}

\begin{figure}[h!]
    \centering
    \includegraphics[width = \textwidth]{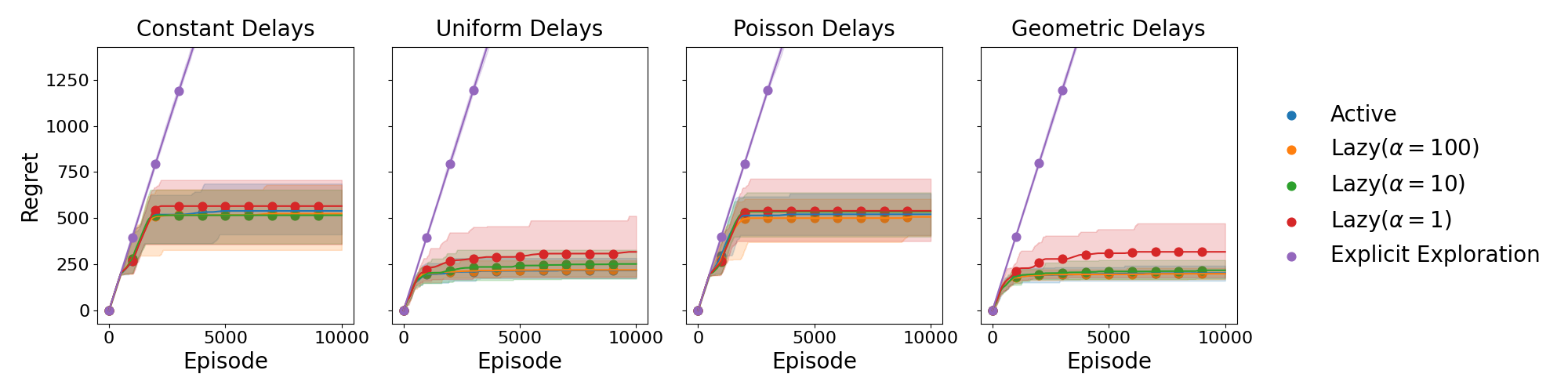}
    \caption{Cumulative Regret $\left(S = 5, \mathbb{E}[\tau] = 500\right)$.}
\end{figure}

\begin{figure}[h!]
    \centering
    \includegraphics[width = \textwidth]{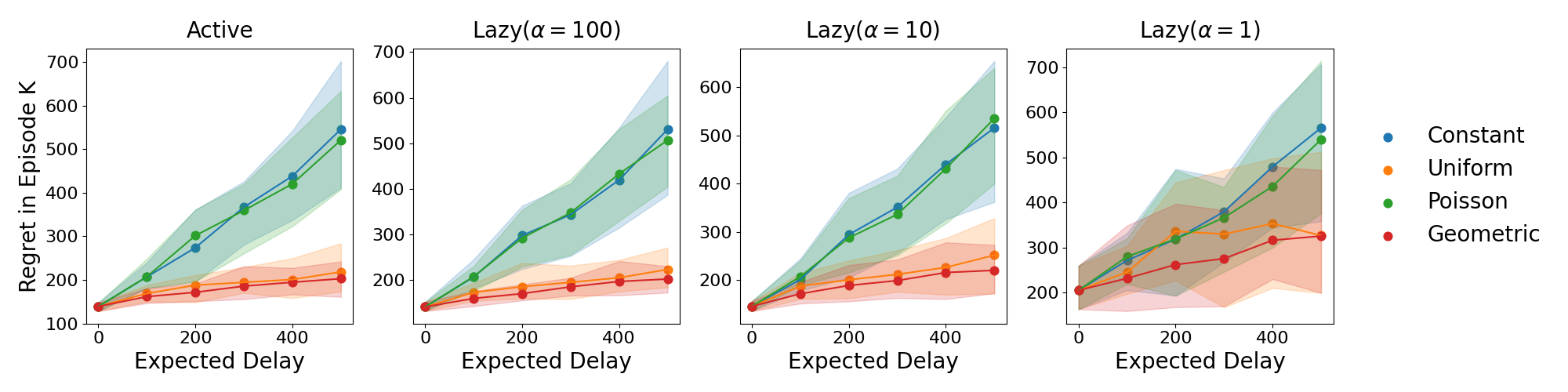}
    \caption{Delay Dependence $\left(S = 5\right)$}
\end{figure}

\newpage
\subsection{Chain Environment with $H = S = 10$}
\begin{figure*}[h!]
    \centering
    \includegraphics[width = \textwidth]{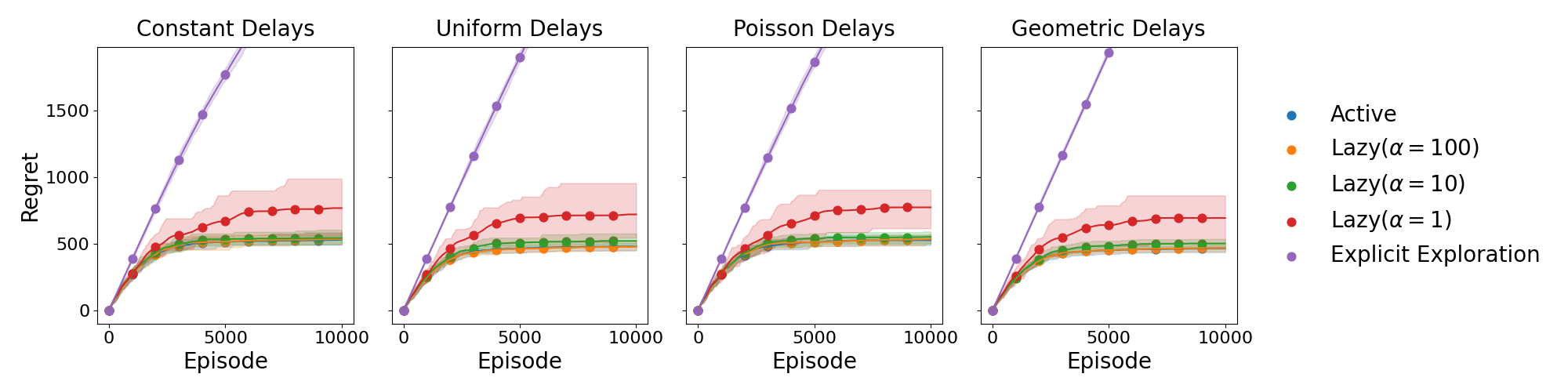}
    \caption{Cumulative Regret $\left(S = 10, \mathbb{E}[\tau] = 100\right)$.}
\end{figure*}

\begin{figure*}[h!]
    \centering
    \includegraphics[width = \textwidth]{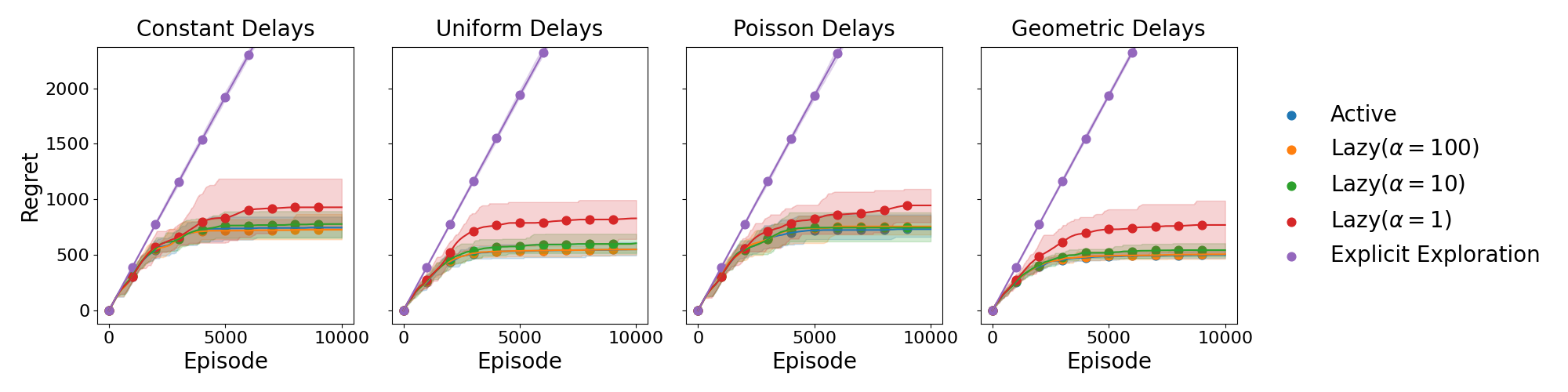}
    \caption{Cumulative Regret $\left(S = 10, \mathbb{E}[\tau] = 300\right)$.}
\end{figure*}

\begin{figure*}[h!]
    \centering
    \includegraphics[width = \textwidth]{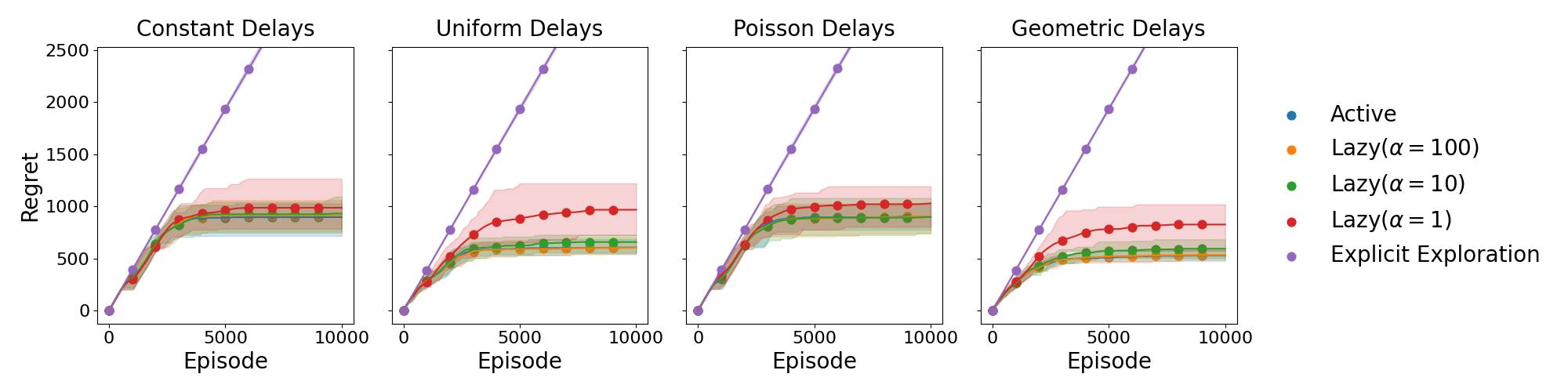}
    \caption{Cumulative Regret $\left(S = 10, \mathbb{E}[\tau] = 500\right)$.}
\end{figure*}

\begin{figure*}[h!]
    \centering
    \includegraphics[width = \textwidth]{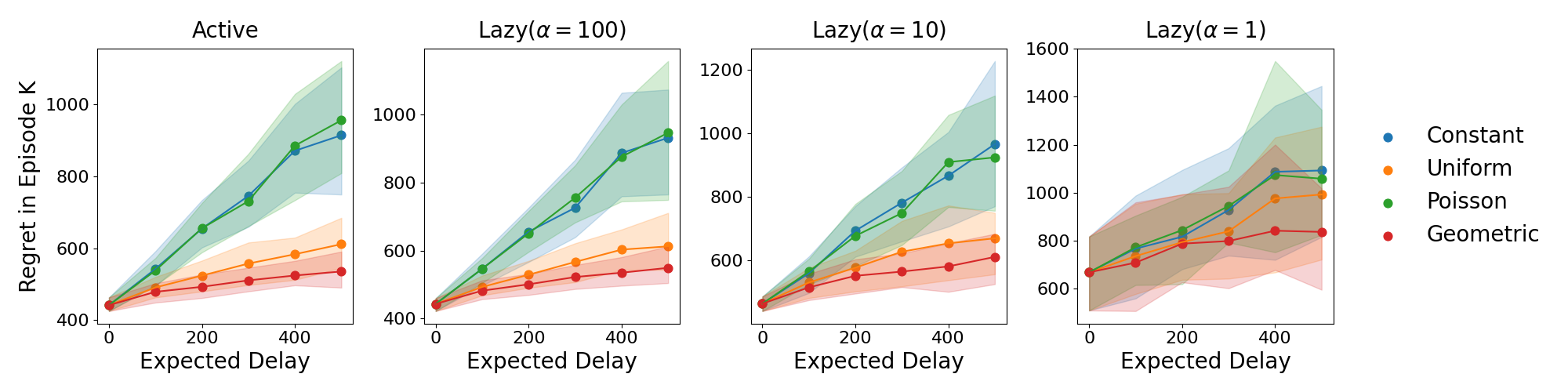}
    \caption{Delay Dependence $\left(S = 10\right)$}
\end{figure*}

\newpage
\subsection{Chain Environment with $H = S = 20$}
\begin{figure*}[h!]
    \centering
    \includegraphics[width = \textwidth]{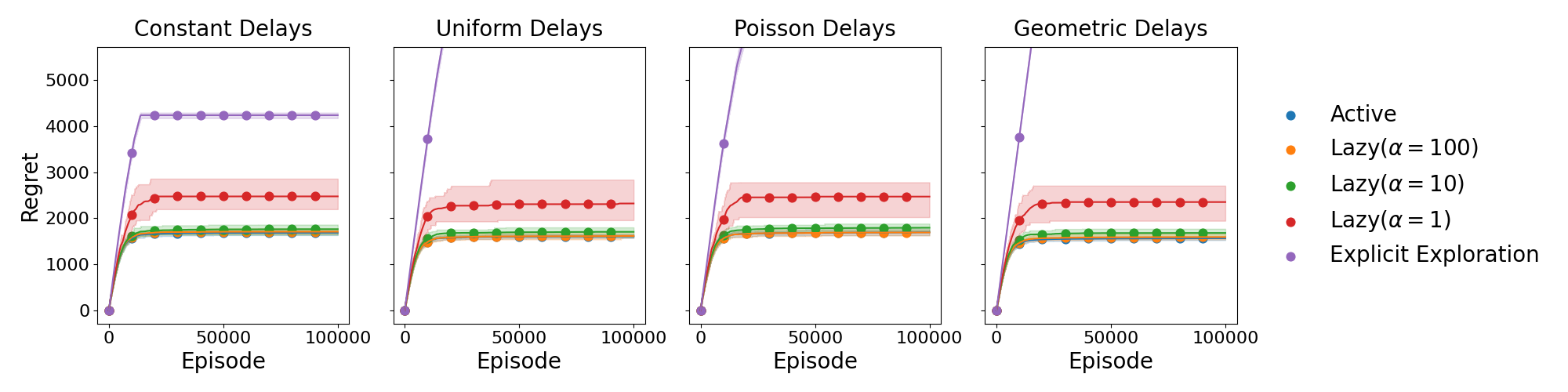}
    \caption{Cumulative Regret $\left(S = 20, \mathbb{E}[\tau] = 100\right)$.}
\end{figure*}

\begin{figure*}[h!]
    \centering
    \includegraphics[width = \textwidth]{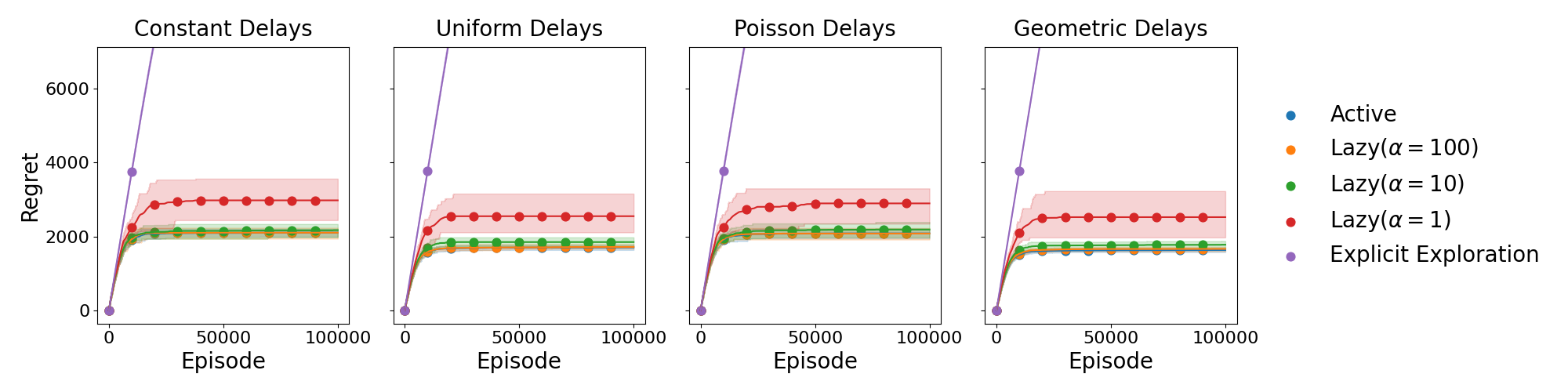}
    \caption{Cumulative Regret $\left(S = 20, \mathbb{E}[\tau] = 300\right)$.}
\end{figure*}

\begin{figure*}[h!]
    \centering
    \includegraphics[width = \textwidth]{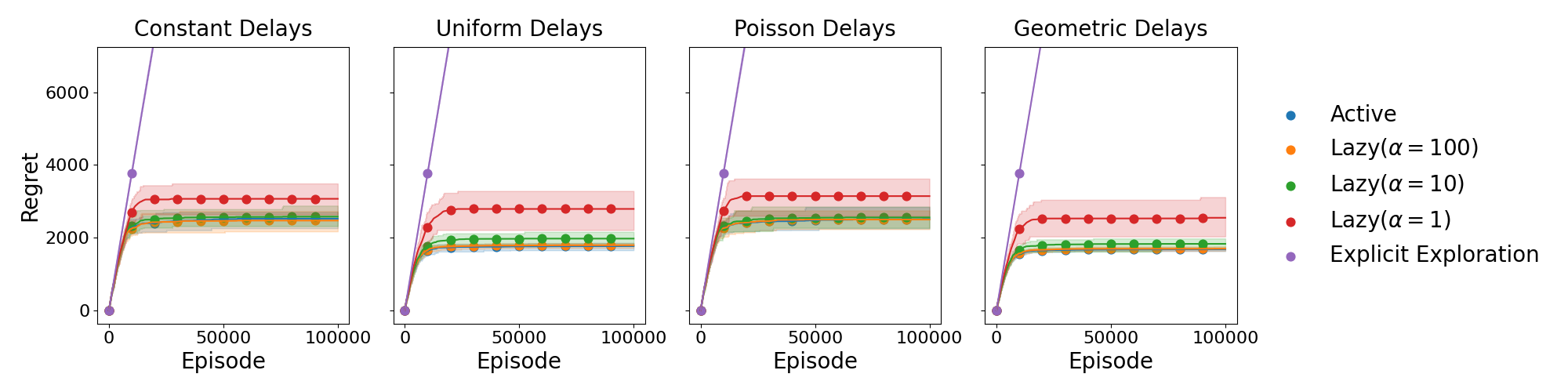}
    \caption{Cumulative Regret $\left(S = 20, \mathbb{E}[\tau] = 500\right)$.}
\end{figure*}

\begin{figure*}[h!]
    \centering
    \includegraphics[width = \textwidth]{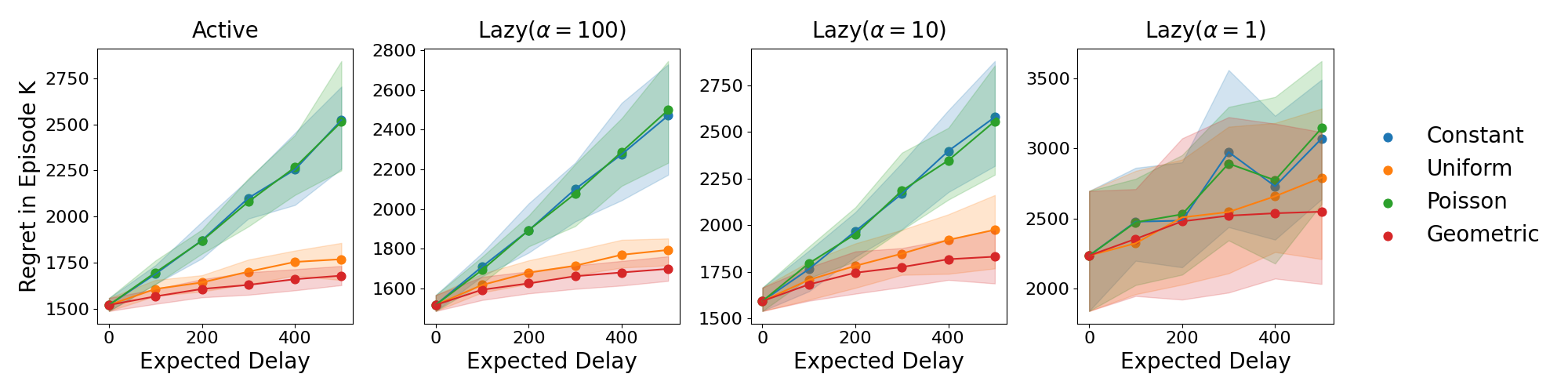}
    \caption{Delay Dependence $\left(S = 20\right)$}
\end{figure*}

\newpage
\subsection{Chain Environment with $H = S = 30$}

\begin{figure*}[h!]
    \centering
    \includegraphics[width = \textwidth]{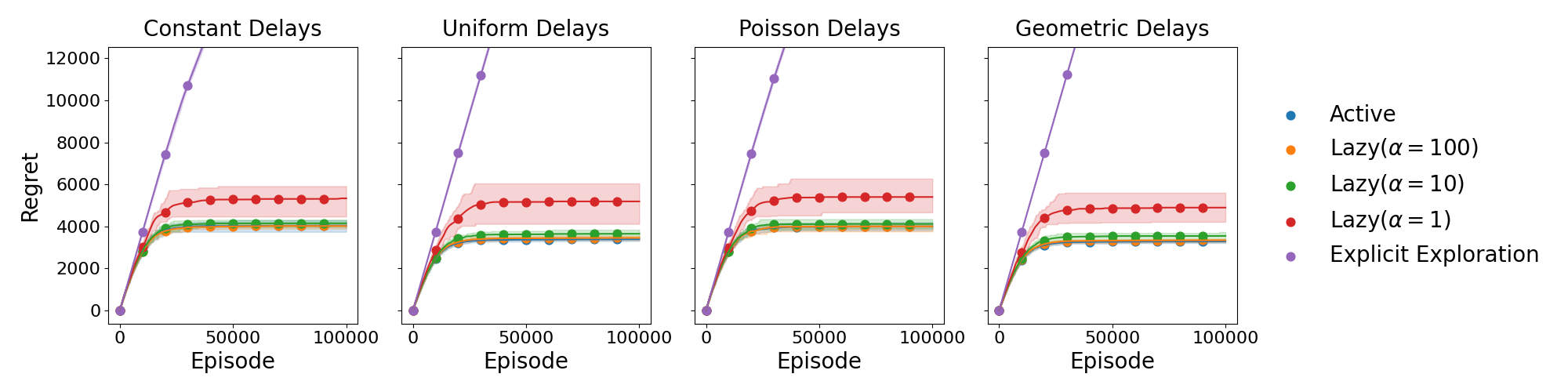}
    \caption{Cumulative Regret $\left(S = 30, \mathbb{E}[\tau] = 300\right)$.}
\end{figure*}

\begin{figure*}[h!]
    \centering
    \includegraphics[width = \textwidth]{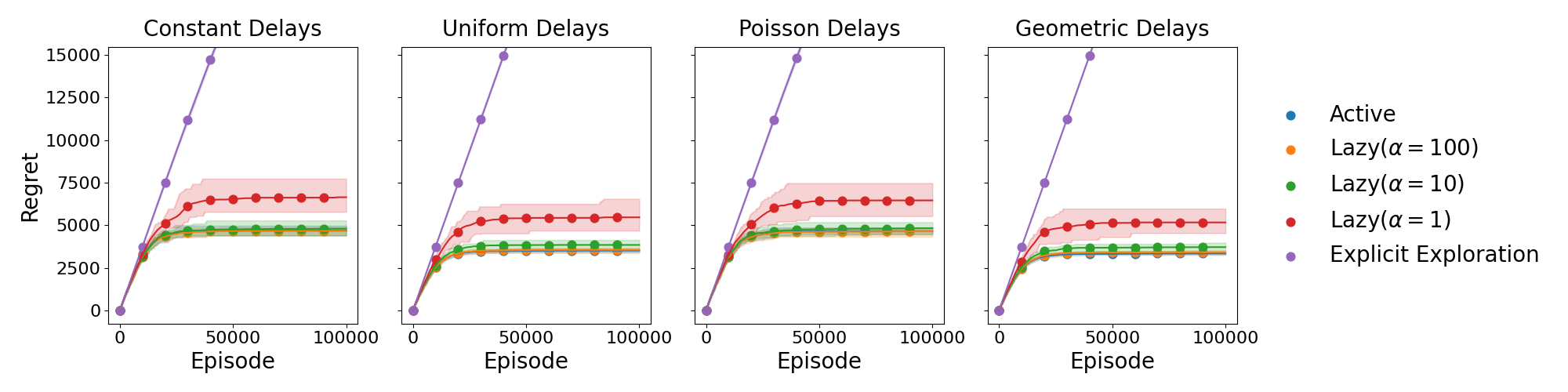}
    \caption{Cumulative Regret $\left(S = 30, \mathbb{E}[\tau] = 500\right)$.}
\end{figure*}
\end{appendix}
\end{document}